\theoremstyle{plain}
\newtheorem{theorem}{Theorem}[section]
\newtheorem{lemma}[theorem]{Lemma}
\theoremstyle{definition}
\newtheorem{assumption}[theorem]{Assumption}
\theoremstyle{remark}
\newtheorem{remark}[theorem]{Remark}
\newcommand{\la}{\langle}
\newcommand{\ra}{\rangle}
\definecolor{LightCyan}{rgb}{0.8, 0.9, 1}
\definecolor{LightGray}{gray}{0.9}
\def \algname {\text{algorithm}}
\definecolor{LightCyan}{rgb}{0.8, 0.9, 1}
\newcommand{\alglinelabel}{%
  \addtocounter{ALC@line}{-1}
  \refstepcounter{ALC@line}
  \label
}
\icmltitlerunning{Nearly Minimax Optimal Regret for Learning Linear Mixture Stochastic Shortest Path}
\begin{document}

\twocolumn[
\icmltitle{Nearly Minimax Optimal Regret for Learning Linear Mixture \\ Stochastic Shortest Path}



\icmlsetsymbol{equal}{*}

\begin{icmlauthorlist}
\icmlauthor{Qiwei Di}{yyy}
\icmlauthor{Jiafan He}{yyy}
\icmlauthor{Dongruo Zhou}{yyy}
\icmlauthor{Quanquan Gu}{yyy}
\end{icmlauthorlist}

\icmlaffiliation{yyy}{Department of Computer Science, University of California, Los Angeles, CA 90095, USA}

\icmlcorrespondingauthor{Quanquan Gu}{qgu@cs.ucla.edu}

\icmlkeywords{Reinforcement Learning, Markov Decision Process, Stochastic Shortest Path, Linear Function Approximation}

\vskip 0.3in
]



\printAffiliationsAndNotice{}  

\begin{abstract}

We study the Stochastic Shortest Path (SSP) problem with a linear mixture transition kernel, where an agent repeatedly interacts with a stochastic environment and seeks to reach certain goal state while minimizing the cumulative cost. 
Existing works often assume a strictly positive lower bound of the cost function or an upper bound of the expected length for the optimal policy. In this paper, we propose a new algorithm to eliminate these restrictive assumptions. Our algorithm is based on extended value iteration with a fine-grained variance-aware confidence set, where the variance is estimated recursively from high-order moments. Our algorithm achieves an $\tilde{\cO}(dB_*\sqrt{K})$ regret bound, where $d$ is the dimension of the feature mapping in the linear transition kernel, $B_*$ is the upper bound of the total cumulative cost for the optimal policy, and $K$ is the number of episodes. Our regret upper bound matches the $\Omega(dB_*\sqrt{K})$ lower bound of linear mixture SSPs in \citet{min2022learning}, which suggests that our algorithm is nearly minimax optimal.

\end{abstract}    

\def \algname{\text{LEVIS}^{++}}
\section{Introduction}\label{submission}
Stochastic Shortest Path (SSP) \citep{bertsekas2012dynamic} is a type of reinforcement learning problem where the agent aims to reach a predefined goal state while minimizing the total expected cost. In an SSP, for each episode, the agent starts at a specific initial state, chooses an action from the action set, receives some cost from the environment, and transits to the next state. The agent will stop at a fixed goal state (i.e., terminal state) and ends the current episode.  Compared with episodic Markov Decision Processes (MDPs) and infinite-horizon MDPs, the SSP model is more general and thus  more suitable to many modern applications such as Atari games, GO games, and navigation \citep{andrychowicz2017hindsight,nasiriany2019planning}.

For an SSP, since the agent only stops after reaching a goal state, the length of the episode usually depends on the current policy and can be different from episode to episode. Therefore, learning an SSP is usually more difficult than learning episodic MDPs and infinite-horizon MDPs. In recent years, there has been a sequence of works developing efficient algorithms for learning SSPs. We use regret to measure each algorithm, which is defined as the difference between the total cost and the lowest expected cost achieved by the optimal policy. For the tabular SSP setting where the state space and action space are finite, \citet{tarbouriech2020no} proposed an algorithm with a regret of $\tilde{\cO}(D^{3/2}S \sqrt{AK/c_{\text{min}}})$, where $D$ is the smallest expected hitting time from any starting state to the goal state and $c_{\text{min}}$ is the assumed positive lower bound of the cost function. \citet{rosenberg2020near} proposed an algorithm with a regret of $\tilde{\cO}(B_* S\sqrt{AK})$ and showed that every algorithm should suffer from an 
$\Omega(B_*\sqrt{SAK})$ regret. 
Later, \citet{cohen2021minimax} developed an algorithm reduced from algorithms for episodic MDPs, which achieves the minimax lower bound. 
\citet{tarbouriech2021stochastic} made significant contributions to the study of SSP. One of their notable achievements is the development of an algorithm that does not rely on the assumption that $c_{\text{min}} > 0$. This algorithm achieves a nearly optimal regret bound of $\tilde{\cO}(B_*\sqrt{SAK})$. Furthermore, they introduced the algorithms that do not require knowledge of $T_*$ or $B_*$ as well.

Many modern RL problems work with a large state and action spaces. In these cases, linear function approximation can be employed as a tool to make RL scalable to large state and action spaces \citep{bradtke1996linear}. For the SSP setting, \citet{vial2022regret} is the first one to consider linear function approximation on it. They proposed a computationally inefficient algorithm with an $\tilde{\cO}(\sqrt{d^3 B_*^3 K/c_{\text{min}}})$ regret, where $d$ is the dimension of the linear representation used in the algorithm. Later, \citet{chen2022improved} proposed a computationally efficient algorithm with an improved regret of $\tilde{\cO}(\sqrt{d^3 B_*^2 T_* K})$ using the fact that $T_*\leq B_*/c_{\text{min}}$. 
\citet{min2022learning} considered the linear mixture SSP setting and proposed an algorithm $\text{LEVIS}^+$ with a regret of $\tilde{\cO}(d B_*\sqrt{K/c_{\text{min}}})$. \citet{chen2022improved} also proposed an algorithm (UCRL-VTR-SSP) for linear mixture SSP with an $\tilde{\cO}(B_* \sqrt{d T_* K} + dB_*\sqrt{K})$ regret. On the other hand, \citet{min2022learning} proved a lower bound of $\Omega(dB_*\sqrt{K})$. 

However, the regret bounds of all the above works with linear function approximation 
depend on $c_{\text{min}}$ or the expected length $T_*$ polynomially, which prevents these algorithms from matching the lower bound, unlike their counterparts in the tabular setting.  
Therefore, a natural question arises: 
\begin{center}
Can we design an optimal algorithm for linear mixture SSPs, whose regret matches the lower bound? 
\end{center}

Our work gives a positive answer to this question. We highlight our main contributions as follows, 
\begin{itemize}[leftmargin = *]
    \item We propose a computationally-efficient algorithm for learning linear mixture SSPs.
    Our regret bound is $\tilde{\cO}(dB_* \sqrt{K})$, which matches the lower bound in \citet{min2022learning} up to logarithmic factors. To the best of our knowledge, this is the first statistically near-optimal algorithm for learning SSPs with linear function approximation.
    
    \item Our algorithm has a component that estimates the optimal value function by solving a weighted regression problem, following \citep{min2022learning}. The difference between our approach and the previous one is that the weights adapted in our weighted regression depend on both the variance of the estimated value function and the upper bound of the error between the optimal value function and the estimated value function, which has been studied in the design of horizon-free algorithms of linear mixture MDPs \citep{zhou2022computationally}. Our newly adapted weights enable us to obtain a more accurate estimate of the value function and  improve the final regret. 

    \item We also introduce a more delicate variance estimator. To do this, we introduce high-order moment estimates of the value function and build these estimates by solving multiple groups of weighted regressions. Compared with \citet{min2022learning}, our proposed variance estimates are more accurate, which help us eliminate the polynomial dependence of $c_{\text{min}}$ in the regret. 
    
\end{itemize}

\newcolumntype{g}{>{\columncolor{LightCyan}}c}
\begin{table*}[ht]
\caption{Comparison of algorithms of learning SSP in terms of their regret guarantee. }\label{table:11}
\centering
\begin{tabular}{cggg}
\toprule
\rowcolor{white}
Model & Algorithm & Regret
 \\
\midrule
\rowcolor{white}
  &Bernstein-SSP &    \\
\rowcolor{white}
 &\small{\citep{rosenberg2020near}} & \multirow{-2}{*}{$\tilde  O\Big(B_*S\sqrt{AK}\Big)$} \\
 
 \rowcolor{white}
 &ULCVI  &     \\
 \rowcolor{white}
 &\small{\citep{cohen2021minimax}} & \multirow{-2}{*}{$\tilde  O\Big(\sqrt{(B_*^2 + B_*)SAK}\Big)$}\\
  \rowcolor{white}
\multirow{-4}{*}{Tabular SSP}  & EB-SSP  &   \\
 \rowcolor{white}
&\small{\citep{tarbouriech2021stochastic}} & \multirow{-2}{*}{$\tilde  O\Big(\sqrt{(B_*^2+B_*)SAK} + B_* S^2 A\Big)$}\\
\rowcolor{white}

 \rowcolor{white}
  &{Lower Bound } &    \\
   \rowcolor{white}
 &\small{\citep{rosenberg2020near}} &\multirow{-2}{*}{ $\Omega(B_*\sqrt{SAK})$} \\
 
 \midrule

 \rowcolor{white}
&$\text{LEVIS}^+$ &  \\
\rowcolor{white}
 &\small{\citep{min2022learning}} & \multirow{-2}{*}{$\tilde  O\Big(dB_*\sqrt{K/c_{\text{min}}}\Big)$}  \\

 \rowcolor{white}
  &{ UCRL-VTR-SSP } &    \\
   \rowcolor{white}
 &\small{\citep{chen2022improved}} &\multirow{-2}{*}{ $\tilde  O\Big(B_*\sqrt{dT_*K} + dB_*\sqrt{K}\Big)$} \\
 
 &$\algname$ &    \\

 \multirow{-4}{*}{Linear Mixture SSP}&(\textbf{Our work}) &\multirow{-2}{*}{ $\tilde  O\Big(dB_*\sqrt{K} + d^2 B_*$\Big)} \\
 
 \rowcolor{white}
  &{Lower Bound } &    \\
   \rowcolor{white}
 &\small{\citep{min2022learning}} &\multirow{-2}{*}{ $\Omega(dB_*\sqrt{K})$} \\

\bottomrule
\end{tabular}

\end{table*}

\noindent\textbf{Notation.}
For any positive number $n$, we denote by $[n]=\{1,2,3,\ldots,n\}$. We use lowercase letters to denote scalars 
and use lower and uppercase bold face letters to denote vectors
and matrices respectively. For a vector $\xb \in \RR^d$
and matrix $\bSigma \in \RR^{d \times d}$,
we define $\|\xb\|_{\bSigma} = \sqrt{\xb^{\top}\bSigma \xb}$ and define $\|\xb\|_{\infty} = \max_{i} |x_i|$ to be the infinity norm of a vector. For two sequences $\{a_n\}$ and $\{b_n\}$, we write $a_n=O(b_n)$ if there exists an absolute constant $C$ such that $a_n\leq Cb_n$, and we write $a_n=\Omega(b_n)$ if there exists an absolute constant $C$ such that $a_n\geq Cb_n$. We use $\tilde O(\cdot)$ and $\tilde \Omega(\cdot)$ to further hide the logarithmic factors. We use $\ind\{\}$ to denote the indicator function. For $a,b\in \RR$ satisfying $a < b$, we use $[x]_{[a,b]}$ to denote the truncation function $x \cdot \ind\{a \leq x \leq b\} + a \cdot \ind\{x < a\} + b \cdot \ind\{x > b\}$. 
\section{Related Work}
\textbf{Tabular SSP.}
Stochastic Shortest Path (SSP) is a popular variant of Markov Decision Process, which can be dated back to \citet{bertsekas1991analysis,bertsekas2013stochastic,bertsekas2012dynamic}. The regret minimization problem of SSP was first studied by \citet{tarbouriech2020no}, which proposed the first algorithm with a regret of $\tilde{\cO}(D^{3/2}S \sqrt{AK/c_{\text{min}}})$, and a parameter-free algorithm with an $\cO(K^{3/2})$ regret bound. Here $D$ is the smallest expected hitting time from any starting
state to the goal state and $c_{\text{min}}$ is the assumed positive lower bound of the cost function. It was improved by \citet{rosenberg2020near} to $\cO(B_*S\sqrt{AK})$ when $B_*$ is known and $\cO(B_*^{3/2}S\sqrt{AK})$ in the parameter-free case. There is still a $\sqrt{S}$ gap from the lower bound of $\Omega(B_*\sqrt{SAK})$ proved in the same paper. 
 Later, \citet{cohen2021minimax} proposed an algorithm using the technique of reducing SSP to a finite-horizon MDP with a large terminal cost. This algorithm achieves the lower bound, but it requires some prior knowledge of $T_*$, which can be bypassed by using the trivial upper bound  $T_* \leq B_*/ c_{\text{min}}$, and $B_*$ to properly tune the horizon and terminal cost in the reduction. As mentioned in Remark 2 of \citet{cohen2021minimax}, this large dependence on $1/c_{\text{min}}$ will not work well without the assumption $c_{\text{min}} > 0$. 
Concurrently, \citet{tarbouriech2021stochastic} avoided this requirement. 
They first developed an algorithm that knows $T_*$ without assuming $c_{\text{min}} > 0$. This algorithm achieves an $\tilde{\cO}(B_*\sqrt{SAK})$ regret upper bound, matching the lower bound. They also introduced a parameter-free algorithm that does not require knowing $T_*$ in advance. 
For the case where $B_*$ is unknown, \citet{tarbouriech2021stochastic} proposed an algorithm with a ‘doubling trick’ to guess the unknown $B_*$ from scratch. Using the analysis framework called implicit finite horizon approximation, \citet{chen2021implicit} proposed the first model-free algorithm which is minimax optimal under strictly positive costs. They also introduced a model-based minimax optimal algorithm without this assumption that is computationally more efficient. In other aspects of the literature, \citet{jafarnia2021online} introduced the first posterior sampling algorithm for SSP. \citet{tarbouriech2021sample} studied the problem of SSP with access to a generative model. Moreover, \citet{rosenberg2020stochastic, chen2021finding,chen2021minimax} studied the problem with adversarial costs.

\noindent\textbf{RL with Linear Function Approximation.} There exists a large number of works studying RL with linear function approximation \citep{yang2019sample,jin2020provably,du2019good,zanette2020frequentist,wang2020reinforcement,fei2021risk,zhou2021provably,he2021logarithmic,zhou2022computationally}. The counterpart of the SSP we study in episodic MDPs is called linear mixture MDPs, where the transition probability of the MDP is based on a linear mixture model \citep{modi2020sample,jia2020model,ayoub2020model,min2021variance,zhou2021provably}. \citet{zhou2021nearly} proposed an algorithm to achieve a nearly minimax optimal regret bound in episodic MDP. Recently, a new work can achieve horizon-free regret bound for linear mixture MDPs \citep{zhou2022computationally}.
In the SSP setting, \citet{vial2022regret} is the first to study a linear SSP model, which assumes there exist some feature maps and that both the cost function and the transition probability are linear in the feature maps. They proposed a computationally inefficient algorithm with a regret of $\tilde{\cO}(\sqrt{d^3 B_*^3 K/c_{\text{min}}})$. \citet{chen2022improved} improved this result by a computationally efficient algorithm with an $\tilde{\cO}(\sqrt{d^3 B_*^2 T_* K})$ regret. To avoid the undesirable dependency on $T_*$, \citet{chen2022improved} also proposed a computationally inefficient algorithm with a regret bound of $\tilde{\cO}(d^{3.5}B_* \sqrt{K})$ by constructing some confidence sets.\\
\noindent\textbf{Linear Mixture SSP.}
Linear Mixture SSP is a different type of linear function approximation from linear SSP, which was first studied by \citet{min2022learning}. In their work, they proposed an algorithm (LEVIS) with a regret of $\tilde{\cO}(d B_*^{1.5}\sqrt{K/c_{\text{min}}})$ and an improved version ($\text{LEVIS}^+$) with a regret of $\tilde{\cO}(d B_*^{}\sqrt{K/c_{\text{min}}})$. \citet{chen2022improved} proposed another algorithm (UCRL-VTR-SSP) with an $\tilde{\cO}(B_* \sqrt{d T_* K} + dB_*\sqrt{K})$ regret. When $d \geq T_*$, the result is nearly optimal, but in other cases, the dependency on $T_*$ is undesirable. Similar to the tabular setting, this dependency can be bypassed by replacing $T_*$ with the upper bound $T_* \leq B_*/c_{\text{min}}$.

\section{Preliminaries}
\textbf{Stochastic Shortest Path.} 
An SSP is a tuple $\left(\cS, \cA, \PP, c,s_{\text{init}},g \right)$, where:
    $\cS$ is the state space,
    $\cA$ is a finite action space,
    $s_{\text{init}}$ is the initial state and $g \in S$ is the goal state.
    $\PP(s'|s,a)$ is the probability that action $a$ in state $s$  will lead to state $s'$ at the next step, $g$ is an absorbing state, $\PP(g|g,a) = 1$ for all action $a\in \cA$,
    $c$ is a function from $\cS \times \cA$ to $[0,1]$, where $c(s,a)$ is the immediate cost function of taking action $a$ in the state $s$. In addition, we assume that in the goal state $g$, the cost for any action $a\in \cA$ satisfies $c(g,a) = 0 $.
    
\noindent \textbf{Linear Mixture SSP.} We assume that the unknown transition probability $\PP$ is a linear mixture function of feature mapping \citep{modi2020sample, ayoub2020model, zhou2021nearly}. 
\begin{assumption}[Linear Mixture SSP, \citealt{min2022learning}]\label{assumption:linear}
    We assume $\PP(s'|s,a) = \langle\bphi(s'|s,a),\btheta^*\rangle$, with $\|\btheta^*\|_2 \leq 1$, where the feature mapping $\bphi(s'|s,a):\cS\times\cS\times\cA\rightarrow \RR^d$ is known. For simplicity, for any bounded function $V: \cS \rightarrow [0,1]$, we define the notation $\bphi_V$ as following: $\bphi_V(s,a) = \sum_{s^{\prime} \in \cS} \bphi(s^{\prime}|s,a) V(s^{\prime})$ and we also assume $\|\bphi_V(s,a)\|_2 \leq 1$. 
\end{assumption}

\noindent \textbf{Proper Policies.} We consider stationary and deterministic policies in this work, where each of them is a mapping $\pi: \cS \rightarrow \cA$, such that in state $s$, the agent will take action $\pi(s) \in \cA$. A policy $\pi$ is proper if, with probability 1, it can get to the goal state in finite time. This definition of proper policies is the same as that in \citet{tarbouriech2021stochastic,min2022learning}.
We define $\Pi_{\text{proper}}$ to be set of all the proper policies. We make the assumption that $\Pi_{\text{proper}}$ is non-empty.
\begin{assumption}\label{assumption:policy}
At least one proper stationary and deterministic policy exists.
     $\Pi_{\text{proper}} \neq \emptyset$
\end{assumption}
\noindent \textbf{Value Function.} We define the value function and the corresponding Q-function as below.
\begin{align*}
    &V^{\pi}(s) := \lim_{T \rightarrow \infty} \EE\left[\sum_{t=1}^{T}c(s_t,\pi(s_t)) \bigg| s_1 = s \right], \\
    &Q^{\pi}(s,a) := \\
    & \quad \lim_{T \rightarrow \infty} \EE\left[c(s_1,a_1) + \sum_{t=2}^{T}c(s_t,\pi(s_t)) \bigg| s_1 = s, a_1 = a \right].
\end{align*}
For a proper policy $\pi$ and all state-action pair $(s,a)$, we have $V^{\pi}(s), Q^{\pi}(s,a) < \infty$. We define 
\begin{align*}
\PP V(s,a) &= \sum_{s^{\prime} \in S}\PP(s^{\prime}|s,a)V(s^{\prime})\\
&= \sum_{s^{\prime} \in S}\la \bphi(s^{\prime}|s,a), \btheta^*\ra V(s^{\prime})\\
&= \la \bphi_V(s,a),\btheta^* \ra,
\end{align*}
and the Bellman operator as 
\begin{align*}
\cL V(s) = \min_{a \in \cA}\Big\{c(s,a) + \PP V(s,a)\Big\}.
\end{align*}
By satisfying an additional assumption, the lemma presented here shows that we can derive an optimal policy denoted by $\pi^*$, which possesses numerous significant properties.
\begin{lemma}[\citealt{bertsekas1991analysis,yu2013boundedness,tarbouriech2021stochastic}]\label{Lemma:policy} Suppose that Assumption \ref{assumption:policy} holds
and for every improper policy $\pi$, there exists at least one state $s$, such that $V^{\pi}(s) = \infty$, then
there exists an optimal policy $\pi^*$, which is a stationary, deterministic, and proper. What's more, $V^* = V^{\pi^*}$ is the unique solution of the equation $V = \cL V$. 
\end{lemma}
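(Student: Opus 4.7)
The plan is to adapt the classical SSP fixed-point analysis of \citet{bertsekas1991analysis,yu2013boundedness}. The three ingredients are: (a) existence of a fixed point of $\cL$ via value iteration; (b) identification of any bounded fixed point with the value of its greedy selector, together with properness of that selector; and (c) a comparison argument that forces any such fixed point to coincide with $V^*$.

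For (a), I would iterate $V_{k+1}=\cL V_k$ starting from $V_0\equiv 0$. Nonnegativity of costs gives monotonicity $V_k \le V_{k+1}$, and an inductive bound $V_k \le V^{\bar\pi}$ using any proper $\bar\pi$ (which exists by Assumption~\ref{assumption:policy} and has finite value at every state) follows from $\cL V \le c(\cdot,\bar\pi(\cdot))+\PP V(\cdot,\bar\pi(\cdot))$, yielding $V_k\uparrow V_\infty \le V^{\bar\pi}<\infty$. Continuity of $\cL$ on bounded functions (finite $\cA$ and linearity of $\PP V$ in $V$) then gives $V_\infty = \cL V_\infty$.

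For (b) and (c), take any bounded fixed point $V$ (normalized by $V(g)=0$) and a greedy selector $\pi(s)\in\arg\min_{a}\{c(s,a)+\PP V(s,a)\}$. Unrolling $V=c(\cdot,\pi(\cdot))+\PP V(\cdot,\pi(\cdot))$ along trajectories of $\pi$ gives, for every $N$,
\begin{align*}
V(s)=\EE_{\pi}\!\left[\sum_{t=1}^{N}c(s_t,\pi(s_t))\,\bigg|\,s_1=s\right]+\EE_{\pi}\!\left[V(s_{N+1})\,\big|\,s_1=s\right].
\end{align*}
If $\pi$ were improper, the second hypothesis would produce some $s^\dagger$ with $V^{\pi}(s^\dagger)=\infty$; monotone convergence on the cost sum would then force $V(s^\dagger)=\infty$, contradicting boundedness of $V$. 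Hence $\pi$ is proper, $\EE_{\pi}[V(s_{N+1})\mid s_1=s]\to 0$ as $N\to\infty$ (since proper-policy trajectories reach $g$ a.s.\ and $V(g)=0$, via dominated convergence using boundedness of $V$), and the display yields $V=V^\pi$. For any proper $\pi'$, unrolling the inequality $V=\cL V\le c(\cdot,\pi'(\cdot))+\PP V(\cdot,\pi'(\cdot))$ along $\pi'$-trajectories similarly gives $V\le V^{\pi'}$; improper policies are excluded from the minimization by the dichotomy hypothesis. Therefore $V=V^\pi=\min_{\pi'\in\Pi_{\text{proper}}}V^{\pi'}=V^*$, simultaneously delivering uniqueness of the fixed point, stationarity, determinism, properness, and optimality of $\pi^*:=\pi$, and the identity $V^*=V^{\pi^*}$.

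The main obstacle is step (b): turning ``$\pi$ is not proper'' into a concrete contradiction through the dichotomy hypothesis requires careful justification of monotone convergence on the cost sum, plus control of the tail term $\EE_{\pi}[V(s_{N+1})\mid s_1=s]$, which vanishes only after one has already established that $\pi$ is proper. Existence of the fixed point and the final minimization comparison are mechanical once this properness step is in place.
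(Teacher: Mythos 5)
The paper offers no proof of this lemma: it is imported verbatim from \citet{bertsekas1991analysis,yu2013boundedness,tarbouriech2021stochastic}, so there is nothing internal to compare against. Your sketch reconstructs the classical Bertsekas--Tsitsiklis argument, and steps (a) and (b) are sound as written. In particular, the properness step you flag as the main obstacle is actually fine: for a bounded fixed point $V$ with greedy selector $\pi$, the identity $V(s)=\EE_\pi[\sum_{t=1}^N c_t\mid s_1=s]+\EE_\pi[V(s_{N+1})\mid s_1=s]$ plus boundedness of the tail term forces $V(s^\dagger)=\infty$ whenever $V^\pi(s^\dagger)=\infty$, and the cost partial sums converge monotonically to $V^\pi(s^\dagger)$ simply because costs are nonnegative (this is exactly how the paper defines $V^\pi$ as a limit of truncated sums).

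The genuine soft spot is the last line of (c): ``improper policies are excluded from the minimization by the dichotomy hypothesis.'' The hypothesis only guarantees that an improper policy has infinite value at \emph{some} state; at other states its value may be finite, so it is not a priori excluded from the pointwise minimization defining $V^*$ there, and your unrolling of $V\le c(\cdot,\pi'(\cdot))+\PP V(\cdot,\pi'(\cdot))$ controls the tail term only when $\pi'$ is proper. The clean repair uses the object you already built in (a): since costs are nonnegative, an easy induction gives $(\cL^k 0)(s)\le\EE_{\pi}[\sum_{t=1}^k c_t\mid s_1=s]\le V^{\pi}(s)$ for \emph{every} policy $\pi$ (proper or not), hence $V_\infty=\lim_k \cL^k 0\le V^\pi$ for all $\pi$. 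Combined with (b) applied to $V_\infty$ (its greedy selector $\pi_\infty$ is proper and $V_\infty=V^{\pi_\infty}$), this yields $V_\infty=V^*$ with the minimum attained by the stationary, deterministic, proper $\pi_\infty$, and uniqueness then follows because any bounded fixed point $V$ satisfies both $V=V^{\pi_V}\ge V^*$ and $V\le V^{\pi_\infty}=V^*$. One further pedantic note: the Bellman equation at $g$ is vacuous ($\cL V(g)=V(g)$ identically), so uniqueness must be read under the normalization $V(g)=0$ that you impose.
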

Note that if the cost function is strictly positive, the second assumption inherently satisfied. In the absence of this assumption and with the knowledge of $T_*$, we employ the perturbation technique used in \citet{tarbouriech2021stochastic,min2022learning}
to bypass the second assumption. To clarify the discussion, we first propose an algorithm under the assumption that $c_{\text{min}} > 0$. The regret of this algorithm is proven in Theorem \ref{Theorem:main}. In cases where this assumption does not hold, we introduce a positive parameter $\rho > 0$. We then apply our algorithm to a perturbed problem with a modified cost function defined as $c^\rho_{ k,i} := \rho + c_{k,i}$, where $c_{k,i}$ represents the received cost.
Simultaneously, we adjust the parameter $B_\rho := B + T_*\rho$. This adjustment ensures that our algorithm can handle an upper bound of the modified cost function. We prove the regret bound for this case in Theorem \ref{thm:pertur}.


We denote by $\pi^*$ the optimal policy. We define $V^*(s) = V^{\pi^*}(s) = \min _{\pi \in \Pi^*} V^{\pi}(s)$, $B_* = \max_{s \in \cS}V^*(s)$ and $Q^*(s,a) = Q^{\pi^*}(s,a)$. We assume that we know an upper bound $B \geq B_*$. Denote by $T^\pi(s)$ the expected time that policy $\pi$ takes to reach the goal state $g$ starting from $s$. $T_*$ is defined to be the expected time for the optimal policy to reach goal, i.e. $T_* =\max_{s \in \cS} T^{\pi^*}(s)$.\\
\noindent \textbf{Regret.}  The regret over the total $K$ episodes is defined as 
\begin{align}\label{regret}
    R_K = \sum_{k = 1}^K \sum_{i = 1}^{I_k} c_{k,i} - K V^*(s_{\text{init}}),
\end{align}
where $I_k$ is the length of the $k$-th episode and $c_{k,i} = c(s_{k,i}, a_{k,i})$ is the cost triggered at the $i$-th step in the $k$-th episode.
Our learning goal is to minimize this regret.
\section{The Proposed Algorithm}
\begin{algorithm}[!ht]
\caption{$\algname$}\label{algorithm1}
	\begin{algorithmic}[1]
	\REQUIRE Regularization parameter $\lambda$, confidence radius $\{\hat{\beta}_t\}_{t\geq 1}$, level $L$, variance parameters $\alpha_t$, $\gamma$, $[L] = \{0,1,\cdots, L-1\} $. 
	\STATE Initialize: set $t \leftarrow 1$, $j \leftarrow 0$, $t_0 = 0$,\\
 $\tilde{\bSigma}_{0,l} \leftarrow \lambda \bI$, $\hat{\btheta}_{0,l} \leftarrow 0$, $\tilde{\bbb}_{0,l} \leftarrow 0$,\\ $Q_0(s,\cdot), V_0(s) \leftarrow 1$ for all $ s \neq g$ and 0 otherwise.
        \FOR{$k = 1,2, \ldots, K$}
            \STATE Set $s_t = s_{\text{init}}$.
            \WHILE{$s_t \neq g$}
                \STATE Take action $a_t = \argmin_{a \in \cA} Q_j(s_t,a) $,
                \\receive cost $c_t = c(s_t, a_t)$,
                \\ and next state $s_{t+1} \sim \PP(\cdot|s_t,a_t)$.
                \STATE Set ${\bar{\sigma}_{t,l}}$ $\leftarrow$ Algorithm 3\\
                $\big(\{\bphi_{V_j^{2^l}}(s_t,a_t), \hat{\btheta}_{t,l}, \tilde{\bSigma}_{t,l}, \hat{\bSigma}_{j,l}\}_{l \in [L]}
                , \hat{\beta}_t, \alpha_t, \gamma\big)$.\alglinelabel{algorithm1:line6}
                \STATE For $l\in [L]$, set $\tilde{\bSigma}_{t,l} \leftarrow\tilde{\bSigma}_{t-1,l}$ \\ \qquad $+ \bar{\sigma}_{t,l}^{-2}\bphi_{V_j^{2^l}}(s_t,a_t)\bphi_{V_j^{2^l}}(s_t,a_t)^{\top}$.\alglinelabel{algorithm1:line7}
                \STATE For $l\in [L]$, set $\tilde{\bb}_{t,l} \leftarrow \tilde{\bb}_{t-1,l}$ \\ \qquad $ + \bar{\sigma}_{t,l}^{-2}\bphi_{V_j^{2^l}}(s_t,a_t)V_j^{2^l}(s_{t+1})$.
                \STATE Set $\hat{\btheta}_{t,l} \leftarrow \tilde{\bSigma}_{t,l}^{-1}  \tilde{\bb}_{t,l}$.\alglinelabel{algorithm1:line9}
                \IF{  $\exists l\in [L]$, $\det(\tilde{\bSigma}_{t,l}) \geq 2\det(\tilde{\bSigma}_{t_j,l})$ or $t \geq 2t_j$}
                \alglinelabel{algorithm1:line10}
                    \STATE $j \leftarrow j + 1$.
                    \STATE $t_j \leftarrow t, \epsilon_j \leftarrow \frac{1}{t_j}, q_j \leftarrow \frac{1}{t_j}$.
                    
                    \FOR{$l \in [L]$}
                    \STATE Set $\hat{\bSigma}_{j,l} = \tilde{\bSigma}_{t_j,l}$.
                    \ENDFOR
                    \STATE Set confidence ellipsoid\\
                    \qquad $\hat{\cC}_{j} \leftarrow \Big\{\btheta : \|\hat{\bSigma}_{j,0}^{\frac{1}{2}}(\btheta - \hat{\btheta}_{t_j,0})\|_2 \leq \hat{\beta}_{t_j}\Big\}$.\alglinelabel{alg:line16}
                    \STATE Set ${Q}_j(\cdot, \cdot) \leftarrow \text{DEVI}(\hat{\cC}_{j},\epsilon_j,q_j)$.
                    \STATE Set ${V}_j(\cdot) \leftarrow \min_{a \in \cA} \bar{Q}_j(\cdot,a)$.
                \ENDIF
                \STATE Set $t \leftarrow t + 1$.
            \ENDWHILE
            \STATE Set $j \leftarrow j + 1$.
        \ENDFOR
	
	\end{algorithmic}
\end{algorithm}
In this section, we will 
propose our algorithm for linear mixture SSPs.

\subsection{Algorithm Description}
Our algorithm is displayed in Algorithm \ref{algorithm1}. Generally speaking, Algorithm \ref{algorithm1} follows $\text{LEVIS}^+$ \citep{min2022learning} to construct $\hat\btheta_{t,0}$ as the estimate of the model parameter $\btheta^*$ at the $t$-th step, using a weighted linear regression (Line \ref{algorithm1:line7} to \ref{algorithm1:line9}) with weights $\bar\sigma_{t,0}$ (Line \ref{algorithm1:line6}). In detail, $\hat\btheta_{t,0}$ is the solution of the weighted regression problem 
\begin{align*}
    \hat\btheta_{t,0} &= \argmin_{\btheta \in \RR^d} \bigg\{\lambda \|\btheta\|^2_2\\ 
    & \qquad + \sum_{i = 1}^{t}\left[\la\bphi_{V_j}(s_i,a_i),\btheta\ra - V_j(s_{i+1})\right]^2/\bar{\sigma}_{i,0}^{-2}\bigg\},
\end{align*}
where $j$ is the index
representing the update times of the confidence region. For simplicity of expression, the dependence of the index $j$ on the specific time $i$ within the summation is omitted in the equation. Given $\hat\btheta_{t,0}$, Algorithm \ref{algorithm1} occasionally updates the confidence region $\hat\cC_j$ ($j$ is the index of the update times of the confidence region) in Line \ref{alg:line16} and runs the subalgorithm $\text{LEVIS}^+$ (Algorithm \ref{algorithm2}) to obtain its value function estimates $ Q_j$ and $ V_j$.

Our algorithm, similar to $\text{LEVIS}^{+}$ in \citet{min2022learning}, divides each episode into intervals of different length. The switch between two intervals is triggered by the updating criterion (Algorithm \ref{algorithm1} Line \ref{algorithm1:line10}). In Algorithm \ref{algorithm1}, we define two indices $t$ and $j$, where the number of steps is indexed by $t$ and the number of intervals is represented by $j$. The first updating criterion is based on the determinant of $L$ groups of covariance matrices $\tilde\bSigma_{t,l}, l \in[L]$ of some given features. The definition of the $L$ groups and the features will be revealed afterwards. If at least one of the determinant of covariance matrices is doubled compared with its determinant at the end of the previous step, we will trigger the DEVI process, update the value function, end the current interval and start a new one. (Line \ref{algorithm1:line10}). The first updating criterion cannot guarantee the finite length for each interval, so we follow \citet{min2022learning} and introduce the second updating criterion. If the number of steps $t$ is doubled compared with the index of step $t_j$ at the end of the previous interval, we will end the current interval and start a new one. This criterion can occur at most $\cO(\log T)$ times and will not add too much complexity to the algorithm.

We construct the confidence ellipsoid $\hat{C}_j$ in 
Line \ref{alg:line16} and feed it to Algorithm \ref{algorithm2} to get the estimation of the $Q$-function. We define a constraint set, 
\begin{align}
\cB = \big\{\btheta: \forall (s,a), \la \bphi(\cdot|s,a), \btheta \ra \text{ is a probability \notag
 }\\ \text{distribution and } \la \bphi(s^{\prime}|g,a), \btheta \ra = \ind \{s^{\prime} = g\}\big\}.\label{defbbb}
\end{align}
 It can be shown that $\hat{C}_j \cap \cB$ contains the true parameter $\btheta^*$ with high probability. Then each one-step value iteration in DEVI (Algorithm \ref{algorithm2}, Line \ref{algorithm2:line5}) applies the Bellman operator to the confidence set $\hat{C}_j \cap \cB$, which will find an optimistic estimate to the true optimal value function $V^*$. To prevent DEVI runs infinitely long (since the while loop condition in Line \ref{alg:wwhile} may not be satisfied), we follow \citet{min2022learning} to add the transition bonus $q_j = \ 1 / t_j$ since  the value iteration may not converge without such a bonus \citep{min2022learning}. The additional bias caused by this bonus can be bounded by $\cO(\log T)$ with our choice of $q_j$ through our next following analysis.

\begin{algorithm}[t!]
    \caption{DEVI\citep{min2022learning}}\label{algorithm2}
    \begin{algorithmic}[1]
    \REQUIRE Confidence set $\cC$, error parameter $\epsilon$, set $\cB$ defined in \eqref{defbbb},
    transition bonus $q$.
    \STATE Initialize: $i \leftarrow 0$, $Q^{(0)}(\cdot,\cdot) = 0,$
    \\ $ V^{(0)}(\cdot) = 0$ and $V^{(-1)}(\cdot) = \infty$.
    \STATE Set $Q(\cdot, \cdot) \leftarrow Q^{(0)}(\cdot,\cdot)$.
    \IF{$\cC \cap \cB \neq \emptyset$}
        \WHILE{$\|V^{(i)}-V^{(i+1)}\|_{\infty} \geq \epsilon$}\alglinelabel{alg:wwhile}
            \STATE
            $Q^{(i+1)}(\cdot,\cdot) $  $\leftarrow c(\cdot,\cdot)$
               \\ \qquad $ + (1-q)
            \min_{\btheta \in \cC \cap \cB} \big\la \btheta, \bphi_{V^{(i)}}(\cdot,\cdot) \big\ra$
            \alglinelabel{algorithm2:line5}
            \STATE $V^{(i+1)}(\cdot) \leftarrow \min_{a \in \cA} Q^{(i+1)}(\cdot,a)$
            \STATE Set $i \leftarrow i +1 $.
        \ENDWHILE
        \STATE $\bar{Q}(\cdot,\cdot) \leftarrow Q^{(i+1)}(\cdot,\cdot)$
    \ENDIF
    \STATE Output $\bar{Q}(\cdot,\cdot)$.
    \end{algorithmic}
\end{algorithm}

\begin{algorithm}[ht!]
    \caption{High-order Moment Estimation (HOME)}
    \begin{algorithmic}[1]
    \label{algorithm3}
    \REQUIRE Features $\{\bphi_{t,l}\}_{l \in [L]}$, vector estimators $ \{\hat{\btheta}_{t,l}\}_{l \in [L]} $,
    covariance matrix $\{ \tilde{\bSigma}_{t,l},\hat{\bSigma}_{j,l}\}_{l \in [L]}$,
    \\confidence radius $\hat{\beta}_t$,
    parameters $\alpha_t$, $\gamma$.
    \FOR{$l = 0, 1, \ldots, L-2$}  \STATE Set $[\bar{\VV}_{t,l}V_{j+1}^{2^l}](s_t,a_t) \leftarrow$ \\ \quad $\big[\big\la\bphi_{t,l+1},\hat{\btheta}_{t,l+1}\big\ra\big]_{[0,B^{2^{l+1}}]}
     - \big[\big\la\bphi_{t,l},\hat{\btheta}_{t,l}\big\ra\big]_{[0,B^{2^l}]}^2$.\alglinelabel{algorithm3:line2}
    \STATE Set $E_{t,l} = \min\Big\{1,2\hat{\beta}_{t}\big\|\hat{\bSigma}_{j,l}^{-\frac{1}{2}}\bphi_{t,l}/B^{2^l}\big\|_2\Big\}$\\ \qquad $ + \min\Big\{1,\hat{\beta}_{t}\big\|\hat{\bSigma}_{j,l+1}^{-\frac{1}{2}}\bphi_{t,l+1}/B^{2^{l+1}}\big\|_2\Big\}$.
    \STATE Set $\sigma_{t,l}^2 \rightarrow [\bar{\VV}_{t,l}V_{j+1}^{2^l}](s_t,a_t)/B^{2^{l+1}} + E_{t,l}$.
    \STATE Set $\bar{\sigma}_{t,l}^2 \leftarrow B^{2^{l+1}}\max \Big\{ \sigma_{t,l}^2, \alpha_t^2,$\\ \qquad $ \gamma^2 \big\|\tilde{\bSigma}_{t,l}^{-\frac{1}{2}}\bphi_{t,l}/B^{2^{l}}\big\|_2\Big\}$.
    \ENDFOR
    \STATE Set $\bar{\sigma}_{t,L-1}^2 \leftarrow B^{2^{L}}\max \Big\{ 1, \alpha_t^2,$\\ \qquad $ \gamma^2 \big\|\tilde{\bSigma}_{t,L-1}^{-\frac{1}{2}}\bphi_{t,L-1}/B^{2^{L-1}}\big\|_2\Big\}$
    \ENSURE $\{\bar{\sigma}_{t,l}^2\}_{l \in [L]}$.
    \end{algorithmic}
\end{algorithm}

\subsection{Comparison with $\text{LEVIS}^+$ \citep{min2022learning}}
In this subsection, we will compare our algorithm with $\text{LEVIS}^+$ \citep{min2022learning}. 
Before telling the key difference between Algorithm \ref{algorithm1} and $\text{LEVIS}^+$, we first recall the proof of the algorithm $\text{LEVIS}^+$ in \citet{min2022learning} to see several key technical challenges we need to recover. 


$\text{LEVIS}^+$ applies weighted ridge regression to obtain their estimate to the $\theta^*$ by the regression weights $\hat{\sigma}_t$. First we rewrite the regret definition $R_K$ by another formulation of double summation, which is 
\begin{align}
    R_K + K V^*(s_{\text{init}})= \sum_{k = 1}^K \sum_{i = 1}^{I_k} c_{k,i}  = \sum_{m = 1}^M \sum_{h = 1}^{H_m} c_{m,h},
\end{align}
where $m$ is a regrouping index of $k$, where the $m$-th interval has the end points either from the end of an episode or the time steps when the confidence region is updated, $H_m$ is the length of $m$-th interval. The following theorem plays a central role in \citet{min2022learning}'s proof.

\begin{theorem}[Theorem G.2 in \citealt{min2022learning}]\label{Theorem:min}
     For any $\delta>0$, let $\rho=0, \lambda=1 / B^2$. Supposing that $c_{\text{min}} > 0$, $K \geq d^5 + B^2d^4/c_{\text{min}}$, then with probability at least $1-7 \delta$, The regret of algorithm $\text{LEVIS}^+$ satisfies
\begin{align*}
R_K= \widetilde{\mathcal{O}}\left(\sqrt{B^2 d T+B^2 d^2 M
}\right),
\end{align*}
where $\widetilde{\mathcal{O}}(\cdot)$ hides a term of $C \cdot \log ^2\left(T B /\left(\lambda \delta c_{\min }\right)\right)$ for some problem-independent constant $C$, and $c_{\min}$ is a minimum of the cost function.

\end{theorem}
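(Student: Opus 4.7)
My plan is to combine three standard ingredients from the variance-aware analysis of linear mixture models: (i) a high-probability good event on which the confidence ellipsoid $\hat{\cC}_j$ of $\text{LEVIS}^+$ contains $\btheta^*$; (ii) optimism of the value function produced by DEVI on this event; and (iii) an interval-based regret decomposition that reduces the total regret to a sum of per-step Bellman errors, which can then be controlled by variance-weighted self-normalized concentration.

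For step (i), I would apply a Bernstein-type self-normalized concentration inequality to the weighted ridge regression that produces $\hat{\btheta}_{t,0}$. Since the weights $\hat\sigma_t^2$ used by $\text{LEVIS}^+$ are designed to dominate the conditional variance of $V_j(s_{t+1})$ given the history, the normalized noise $(V_j(s_{t+1}) - \PP V_j(s_t,a_t))/\hat\sigma_t$ is sub-Gaussian with parameter bounded by an absolute constant; choosing $\hat\beta_t = \tilde{O}(\sqrt{d})$ is therefore enough to guarantee $\btheta^* \in \hat{\cC}_j$ for all $j$ with probability at least $1-\delta$. On this good event the intersection $\hat{\cC}_j \cap \cB$ is non-empty, so DEVI's inner minimization is feasible, and a contraction argument using the $(1-q_j)$ shrinkage shows that DEVI terminates and its output $V_j$ is optimistic up to an $O(q_j B_*)$ additive bias per interval, which sums to $O(\log T)$ across the $M$ intervals because $q_j = 1/t_j$.

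Next I would regroup time into $M$ intervals (triggered by episode termination, determinant doubling, or step doubling) and telescope,
\begin{align*}
R_K \le \sum_{m=1}^M \sum_{h=1}^{H_m} \bigl(c_{m,h} + V_{j(m)}(s_{m,h+1}) - V_{j(m)}(s_{m,h})\bigr) + O(M B_* \log T).
\end{align*}
Using the DEVI Bellman identity to rewrite $c_{m,h} + \la \bphi_{V_{j(m)}}(s_{m,h},a_{m,h}), \hat{\btheta}_{t,0} \ra$ as $Q_{j(m)}(s_{m,h},a_{m,h})$ splits the right-hand side into (a) a martingale-difference sum $V_{j(m)}(s_{m,h+1}) - \PP V_{j(m)}(s_{m,h},a_{m,h})$, bounded by Freedman's inequality with conditional-variance proxy $\bar\sigma_{t,0}^2$; and (b) a bilinear estimation-error term $\la \bphi_{V_{j(m)}}(s_t,a_t), \btheta^* - \hat{\btheta}_{t,0}\ra$, bounded by $\hat\beta_t \bigl\|\bphi_{V_{j(m)}}(s_t,a_t)\bigr\|_{\tilde{\bSigma}_{t,0}^{-1}}$.

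Finally, Cauchy--Schwarz together with the elliptical potential lemma applied to the weighted features $\bphi_{V_{j(m)}}/\bar\sigma_{t,0}$ yields
\begin{align*}
\sum_{t=1}^T \hat\beta_t \bigl\| \bphi_{V_{j(m)}} \bigr\|_{\tilde{\bSigma}_{t,0}^{-1}} \le \hat\beta_T \sqrt{\sum_{t=1}^T \bar\sigma_{t,0}^2} \sqrt{\sum_{t=1}^T \bigl\| \bphi_{V_{j(m)}}/\bar\sigma_{t,0} \bigr\|_{\tilde{\bSigma}_{t,0}^{-1}}^2} = \tilde{O}(B\sqrt{dT}),
\end{align*}
which, after accounting for the extra $\sqrt{d}$ contributed by $\hat\beta_T$ on the $M$-dependent portion of the potential, recovers the claimed $\tilde{O}(\sqrt{B^2 d T + B^2 d^2 M})$ bound. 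The main obstacle I anticipate is that this bound still contains the total time $T$, which is itself random and only controlled through the cost lower bound: $T$ must be self-bounded via $c_{\min} \cdot T \le R_K + K V^*(s_{\text{init}}) \le R_K + K B_*$, after which one solves a quadratic inequality in $\sqrt{R_K}$. This self-bounding step is precisely where the assumption $c_{\min}>0$ and the sample-size condition $K \ge d^5 + B^2 d^4 / c_{\min}$ enter, and it explains why the resulting bound is not free of $1/c_{\min}$.
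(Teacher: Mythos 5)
Your proposal follows essentially the same route as the argument this paper attributes to \citet{min2022learning} for Theorem~\ref{Theorem:min} (which the paper only quotes and sketches, not reproves): a good event for the confidence ellipsoid, optimism of the DEVI output with the $O(\log T)$ bias from the transition bonus $q_j=1/t_j$, the interval-based decomposition into a martingale term and a bilinear estimation-error term, and Cauchy--Schwarz plus the elliptical potential lemma on the weighted features. Your closing remark is also consistent with the paper: the stated bound deliberately retains $T$ and $M$, and the self-bounding step $c_{\min}T \le R_K + KB_*$ is the separate step that injects the $1/c_{\min}$ dependence this paper's new algorithm is designed to remove.
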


However, Theorem \ref{Theorem:min} cannot directly provide a $O(\sqrt{K})$ upper bound for the regret since in the SSP setting, the total number of steps $T$ can be much greater than episode $K$. In order to control the number of steps $T$, \citet{min2022learning} proved the following upper bound of the total length $T$, 
\begin{align*}
    T = \cO\bigg(\frac{KB}{c_{\text{min}}}\log^2\bigg(\frac{KB}{\lambda \delta c_{\text{min}}}\bigg)\bigg),
\end{align*}
which brings a $c_{\text{min}}$ dependency to the regret. Therefore, if we want to achieve our goal to remove the polynomial dependency of $c_{\text{min}}$ from our regret, one way is to remove the $T$ dependency from the regret in Theorem \ref{Theorem:min}. 

In the proof of Theorem \ref{Theorem:min}, \citet{min2022learning} decomposed the regret in the following way: with high probability, we have the following decomposition of the regret,
    \begin{align*}
        R_K &\leq 
     \underbrace{\sum_{m=1}^M \sum_{h=1}^{H_m}\big[V_{j_m}(s_{m,h+1}) - \PP V_{j_m}(s_{m,h},a_{m,h})\big]}_{I_1}\\
     & + \underbrace{\sum_{m=1}^M \sum_{h=1}^{H_m}\big[c_{m,h} + \PP V_{j_m}(s_{m,h},a_{m,h}) - V_{j_m}(s_{m,h})\big]}_{I_2}\\
    &+ \sum_{m=1}^M \sum_{h=1}^{H_m} \big[ V_{j_m}(s_{m,h})-V_{j_m}(s_{m,h+1}) \big] \\
    &  - \sum_{m \in \cM(m)} V_{j_m}(s_{\text{init}}) + 1.
    \end{align*}
Here the first and second terms are dominant, denoted by $I_1$ and $I_2$. The bounds of $I_1$ and $I_2$ in \citet{min2022learning} bring a $\sqrt{T}$ dependency in the results. We will go through the process of their proof and see why the claims hold. 

For term $I_1$, they derived the following result: with probability at least $1-\delta$, the following inequality holds
    \begin{align*}
        &\sum_{m=1}^{M}\sum_{h=1}^{H_m} [V_{j_m}(s_{m,h+1}) - \PP V_{j_m}(s_{m,h},a_{m,h})]
          \\
         &  \qquad \leq 2 B_* \sqrt{2T\log \bigg(\frac{2T}{\delta}\bigg)}.
    \end{align*}
    
This inequality will result in the $\sqrt{T}$ dependency in Theorem \ref{Theorem:min}. Thus, we need to make a more delicate estimation for term $I_2$.

For term $I_2$, \citet{min2022learning} proved the following result.
With high probability, the following inequality holds
\begin{align*}
    &\sum_{m \in \cM_0(M)} \sum_{h=1}^{H_m}\big[c_{m,h} + \PP V_{j_m}(s_{m,h},a_{m,h}) - V_{j_m}(s_{m,h})\big]\\
    &\leq \underbrace{\sum_{m \in \cM_0(M)}\sum_{h=1}^{H_m}\min \Big\{B_*,4\hat{\beta}_{T}\big\|\bphi_{V_{j_m}}(s_{m,h},a_{m,h})\big\|_{{\bSigma}_{t}^{-1}}\Big\}}_{B_1}\\
    & \qquad + \underbrace{(B_* + 1)\sum_{m \in \cM_0(M)}\sum_{h=1}^{H_m}\frac{1}{t_{j_m}}}_{B_2},
\end{align*}
where term $B_2$ is non-dominant by the following inequality from \citet{min2022learning}
\begin{align*}
    B_2 \leq 4.5 B_* \bigg[\log\bigg( 1+ \frac{TB_*^2d}{\lambda}\bigg) + \log T\bigg].
\end{align*}
For term $B_1$, \citet{min2022learning} proved that  
\begin{align*}
    &B_1 \leq \sqrt{\sum_{m \in \cM_0(M)}\sum_{h=1}^{H_m}\Big(B_* + 4\hat{\beta}_T\hat{\sigma}_{t}\Big)^2} \\
    &\cdot \sqrt{\sum_{m \in \cM_0(M)}\sum_{h=1}^{H_m}\min \Big\{1,\|\bphi_{V_{j_m}}(s_{m,h},a_{m,h}/\hat{\sigma}_t\|_{\bSigma_t^{-1}}^2\Big\}}\\
    & \leq \sqrt{2d\log(1 + T/\lambda) \cdot \Big(2TB_*^2 + 32 \hat{\beta}_T^2 \sum_{m \in \cM_0(M)}\sum_{h=1}^{H_m} \hat{\sigma}_t^2}\Big).
\end{align*}
Firstly, this result has a $\sqrt{T}$ dependency. Furthermore, the term $\sum_{m \in \cM_0(M)}\sum_{h=1}^{H_m} \hat{\sigma}_t^2 = \cO(T)$ will also provide a $\sqrt{T}$ dependency. Therefore, if we want to improve the dependency of $c_{\text{min}}$, we need to use more delicate bound for $B_1$. Furthermore, we will also need a new construction of weights $\hat\sigma_t$ with a smaller upper bound.


\subsection{Our Key Techniques}

In this subsection, we will highlight the key techniques used in our algorithm, which tackle the technical challenges faced by $\text{LEVIS}^+$.

\noindent\textbf{Design of Variance-aware and Uncertainty-aware Weights.}
From the above analysis, we can see that a 'good' selection of the weights $\bar\sigma_{t,0}$ can potentially help us to improve the dependency of $T$ in the final regret. The first notable difference between our $\bar\sigma_{t,0}$ and $\hat\sigma_{t}$ in \citet{min2022learning} is that our weights are both \emph{variance-aware} and \emph{uncertainty-aware}. More specifically, 
\begin{align*}
\bar{\sigma}_{t,0}^2 &\leftarrow B^2\max \Big\{ \big[\bar{\VV}_{t,0}V_{j+1}](s_t,a_t)/B^{2} + E_{t,0}, \\& \qquad  \alpha_t^2, \gamma^2 \big\|\tilde{\bSigma}_{t,0}^{-\frac{1}{2}}\bphi_{t,0}/B\big\|_2\Big\},
\end{align*}
where $\bar{\VV}_{t,0}$ is the estimated variance operator we will introduce in the next section, $E_{t,0}$ is low-order error correction term. To compare with, $\hat\sigma_{t}$ only depends on the variance term. \citet{zhou2022computationally} used the uncertainty-aware term to avoid the dependency of the worst-case range of the noise in a regression problem (Theorem 4.1 in \citet{zhou2022computationally}). For our setting, the noise represents the uncertainty of the value function $V_j$, and its range has a worst-case upper bound which depends on $c_{\text{min}}$. Thus, similar to \citet{zhou2022computationally}, our regret improves the $c_{\text{min}}$ polynomially.


\noindent\textbf{High-order Moment Variance Estimator.}
\label{section:HOME}
Besides the change of the definition we have mentioned 
, our adapted weights are more refined by using a recursive construction from the high-order moments estimates of the value function. A similar technique was used by \citet{zhou2022computationally} to achieve a horizon-free result for linear mixture MDPs. Compared with $\text{LEVIS}^+$, from Algorithm \ref{algorithm1} Line \ref{algorithm1:line6} to Algorithm \ref{algorithm1} Line \ref{algorithm1:line9}, we maintain $L$ groups of weights $\bar\sigma_{t,l}$, $l \in [L]$ instead of 2. The $l$-th group includes the $2^l$ moment of the value function for each $l \in [L]$ and do the weighted regression according to each group to obtain different  $\btheta_{t,l}$. A central part of our algorithm is the recursive design of the variance. Intuitively, the variance of a function $V$ is defined as $\VV V = \EE[V^2] - \EE[V]^2$.
Since $\hat{\btheta}_{t,1}$ is from the regression of $V_j^2$ and $\hat{\btheta}_{t,0}$ is from the regression of $V_j$, it's natural to define the variance estimator 
\begin{align*}
\bar{\VV}_{t,0} V_j= [\la \bphi_{V_j^2}, \hat{\btheta}_{t,1} \ra]_{[0,B^2]} - [\la\bphi_{V_j},\hat{\btheta}_{t,0}\ra]^2_{[0,B]}.
\end{align*}
The truncation is used since the optimal value function should satisfy $V^* \leq B$ by our assumption. Similarly, when we try to estimate the variance of some high-order terms of $V_j$, we need to use the regression for some higher-order terms. We design the estimated variance of the high-order terms in Algorithm \ref{algorithm3} Line \ref{algorithm3:line2}, which is
\begin{align*}
&[\bar{\VV}_{t,l}V_{j+1}^{2^l}](s_t,a_t)\\ \notag 
& = \big[\big\la\bphi_{t,l+1},\hat{\btheta}_{t,l+1}\big\ra\big]_{[0,B^{2^{l+1}}]}
 - \big[\big\la\bphi_{t,l},\hat{\btheta}_{t,l}\big\ra\big]_{[0,B^{2^l}]}^2.
\end{align*}
In this way, the information of high-order terms is transferred to the estimate of the lower-order terms, which makes our first-level estimate $\hat{\btheta}_{t_j,0}$ affected by all the high-order information and different from its counterpart in \citet{min2022learning}. The details are shown in Algorithm \ref{algorithm3}.


To be consistent with our higher-order moment estimation technique, we modify the updating criterion, which will be triggered when any of the determinants of the covariance matrix is doubled or the time is doubled, shown in Algorithm \ref{algorithm1} Line \ref{algorithm1:line10}. In detail, the updating criterion will be triggered if there exists any $l \in [L]$, determinant $\tilde\bSigma_{t,l}$ is doubled. This is different from \citet{min2022learning}, which only considers $\tilde\bSigma_{t,0}$.

\section{Main Results}

We show the regret guarantee of Algorithm \ref{algorithm1} as follows. Assume $c_{\text{min}}$ is known in prior, then we have the following guarantees:
\begin{theorem}[Known $c_{\text{min}}$]\label{Theorem:main}
    Set $\alpha_t = 1/\sqrt{t}$, $\gamma = d^{-1/4}$, $\lambda = 1/B^2$, $L=\log (5B/c_{\text{min}}) / \log 2$. With the assumption of $c_{\text{min}} > 0$, for any $\delta>0$, set $\left\{\widehat{\beta}_t\right\}_{t \geq 1}$ as
\begin{align}
\hat{\beta}_t=& 12 \sqrt{ d \log \left(1+t^2 /(d \lambda) \right) \log \left(128(\log (t/d)+2) t^4 / \delta\right)} \notag\\
&+ 30\sqrt{d}\log \left(128(\log (t/d )+2) t^4 / \delta\right) + 1.\label{eq:beta}
\end{align}
then with probability at least $1-(2 L+1) \delta$, the regret of Algorithm \ref{algorithm1} is bounded by
\begin{align*}
R_K \leq \tilde{\cO}(d^2 B + dB \sqrt{K}).
\end{align*}
Here $\tilde \cO$ hides some logarithmic factors of $K$, $B$, $1/c_{\text{min}}$ and $1/ \delta$.
\end{theorem}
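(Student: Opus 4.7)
My plan is to follow the regret decomposition laid out in the excerpt for $\text{LEVIS}^+$ but re-bound the two dominant terms $I_1$ and $I_2$ using the finer variance-aware analysis that Algorithm~\ref{algorithm1} enables. First I would establish that for every update index $j$ and every level $l\in[L]$, the confidence ellipsoid centered at $\hat{\btheta}_{t_j,l}$ with radius $\hat{\beta}_{t_j}$ contains $\btheta^*$ with probability at least $1-\delta$. This is the Bernstein-type self-normalized concentration bound of \citet{zhou2022computationally}: because the weights satisfy $\bar{\sigma}_{t,l}^2 \geq B^{2^{l+1}}\gamma^2 \|\tilde{\bSigma}_{t,l}^{-1/2}\bphi_{t,l}/B^{2^l}\|_2$ and $\bar{\sigma}_{t,l}^2 \geq B^{2^{l+1}}\alpha_t^2$, the noise $V_j^{2^l}(s_{t+1})-\PP V_j^{2^l}(s_t,a_t)$ is effectively normalized and the concentration yields exactly the radius \eqref{eq:beta}. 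A union bound over the $L$ levels and over all interval endpoints contributes the $(2L+1)\delta$ failure probability. Once $\btheta^*\in\hat{\cC}_j$, optimism of DEVI together with Lemma~\ref{Lemma:policy} gives $V_j(s)\leq V^*(s)+\epsilon_j$, which upper-bounds the telescoping residual terms and the transition-bonus bias by $\tilde{\cO}(B)$ after using $\epsilon_j=q_j=1/t_j$ and summing over $j$.

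The heart of the proof is controlling $\sum_t \bar{\sigma}_{t,0}^2$ by $\tilde{\cO}(B^2 K)$ rather than $\tilde{\cO}(B^2 T)$. I would prove this by an inductive argument on the moment level $l$, exploiting the HOME construction in Algorithm~\ref{algorithm3}. At level $l$, the algorithm's estimator $\bar{\VV}_{t,l}V_j^{2^l}$ differs from the true conditional variance $[\VV V_j^{2^l}](s_t,a_t)$ by at most the low-order correction term $E_{t,l}\cdot B^{2^{l+1}}$, using that both $\langle\bphi_{t,l+1},\hat{\btheta}_{t,l+1}\rangle$ and $\langle\bphi_{t,l},\hat{\btheta}_{t,l}\rangle$ are close to their $\btheta^*$ counterparts on the confidence-good event. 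Then I would apply the law of total variance $\sum_t [\VV V_j^{2^l}](s_t,a_t) \lesssim B^{2^l}\bigl(\sum_t \PP V_j^{2^{l+1}}(s_t,a_t) - \sum_t V_j^{2^{l+1}}(s_t)\bigr)$, where the right-hand side telescopes (within each episode and interval) up to a martingale remainder. Summing over $l=0,\dots,L-1$ with $L=\log(5B/c_{\min})/\log 2$ yields $\sum_t \bar{\sigma}_{t,0}^2 = \tilde{\cO}(B^2 K + B^2 d M + \text{low-order in } c_{\min}^{-1}\text{-free form})$, because the uncertainty-aware floor $\gamma^2\|\tilde{\bSigma}_{t,l}^{-1/2}\bphi_{t,l}/B^{2^l}\|_2$ is summed via the elliptical potential lemma and gives only a $\tilde{\cO}(\sqrt{d})$ per-step overhead that aggregates to $\tilde{\cO}(dB\sqrt{K})$ rather than $T$.

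With this variance sum in hand, I would finish by bounding the two dominant pieces: $I_1$ via a Freedman/Azuma–Bernstein inequality gives $I_1 = \tilde{\cO}(B\sqrt{\sum_{m,h}[\VV V_{j_m}](s_{m,h},a_{m,h})}) = \tilde{\cO}(B\sqrt{B K}) = \tilde{\cO}(B\sqrt{K}\cdot\sqrt{B})$, which is absorbed into $\tilde{\cO}(dB\sqrt{K})$; and $I_2$ via Cauchy--Schwarz between $4\hat{\beta}_T\bar{\sigma}_{t,0}\cdot \|\bphi_{V_{j_m}}/\bar{\sigma}_{t,0}\|_{\tilde{\bSigma}_t^{-1}}$ combined with the elliptical potential lemma, yielding $\tilde{\cO}\bigl(\hat{\beta}_T\sqrt{d\sum_t \bar{\sigma}_{t,0}^2}\bigr) = \tilde{\cO}(dB\sqrt{K})$. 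The $d^2 B$ overhead arises from the total number of intervals $M = \tilde{\cO}(dL)+\tilde{\cO}(\log T)$ multiplied by per-interval constants and from truncation/clipping contributions, while the residual telescoping and transition-bonus terms contribute only $\tilde{\cO}(B\log T)$. I expect the main obstacle to be the recursive step across the $L$ levels: making the low-order correction terms $E_{t,l}$ genuinely lower-order requires that the level-$(l{+}1)$ confidence radius be small in an appropriately weighted norm, which in turn requires the level-$(l{+}1)$ variance sum to already be controlled. Closing this circular dependency cleanly, without blowing up the $\log(1/c_{\min})$ factor hidden in $L$, is the delicate part of the argument.
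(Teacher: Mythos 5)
Your overall route coincides with the paper's: Bernstein-type self-normalized confidence sets for all $L$ moment levels (Lemma \ref{LEMMA:VARIANCE ESTIMATOR}, proved by a joint induction on $j$ and $l$), optimism of DEVI, the decomposition into Bellman-error, martingale, and telescoping terms (Lemma \ref{LEMMA:REGRET DECOMPOSITION}), and a law-of-total-variance recursion across moment levels combined with the elliptical potential lemma. The problem is that you explicitly leave open the one step that actually makes the theorem true --- closing the circular dependency across levels --- and this cannot be deferred, because without it the argument reverts to a $\sqrt{T}$ (hence $1/\sqrt{c_{\text{min}}}$) bound. The paper's resolution is to package the martingale term and the elliptical-potential term into a single sequence $a_l = A_l + 4R_l$ satisfying $a_l \leq \lambda_2\sqrt{a_l + a_{l+1} + 2^{l+1}\lambda_3} + \lambda_4$ with $\lambda_3 = |A_0| + 4R_0 + (B_2 + C_M)/B$, and to terminate the recursion at the top level using only the \emph{trivial} bounds $a_L \leq 5T$ and $\lambda_3 \geq C_M/B \geq c_{\text{min}}T/B$: the choice $L = \log(5B/c_{\text{min}})/\log 2$ guarantees $2^{L}\lambda_3 \geq a_L$, so the highest moment needs no concentration at all, and Lemma \ref{lemma:zhangnew} then yields $A_0 + 4R_0 = \tilde{\cO}\big(\hat{\beta}_T^2 d\iota + \hat{\beta}_T\sqrt{d\iota}\sqrt{(B_2+C_M)/B}\big)$. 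Without this termination device your induction on $l$ has no base case, and ``summing over $l$'' does not produce a $c_{\text{min}}$-free bound.

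Two further points. First, your telescoping of the law of total variance drops the Bellman-error residue: degrading $V_{j_m}^{2^{l+1}}(s) - (\PP V_{j_m})^{2^{l+1}}(s,a)$ down to level zero leaves a term $2^{l+1}(C_M - I_1)/B$ (Lemma \ref{Lemma:S}), and since $C_M = R_K + KV^*(s_{\text{init}})$ is itself the unknown quantity, the endgame is a self-bounding inequality in $C_M$ that must be solved explicitly, after which $T \leq C_M/c_{\text{min}}$ is used to purge $T$ from the logarithms; you cannot assert $\sum_t \VV V \lesssim B^2K$ up front. Second, your claimed martingale bound $\tilde{\cO}(B\sqrt{BK})$ is not absorbed into $\tilde{\cO}(dB\sqrt{K})$ when $B > d^2$; the correct accounting is $I_2 = B A_0$ with $|A_0| \leq \sqrt{2\zeta S_0} + \zeta = \tilde{\cO}(\sqrt{K})$ once the recursion is closed, giving $\tilde{\cO}(B\sqrt{K})$ with no extra $\sqrt{B}$.
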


\begin{remark}
    If we know $B_*$, then we can set $B = B_*$ and get the regret bound of $\text{Regret}(K) \leq \tilde{\cO}(d^2B_* + dB_*\sqrt{K})$, which is nearly optimal when $K \geq d^2$. That nearly matches the $\cO(dB_*\sqrt{K})$ lower bound proposed by \citet{min2022learning}. 
\end{remark}

With the knowledge of $T^*$, we can build a variant of Algorithm \ref{algorithm1} which does not need know $c_{\text{min}}$ in prior, by using the perturbing technique introduced in \citet{tarbouriech2021stochastic}.  Specifically, let $\rho>0$ be some positive parameter. We run Algorithm \ref{algorithm1} with the cost defined as $c^{\rho}_{k,i}: = \rho + c_{k,i}$, where $c_{k,i}$ is the received cost. Meanwhile, we set $B^\rho := B + T_*\rho$. We call the algorithm as $\rho$-LEVIS$^{++}$. It is easy to see that  $c^{\rho}_{k,i}$ enjoys a uniform lower bound $\rho$ instead of $c_{\text{min}}$. Thus, based on the regret over $c^\rho_{k,i}$ and $B^\rho$ for $\rho$-LEVIS$^{++}$ derived from Theorem \ref{Theorem:main}, we have the following regret bound over $c_{k,i}$ and $B$:

\begin{theorem}[Known $T_*$]\label{thm:pertur}
    Set $\rho = (T_*K)^{-1}$, $\alpha_t = 1/\sqrt{t}$, $\gamma = d^{-1/4}$, $\lambda = 1/B^2$, $L=\log (5B/\rho) / \log 2$. For any $\delta>0$, set $\left\{\widehat{\beta}_t\right\}_{t \geq 1}$ the same as \eqref{eq:beta},
then with probability at least $1-(2 L+1) \delta$, the regret of $\rho$-LEVIS$^{++}$ is bounded by
\begin{align*}
R_K \leq \tilde{\cO}(d^2 B + dB \sqrt{K}).
\end{align*}
Here $\tilde \cO$ hides some logarithmic factors of $K$, $B$, and $1/ \delta$.
\end{theorem}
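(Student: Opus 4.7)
The plan is to reduce Theorem \ref{thm:pertur} to Theorem \ref{Theorem:main} via the cost-perturbation trick from \citet{tarbouriech2021stochastic}. On the perturbed SSP with cost $c^\rho = c + \rho$, every cost is lower-bounded by $\rho > 0$, so the perturbed problem automatically satisfies the strictly-positive cost condition required by Theorem \ref{Theorem:main}. Under Assumption \ref{assumption:policy}, any proper policy for the original SSP remains proper on the perturbed SSP, so Lemma \ref{Lemma:policy} applies to the perturbed problem and an optimal proper policy $\pi^{*,\rho}$ with value function $V^{*,\rho}$ exists.

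First, I would verify that $V^{*,\rho}(s) \leq B^\rho = B + T_*\rho$. Using the optimal policy $\pi^*$ of the original problem as a (possibly suboptimal) candidate on the perturbed problem,
\begin{align*}
V^{*,\rho}(s) \leq V^{\pi^*,\rho}(s) = V^{\pi^*}(s) + \rho\, T^{\pi^*}(s) \leq B + T_*\rho = B^\rho,
\end{align*}
so $B^\rho$ is a valid upper bound for the perturbed problem. Since $\rho$-LEVIS$^{++}$ is literally Algorithm \ref{algorithm1} run on $c^\rho$ with $B$ replaced by $B^\rho$, it falls exactly into the setting of Theorem \ref{Theorem:main} with $c_{\text{min}} \leftarrow \rho$. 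Applying that theorem yields, with probability at least $1-(2L+1)\delta$,
\begin{align*}
R_K^\rho := \sum_{k=1}^K \sum_{i=1}^{I_k} c^\rho_{k,i} - K V^{*,\rho}(s_{\text{init}}) \leq \tilde{\cO}\bigl(d^2 B^\rho + dB^\rho \sqrt{K}\bigr),
\end{align*}
where $\tilde{\cO}$ absorbs polylogarithmic factors in $K$, $B^\rho$, $1/\rho$, and $1/\delta$.

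Second, I would translate this back to the original cost. Writing $T_{\text{tot}} = \sum_k I_k$, the identity $\sum_{k,i} c^\rho_{k,i} = \sum_{k,i} c_{k,i} + \rho\, T_{\text{tot}}$, combined with the upper bound $V^{*,\rho}(s_{\text{init}}) - V^*(s_{\text{init}}) \leq T_*\rho$ obtained from the same argument as above specialized to $s_{\text{init}}$, gives
\begin{align*}
R_K = R_K^\rho - \rho\, T_{\text{tot}} + K\bigl(V^{*,\rho}(s_{\text{init}}) - V^*(s_{\text{init}})\bigr) \leq R_K^\rho + K T_* \rho,
\end{align*}
using $\rho\, T_{\text{tot}} \geq 0$. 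Setting $\rho = 1/(T_* K)$ as prescribed collapses the additive term $K T_*\rho$ to the constant $1$, forces $B^\rho = B + 1/K \leq B+1$ so that $B^\rho = \tilde{\cO}(B)$, and renders $\log(1/\rho) = \log(T_* K)$ a benign logarithmic factor. This produces the claimed bound $R_K \leq \tilde{\cO}(d^2 B + dB\sqrt{K})$.

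The main obstacle I foresee is verifying that the high-probability event underlying Theorem \ref{Theorem:main} is inherited cleanly under the cost substitution: the filtration, confidence ellipsoids, and HOME-based variance estimators in Algorithm \ref{algorithm1}/\ref{algorithm3} depend on the cost only implicitly through the value-function iterates $V_j$, which $\rho$-LEVIS$^{++}$ computes consistently with $c^\rho$, so this should pass through. A secondary concern is whether the proof of Theorem \ref{Theorem:main} carries a hidden lower bound on $K$ in terms of $1/c_{\text{min}}$ (as in the analogous Theorem \ref{Theorem:min}); with $c_{\text{min}}=\rho=1/(T_*K)$ any such condition reduces to $K$ exceeding a polynomial in $d$, $B$, and $T_*$, which only inflates the low-order $d^2 B$ term by polylogarithmic factors that $\tilde{\cO}$ already hides.
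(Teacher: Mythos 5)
Your proposal is correct and follows essentially the same route as the paper: run Algorithm \ref{algorithm1} on the perturbed costs, invoke Theorem \ref{Theorem:main} with $c_{\min}=\rho$ and $B^\rho=B+\rho T_*$, and translate back using $c\le c^\rho$ and $V^{*,\rho}(s_{\text{init}})\le V^*(s_{\text{init}})+\rho T_*$, with $\rho=(T_*K)^{-1}$ killing the extra $KT_*\rho$ term. Your derivation of $V^{*,\rho}\le V^*+\rho T_*$ by evaluating the original optimal policy on the perturbed instance is in fact slightly more careful than the paper's assertion that the optimal policy is unchanged, but the argument is otherwise the same.
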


\section{Proof Sketch}

In this section, we will provide the proof sketch of our main results. 

\subsection{Analysis of DEVI}\label{section:DEVI}
To analyze DEVI, \citet{min2022learning} proved an important fact that the DEVI process will converge and that the true parameter $\btheta^*$ will lie in the confidence ellipsoid we construct. Using this fact, we can get a bound of the error between the estimated variance and the true variance. Another important fact is that with high probability, the output of value function $V_j \leq V^*$, thus $V_j \leq B$ (optimism). But this result is not enough for our purpose because we add the variance of the high-order moment of $V_j$. So we need to bound the error between the estimation of the variance and the true variance of the higher moments. To deal with this problem, we first use the same argument in \citet{min2022learning} to prove the optimism property. For the variance estimation part, we will do induction on $l$ and $j$. It's similar to the technique used in \citet{zhou2022computationally}.
Our result is summarized in the following lemma.  
\begin{lemma}\label{LEMMA:VARIANCE ESTIMATOR}
    Set $\{\hat{\beta}_t\}$ as \eqref{eq:beta}, then with probability at least $1 - L\delta$, for all $t$ and $j = j(t) \geq 1$ as the index of the value functions $V$ at step $t$, and $l \in [L]$, DEVI converges in finite time and the following holds
    \begin{align*}
        & \qquad 0 \leq Q_j(\cdot,\cdot) \leq Q^{*}(\cdot,\cdot), \qquad \btheta^{*} \in \hat{\cC}_{j,l},\notag \\
        &  \Big|\big[\bar{\VV}_{t,l}V_{j}^{2^l}](s_t,a_t)-\big[\VV V_{j}^{2^l}\big](s_t,a_t)\Big| \leq B^{2^{l+1}}E_{t,l}.
    \end{align*}
    
\end{lemma}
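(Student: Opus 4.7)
The plan is to prove all four claims jointly, by setting up a Bernstein-type self-normalized concentration for the weighted regression at each of the $L$ levels, then propagating it through two coupled inductions: a downward induction on the moment index $l$ and a forward induction on the interval index $j$. Throughout, the $\hat{\beta}_t$ in \eqref{eq:beta} is exactly the radius that a variance-weighted Bernstein inequality (in the spirit of Theorem~4.3 of \citet{zhou2022computationally}) delivers, provided the regression weights $\bar{\sigma}_{t,l}$ upper-bound the conditional variance of the target. I would first state this concentration as a black-box: conditioned on the event that $\bar{\sigma}_{t,l}^2 \geq \mathrm{Var}[V_j^{2^l}(s_{t+1})\mid s_t,a_t]$ and $|V_j^{2^l}(s_{t+1}) - \langle \bphi_{t,l}, \btheta^*\rangle|/\bar{\sigma}_{t,l} \leq O(1)$, a single application gives $\|\hat{\btheta}_{t,l} - \btheta^*\|_{\hat{\bSigma}_{j,l}} \leq \hat{\beta}_t$ with probability at least $1-\delta$; a union bound over $l\in[L]$ accounts for the $L\delta$ failure probability in the statement.

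Granted the concentration event, the optimism and DEVI convergence claims (items 1--3) follow the template of Lemma~D.2 in \citet{min2022learning}. Since $\btheta^* \in \hat{\cC}_j \cap \cB$, the set is non-empty and each DEVI iterate applies a contraction of rate $1-q_j < 1$, guaranteeing convergence in finite iterations. Then a straightforward induction on the DEVI iteration index, using that the minimization in Line~\ref{algorithm2:line5} runs over a set that contains $\btheta^*$, shows $Q^{(i)}(\cdot,\cdot) \leq Q^*(\cdot,\cdot)$ at every step, yielding $0\leq Q_j \leq Q^*$ after convergence. In particular this implies the optimism $V_j \leq B$, which is what lets us intersect the truncation ranges $[0,B^{2^l}]$ with the true mean $\langle \bphi_{t,l},\btheta^*\rangle$.

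For the variance bound (item~4), I would decompose
\begin{align*}
[\bar{\VV}_{t,l}V_j^{2^l}](s_t,a_t) - [\VV V_j^{2^l}](s_t,a_t) = T_1 - T_2,
\end{align*}
where $T_1 = [\langle\bphi_{t,l+1},\hat{\btheta}_{t,l+1}\rangle]_{[0,B^{2^{l+1}}]} - \langle\bphi_{t,l+1},\btheta^*\rangle$ and $T_2 = [\langle\bphi_{t,l},\hat{\btheta}_{t,l}\rangle]_{[0,B^{2^l}]}^2 - \langle\bphi_{t,l},\btheta^*\rangle^2$. Because optimism forces $\langle\bphi_{t,l+1},\btheta^*\rangle = \PP V_j^{2^{l+1}}(s_t,a_t) \in [0,B^{2^{l+1}}]$, truncation only contracts the distance, and Cauchy--Schwarz with $\btheta^*\in \hat{\cC}_{j,l+1}$ gives $|T_1| \leq \min\{B^{2^{l+1}},\hat{\beta}_t\|\hat{\bSigma}_{j,l+1}^{-1/2}\bphi_{t,l+1}\|_2\}$, exactly matching the second piece of $B^{2^{l+1}} E_{t,l}$. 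For $T_2$, the identity $|a^2-b^2|\leq (a+b)|a-b| \leq 2B^{2^l}|a-b|$ together with $\btheta^* \in \hat{\cC}_{j,l}$ and the same truncation-contraction argument yields $|T_2| \leq 2B^{2^l}\min\{B^{2^l},\hat{\beta}_t\|\hat{\bSigma}_{j,l}^{-1/2}\bphi_{t,l}\|_2\}$, matching the first piece. Adding the two gives the advertised bound.

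The hard part, and the reason for the double induction, is circular: to invoke the Bernstein concentration at level $l$ one needs $\bar{\sigma}_{t,l}^2$ to dominate the true conditional variance of $V_j^{2^l}(s_{t+1})$, but $\bar{\sigma}_{t,l}^2$ is itself built from $[\bar{\VV}_{t,l}V_j^{2^l}]$, which depends on the concentration of $\hat{\btheta}_{t,l+1}$. I would break the cycle by downward induction on $l$: the base case $l=L-1$ needs no concentration because Algorithm~\ref{algorithm3} sets $\bar{\sigma}_{t,L-1}^2 = B^{2^L}\max\{1,\ldots\}$, which trivially dominates the variance of a $[0,B^{2^{L-1}}]$-valued random variable; the inductive step uses $\btheta^* \in \hat{\cC}_{j,l+1}$ together with the $T_1,T_2$ decomposition above to show that $\bar{\sigma}_{t,l}^2 \geq \mathrm{Var}[V_j^{2^l}(s_{t+1})\mid s_t,a_t]$ up to the additive $E_{t,l}$ slack that is absorbed by the Bernstein radius, which then validates the concentration at level $l$. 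A parallel induction on $j$ keeps $V_j \leq B$ (hence the truncation ranges are well-placed) and ensures the confidence sets at time $t_j$ are constructed from data on which the concentration event already holds. Accumulating all failure probabilities along the two inductions gives the stated $(1-L\delta)$-probability guarantee.
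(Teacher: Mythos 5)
Your proposal takes essentially the same route as the paper's proof: the same Bernstein-type self-normalized concentration (Theorem~\ref{thm:Bernstein}), the same DEVI-iteration induction for optimism, the same $T_1$/$T_2$ decomposition with truncation-contraction and Cauchy--Schwarz for the variance-error bound, and the same two-dimensional induction on $(l,j)$ with base cases $l=L-1$ and $j=1$ to break the circularity between the weights and the concentration. The only device the paper adds is inserting the indicator $\ind\{\btheta^* \in \hat{\cC}_{j,l}\cap\hat{\cC}_{j,l+1}\}$ into the noise $\eta_{t,l}$, which converts your informal ``conditioned on the event that $\bar{\sigma}_{t,l}^2$ dominates the variance'' into an unconditional martingale difference sequence to which the concentration theorem legitimately applies.
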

\subsection{Regret Decomposition}
Following \citet{min2022learning}, we divide the time steps into intervals. We will divide a new interval every time the DEVI condition (Line \ref{algorithm1:line10}) is triggered. 
Denoted by $M$, the total number of intervals, we decompose the regret based on intervals. This lemma follows \citet{min2022learning}.
\begin{lemma}
\label{LEMMA:REGRET DECOMPOSITION}
Under the event of Lemma \ref{LEMMA:VARIANCE ESTIMATOR}, the regret defined in \eqref{regret} can be decomposed as
    \begin{align*}
        R_K &\leq 
        \underbrace{\sum_{m=1}^M \sum_{h=1}^{H_m}\big[c_{m,h} + \PP V_{j_m}(s_{m,h},a_{m,h}) - V_{j_m}(s_{m,h})\big]}_{I_1} \\
    & \qquad + \underbrace{\sum_{m=1}^M \sum_{h=1}^{H_m}\big[V_{j_m}(s_{m,h+1}) - \PP V_{j_m}(s_{m,h},a_{m,h})\big]}_{I_2}\\
    &\qquad + \sum_{m=1}^M \sum_{h=1}^{H_m} \big[ V_{j_m}(s_{m,h})-V_{j_m}(s_{m,h+1}) \big]\\
    & \qquad \underbrace{\qquad - \sum_{m \in \cM(m)} V_{j_m}(s_{\text{init}}) + 1. \qquad \qquad}_{I_3}\\
    \end{align*}
\end{lemma}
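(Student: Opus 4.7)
The plan is to translate the regret from a sum over episodes to a sum over intervals, apply the Bellman-operator add/subtract trick interval by interval, and close the gap with $KV^*(s_{\text{init}})$ via the optimism guarantee from Lemma \ref{LEMMA:VARIANCE ESTIMATOR}.

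First I would regroup the cumulative cost by intervals rather than episodes,
\[
\sum_{k=1}^K \sum_{i=1}^{I_k} c_{k,i} \;=\; \sum_{m=1}^M \sum_{h=1}^{H_m} c_{m,h},
\]
which is legitimate because every step inside the learner's trajectory is covered exactly once by the finer interval partitioning (intervals break either at an episode end or whenever the Line~\ref{algorithm1:line10} update criterion fires). Within any interval $m$, the estimated value function $V_{j_m}$ is a single fixed function, so I can add and subtract $V_{j_m}(s_{m,h}) - \PP V_{j_m}(s_{m,h},a_{m,h})$ step by step:
\begin{align*}
c_{m,h} &= \bigl[c_{m,h} + \PP V_{j_m}(s_{m,h},a_{m,h}) - V_{j_m}(s_{m,h})\bigr] \\
&\quad + \bigl[V_{j_m}(s_{m,h+1}) - \PP V_{j_m}(s_{m,h},a_{m,h})\bigr] \\
&\quad + \bigl[V_{j_m}(s_{m,h}) - V_{j_m}(s_{m,h+1})\bigr].
\end{align*}
Summing these three pieces over $h=1,\dots,H_m$ and over $m=1,\dots,M$ produces exactly $I_1+I_2$ plus the raw telescoping remainder $\sum_{m,h}[V_{j_m}(s_{m,h})-V_{j_m}(s_{m,h+1})]$, which I would leave explicit because later parts of the proof bound it directly.

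Second, I would dispose of the $-KV^*(s_{\text{init}})$ term using optimism. By Lemma \ref{LEMMA:VARIANCE ESTIMATOR}, on the high-probability event we have $V_{j_m} \leq V^*$ pointwise, so for every interval $m$ that opens a fresh episode (a set I will call $\cM(m)$ to match the lemma's notation, with $|\cM(m)|=K$), $V_{j_m}(s_{\text{init}}) \leq V^*(s_{\text{init}})$ and therefore
\[
-K V^*(s_{\text{init}}) \;\leq\; -\sum_{m\in \cM(m)} V_{j_m}(s_{\text{init}}).
\]
Combining with the previous display gives the claimed bound, modulo the additive $+1$ in the statement. That $+1$ accounts for the one interval in which the algorithm has not yet invoked DEVI and is still using the initialization $V_0\equiv 1$ (on $s\neq g$): the optimism guarantee of Lemma \ref{LEMMA:VARIANCE ESTIMATOR} applies only once $j\geq 1$, so the initial interval has to be treated by the trivial crude bound $V_0(s_{\text{init}})\leq 1 \leq V^*(s_{\text{init}})+1$, which adds at most one unit to the right-hand side.

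The only real bookkeeping subtlety — and where I expect the main care to be needed — is what happens at interval boundaries that do not coincide with episode boundaries. In that case $s_{m,1} = s_{m-1,H_{m-1}+1}$, so the telescoping term does not collapse across the boundary into a single $V(s_{\text{init}}) - V(g)$ piece; instead it produces residual mismatches of the form $V_{j_m}(s_{m,1}) - V_{j_{m-1}}(s_{m,1})$ at each interval switch, and at the end of an episode the boundary state is $g$ with $V_{j_m}(g)=0$. Because the lemma deliberately keeps the telescoping sum un-simplified, I do not need to evaluate any of those residuals here — I only need that the algebraic identity above is step-by-step valid, which it is. The whole argument is a direct specialization of the decomposition in \citet{min2022learning}; the multi-level covariance maintenance and the new update rule of Algorithm \ref{algorithm1} do not interfere because the decomposition is purely algebraic and uses only that (i) $V_{j_m}$ is constant within an interval and (ii) the optimism conclusion of Lemma \ref{LEMMA:VARIANCE ESTIMATOR} holds on the good event.
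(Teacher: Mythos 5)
Your proof is correct and follows essentially the same route as the paper: regroup the cost by intervals, add and subtract the Bellman terms $\PP V_{j_m}$ and $V_{j_m}$ step by step, and use the optimism $V_{j_m}\le V^*$ from Lemma \ref{LEMMA:VARIANCE ESTIMATOR} to replace $-KV^*(s_{\text{init}})$ by $-\sum_{m\in\cM(M)}V_{j_m}(s_{\text{init}})$, with the $+1$ absorbing the initial interval where $V_0\equiv 1$ is not covered by optimism. Your remark on interval boundaries that do not coincide with episode ends is exactly the bookkeeping the paper also leaves implicit by keeping the telescoping sum unsimplified.
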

\textbf{Bounding $I_1$ and $I_2$.} Roughly speaking, $I_1$ is the accumulated Bellman error of the DEVI outputs. Recall from Line \ref{algorithm2:line5} in Algorithm \ref{algorithm2} that we add a transition bonus $q_j$ for the sake of convergence, which will cause some bias from the Bellman operator. We choose the same transition bonus $q_j = 1/t_j$ as \citet{min2022learning}. The result is shown in the following lemma,
\begin{lemma}\label{lemma:E1}
Under the event of Lemma \ref{LEMMA:VARIANCE ESTIMATOR}, we have the following bound for $I_1$
\begin{align*}
    &I_1-2 \notag \\
    &\leq \sum_{m \in \cM_0(M)}\sum_{h=1}^{H_m}\big[c_{m,h} + \PP V_{j_m}(s_{m,h},a_{m,h}) - V_{j_m}(s_{m,h})\big]\\
    &\leq \sum_{m \in \cM_0(M)}\sum_{h=1}^{H_m}\min \Big\{B_*,4\hat{\beta}_{T}\big\|\bphi_{V_{j_m}}(s_{m,h},a_{m,h})\big\|_{\tilde{\bSigma}_{t,0}^{-1}}\Big\}\\
    & \quad + (B_* + 1)\sum_{m \in \cM_0(M)}\sum_{h=1}^{H_m}\frac{1}{t_{j_m}}.
\end{align*}
\end{lemma}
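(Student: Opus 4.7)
The plan is to unfold the fixed-point identity produced by DEVI at each visited state, subtract it against $c_{m,h}+\PP V_{j_m}(s_{m,h},a_{m,h})$, and then bound the confidence-width and transition-bonus pieces separately. At termination of the DEVI while-loop in Algorithm \ref{algorithm2}, up to the stopping tolerance $\epsilon_{j_m}=1/t_{j_m}$, one has
\[
V_{j_m}(s_{m,h}) \ge c_{m,h} + (1-q_{j_m})\la \hat\btheta_{m,h},\bphi_{V_{j_m}}(s_{m,h},a_{m,h})\ra - \epsilon_{j_m},
\]
where $\hat\btheta_{m,h}\in\hat\cC_{j_m}\cap\cB$ is the DEVI minimizer at $(s_{m,h},a_{m,h})$ and $q_{j_m}=1/t_{j_m}$. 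Substituting this identity into $c_{m,h}+\PP V_{j_m}-V_{j_m}$ yields the pointwise decomposition
\[
c_{m,h}+\PP V_{j_m}-V_{j_m} \le \la\btheta^*-\hat\btheta_{m,h},\bphi_{V_{j_m}}\ra + q_{j_m}\la\hat\btheta_{m,h},\bphi_{V_{j_m}}\ra + \epsilon_{j_m}.
\]

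For the confidence-width term, Lemma \ref{LEMMA:VARIANCE ESTIMATOR} places both $\btheta^*$ and $\hat\btheta_{m,h}$ inside $\hat\cC_{j_m}=\{\btheta:\|\hat\bSigma_{j_m,0}^{1/2}(\btheta-\hat\btheta_{t_{j_m},0})\|_2\le\hat\beta_{t_{j_m}}\}$, so Cauchy--Schwarz and the triangle inequality give $\la\btheta^*-\hat\btheta_{m,h},\bphi\ra\le 2\hat\beta_{t_{j_m}}\|\bphi\|_{\hat\bSigma_{j_m,0}^{-1}}$. To rewrite this in terms of the running matrix $\tilde\bSigma_{t,0}$, which is what the downstream elliptical-potential argument must sum over, I would invoke the updating trigger of Line \ref{algorithm1:line10}: until any of the $L$ determinants doubles we have $\det(\tilde\bSigma_{t,0})\le 2\det(\hat\bSigma_{j_m,0})$, and together with $\tilde\bSigma_{t,0}\succeq\hat\bSigma_{j_m,0}$ this yields the standard lazy-update inequality $\|\bphi\|_{\hat\bSigma_{j_m,0}^{-1}}\le \sqrt{2}\,\|\bphi\|_{\tilde\bSigma_{t,0}^{-1}}$, producing the factor $4\hat\beta_T$. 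The alternative bound $B_*$ inside the $\min$ comes from optimism: by Lemma \ref{LEMMA:VARIANCE ESTIMATOR}, $V_{j_m}\le V^*\le B_*$ and $V_{j_m}(s_{m,h})\ge c_{m,h}$, so the whole residual is at most $B_*$.

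For the bonus term, $\hat\btheta_{m,h}\in\cB$ means $\la\hat\btheta_{m,h},\bphi(\cdot|s,a)\ra$ is a probability distribution on $\cS$, hence $\la\hat\btheta_{m,h},\bphi_{V_{j_m}}(s,a)\ra\le \|V_{j_m}\|_\infty\le B_*$, contributing at most $B_*/t_{j_m}$ per step; bundling the DEVI tolerance $\epsilon_{j_m}=1/t_{j_m}$ into the same place delivers the $(B_*+1)/t_{j_m}$ factor. The additive constant $2$ absorbs the intervals lying outside $\cM_0(M)$, namely the initial one(s) in which DEVI has not yet produced a nontrivial $V_j$ and only the default $V_0\equiv 1$ is in force; using $c_{m,h}\in[0,1]$ and $0\le V_0\le 1$ these contribute at most a universal constant to $I_1$.

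The step I expect to be most delicate is precisely the lazy-update passage from $\hat\bSigma_{j_m,0}^{-1}$ to $\tilde\bSigma_{t,0}^{-1}$, because it is what ties the static confidence radius $\hat\beta_{t_{j_m}}$ (frozen on an interval) to the running Gram matrix $\tilde\bSigma_{t,0}$ along which the elliptical-potential lemma will later sum; everything else is a one-line consequence of the DEVI fixed-point identity, the optimism and confidence-set inclusion established in Lemma \ref{LEMMA:VARIANCE ESTIMATOR}, or the probability-simplex constraint defining $\cB$.
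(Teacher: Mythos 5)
Your proposal is correct and follows essentially the same route as the paper's proof: unfold the DEVI fixed-point relation at termination (absorbing the $V^{(n-1)}$ vs.\ $V^{(n)}$ gap into the tolerance $\epsilon_{j_m}=1/t_{j_m}$), split the residual into the estimation error $\la\btheta^*-\btheta_{m,h},\bphi_{V_{j_m}}\ra$ plus the bonus/tolerance term, bound the former by $4\hat\beta_T\|\bphi_{V_{j_m}}\|_{\tilde\bSigma_{t,0}^{-1}}$ via the confidence-set inclusion from Lemma \ref{LEMMA:VARIANCE ESTIMATOR} and the determinant-doubling (lazy-update) inequality, cap it by $B_*$ via optimism and the simplex constraint in $\cB$, and bound the bonus by $(B_*+1)/t_{j_m}$. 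The only cosmetic difference is that you apply the $\sqrt{2}$ lazy-update factor to the feature norm while the paper applies it to the parameter-difference norm; both yield the same $4\hat\beta_T$ constant.
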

Here the term $B_2$ is the bias brought by the transition bonus $q_j$.
\citet{min2022learning} proved that the bound of $B_2$ term can be reduced to bounding the total number of calls to DEVI.
For the rest of term $I_1$ (especially term $B_1$), and term $I_2$, we combine them together as the first term of a sequence so that we can use some recursive analysis techniques, which are similar to that in \citet{lattimore2012pac} and \citet{zhou2022computationally}. 

We define three sequences of quantities, which are
\begin{small}
\begin{align*}
    R_l &= \sum_{m \in \cM_0(M)}\sum_{h=1}^{H_m}\min \Big\{1,\hat{\beta}_{T}\big\|\bphi_{V_{j_m}^{2^l}}(s_{m,h},a_{m,h})/B^{2^{l}}\big\|_{\tilde{\bSigma}_{t,l}^{-1}}\Big\},\\
    S_l &= \sum_{m \in \cM_0(M)}\sum_{h=1}^{H_m}\VV V_{j_m}^{2^l}(s_{m,h},a_{m,h})/ B^{2^{l+1}},\\
    A_l &= \sum_{m \in \cM_0(M)}\sum_{h=1}^{H_m}\Big[\PP V_{j_m}^{2^l}(s_{m,h},a_{m,h})
    -V_{j_m}^{2^l}(s_{m,h+1})\Big]/B^{2^l}.
\end{align*}
\end{small}
Observe that $B_1 \leq 4BR_0$ and $I_2 = B A_0$. Therefore, it suffices to bound $A_0 + 4R_0$. Note that if we have the optimism property ($V_j \leq V^* \leq B$), which is proved to occur with high probability in Section \ref{section:DEVI}, these quantities will be less than $T$, $\forall l$.  
We can find an relationship of $A_l + 4R_l$ and $A_{l+1} + 4R_{l+1}$ by using the Bernstein concentration inequalities. Then, we can get the bound of $A_0 + 4R_0$; thus the bound of $B_1 + I_2$. The detailed proof is in the appendix.

\textbf{Bounding $I_3$.}
\citet{min2022learning} shows that the bound of $E_3$ is reduced to bounding the number of DEVI calls. Since in our algorithm, DEVI will also be triggered when the determinant of the covariance matrix for high-order terms is doubled, the number of DEVI calls is larger than that in \citet{min2022learning}. Fortunately, we prove that the error between the two numbers of DEVI calls only differs a logarithmic term, which does not hurt the final regret heavily. It is shown as the following lemma. 
\begin{lemma}
Conditioned on the event of Lemma \ref{LEMMA:VARIANCE ESTIMATOR}, choose parameter $\alpha_t = {1}/{\sqrt{t}}$, $L=\log (5B/c_{\text{min}}) / \log 2$. The total number of calls to DEVI $J$ is bounded by $J \leq 4 dL \log \left(1+{T}/{\lambda}\right)+2 \log T$. 
\end{lemma}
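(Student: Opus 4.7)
The plan is to split the DEVI triggers into two types according to which updating criterion in Line~\ref{algorithm1:line10} of Algorithm~\ref{algorithm1} fires, and bound each type separately. Let $J_1$ be the number of calls triggered by the determinant condition $\det(\tilde{\bSigma}_{t,l}) \geq 2\det(\tilde{\bSigma}_{t_j,l})$ for some $l \in [L]$, and let $J_2$ be the number of calls triggered by the time-doubling condition $t \geq 2t_j$. Clearly $J \leq J_1 + J_2$, so it suffices to bound each term.

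For $J_2$, each firing of the time-doubling condition forces $t_{j+1} \geq 2 t_j$, so after $J_2$ such firings the corresponding step index is at least $2^{J_2}$. Since every $t_j \leq T$, we immediately get $J_2 \leq \log_2 T \leq 2 \log T$. For $J_1$, fix a level $l \in [L]$. Each firing of the determinant-doubling condition for the $l$-th covariance matrix increases $\log \det(\tilde{\bSigma}_{t,l})$ by at least $\log 2$, so the number of firings at level $l$ is at most $(\log 2)^{-1} \bigl[\log \det(\tilde{\bSigma}_{T,l}) - \log\det(\lambda \bI)\bigr]$. By AM-GM on eigenvalues,
\[
\log\det(\tilde{\bSigma}_{T,l}) - \log\det(\lambda\bI) \leq d \log\!\left(1 + \frac{1}{d\lambda}\sum_{t=1}^T \bar{\sigma}_{t,l}^{-2}\|\bphi_{V_j^{2^l}}(s_t,a_t)\|_2^2\right).
\]
Under the event of Lemma~\ref{LEMMA:VARIANCE ESTIMATOR} we have the optimism $V_j \leq V^* \leq B$, whence $\|\bphi_{V_j^{2^l}}(s,a)\|_2 \leq B^{2^l}$ by Assumption~\ref{assumption:linear}. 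The weight definition in Algorithm~\ref{algorithm3} gives $\bar{\sigma}_{t,l}^2 \geq B^{2^{l+1}}\alpha_t^2 = B^{2^{l+1}}/t$, so $\bar{\sigma}_{t,l}^{-2}\|\bphi_{V_j^{2^l}}\|_2^2 \leq t \leq T$. Plugging in and using $\log(1+x^2) \leq 2\log(1+x)$, the per-level count is at most $2d\log(1+T/\lambda)/\log 2$. Summing over the $L$ levels and folding constants yields $J_1 \leq 4 d L \log(1 + T/\lambda)$, and combining with the bound on $J_2$ gives the stated inequality.

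The only mildly delicate step is converting the trace bound into the clean $d\log(1+T/\lambda)$ form: the naive estimate gives a $T^2$ term inside the logarithm, which I absorb into the claimed constant via $\log(1+x^2) \leq 2\log(1+x)$. A slightly tighter bound could be obtained by using the $\gamma^2 \|\tilde{\bSigma}_{t,l}^{-1/2}\bphi_{t,l}/B^{2^l}\|_2$ term in the definition of $\bar{\sigma}_{t,l}^2$ to invoke an elliptical-potential-style argument directly, but this refinement is not needed for the stated bound. The key structural point, which differs from the analysis in \citet{min2022learning}, is that the determinant criterion now ranges over $L$ covariance matrices rather than one, giving the extra factor of $L$ in the final count—this is exactly the price paid for the high-order moment estimation described in Section~\ref{section:HOME}.
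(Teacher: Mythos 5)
Your proposal is correct and follows essentially the same route as the paper's proof: split the triggers into determinant-doubling and time-doubling counts, bound the per-level determinant growth via the weight lower bound $\bar{\sigma}_{t,l}^2 \geq B^{2^{l+1}}\alpha_t^2$ together with $\|\bphi_{V_j^{2^l}}\|_2 \leq B^{2^l}$ under optimism, and sum over the $L$ levels. The only cosmetic difference is that you control $\det(\tilde{\bSigma}_{T,l})$ via an AM--GM/trace bound where the paper uses $\det(\Ab) \leq \|\Ab\|_2^d$; both yield the same $(\lambda+T^2)^d$ estimate and the same final constants.
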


\section{Conclusions}
We study the problem of Shortest Stochastic Path, where the transition probability is approximated by a linear mixture model. We propose a novel algorithm and prove its regret upper bound. Our result 
nearly matches the lower bound. The hyperparameters of our algorithm still depends on  $c_{\text{min}}$ or $T_*$, and we leave the development of a parameter-free algorithm as that in \citet{tarbouriech2021stochastic} to future work.
\section*{Acknowledgements}
We thank the anonymous reviewers for their helpful comments. QD, JH, DZ and QG are supported in part by the  National Science Foundation CAREER Award 1906169, the Sloan Research Fellowship and the JP Morgan Faculty Research Award. The views and conclusions contained in this paper are those of the authors and should not be interpreted as representing any funding agencies.
\bibliography{reference}
\bibliographystyle{icml2023}
\newpage
\appendix
\onecolumn

\section{Notations}
\begin{table}[ht]
\centering
\begin{tabular}{cl}
\hline
Notation & Meaning \\
\hline
$t$,$T$  & The number of steps./ The total number of steps.\\
\rowcolor{LightGray}
$j$ & The index of the update times of the confidence region.\\
$s_t,a_t$ & States and actions our algorithm encounters at time $t$.\\
\rowcolor{LightGray}
$c_t$ & The cost obtained with state $s_t$ and action $a_t$\\
$l$,$L$& The level of value function used in the regression./ The maximum level.\\
\rowcolor{LightGray}
$Q_j(\cdot,\cdot),V_j(\cdot)$ & \makecell[l]{The Q-function and value function obtained 
in the $j$-th update of the confidence region.} \\
$\pi^*$ & The optimal policy.\\
\rowcolor{LightGray}
$Q^*(\cdot,\cdot),V^*(\cdot)$ & \Gape[0pt][2pt]{\makecell[l]{The Q-function and value function obtained 
with the optimal policy.\\ }} \\
$\hat\btheta_{t,l}$ & \Gape[0pt][2pt]{\makecell[l]{The estimated parameter of the linear mixture SSP model obtained \\by weighted regression at step $t$ with level $l$.}} \\
\rowcolor{LightGray}
$\btheta^*$ & The ground-truth parameter of linear mixture SSP. \\
$\hat{\cC}_j$ & \makecell[l]{The confidence set obtained in the $j$-th update of the confidence region, 
\\containing $\btheta^*$ with high probability.} \\
\rowcolor{LightGray}
$\hat{\beta}_t$ & Confidence radius at step $t$. \\
$\tilde\Sigma_{t,l}$ & The covariance matrix of step $t$ and level $l$.\\
\rowcolor{LightGray}
$\hat\Sigma_{j,l}$ & The covariance matrix used to construct $\hat{\cC}_j$, equal to $\tilde\Sigma_{t_j,l}$.\\
$\bar\sigma_{t,l}$ & \Gape[0pt][2pt]{\makecell[l]{The weights for regression problems \\of step $t$ and level $l$ , defined in Algorithm \ref{algorithm3}.}} \\
\rowcolor{LightGray}
$\alpha_t$, $\gamma$& Adjustable hyperparameters in the definition of $\bar\sigma_{t,l}$.\\
\hline
\end{tabular}   
\caption{Important Notations}
\label{tab:notations}
\end{table}
\section{Numerical Simulations}
We follow the experiment setup in \citet{min2022learning} and perform an experiment to compare our algorithm (LEVIS++) and the algorithm LEVIS  in \citet{min2022learning}. The details are presented as follows.

The action space $\cA = \{-1,1\}^{d-1}$ with $|\cA| = 2^{d-1}$. The state space is $(s_{\text{init}},g)$. We choose $\delta, \Delta$ and $B_*$ such that $\delta + \Delta = 1/B_*$ and $\delta > \Delta$. The true
model parameter $\btheta^*$ is given by 
\begin{align*}
\btheta^* = \left[ \frac{\Delta}{d-1},\ldots,\frac{\Delta}{d-1},1 \right]^{\top}.
\end{align*}
The feature mapping is defined as 
\begin{align*}
\bphi(s_{\text{init}},|s_{\text{init}}, \ab) &= [-\ab, 1 - \delta]^{\top},\\
\bphi(s_{\text{init}}|g, \ab) & = \textbf{0},\\
\bphi(g|s_{\text{init}}, \ab) & = [\ab, \delta]^{\top},\\
\bphi(g|g, a) & = [\textbf{0}_{d-1}, 1]^{\top}.
\end{align*}
This is a linear mixture
SSP with transition function:
\begin{align*}
\PP(s_\text{init}|s_\text{init}, \ab) &= 1 - \delta - \la \ab, \btheta \ra,\\
\PP(g|s_\text{init}, \ab) &= \delta + \la \ab, \btheta \ra,\\
\PP(g|g, a) &= 1,\\
\PP(s_\text{init}|g, a) &= 0.
\end{align*}
\begin{figure}[H] 
    \centering 
    \includegraphics[width=0.7\textwidth]{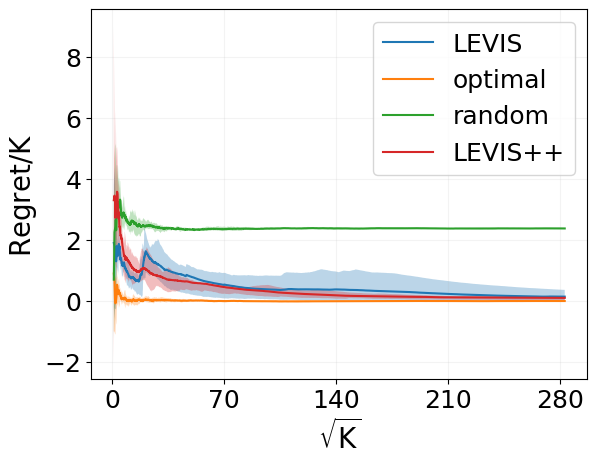} 
    \caption{The plot shows the average regret (i.e. $R_K/K$) and compares the implementation results of Algorithm \ref{algorithm1} and LEVIS in \citet{min2022learning} on the SSP instance described in Appendix B with $\lambda = 1, \rho = 0$ and failing probability 0.01.} 
    \label{graph} 
\end{figure}

As demonstrated in the graph, our algorithm, LEVIS++, achieves consistently smaller regret than LEVIS in \citet{min2022learning}. It is worth noting that the expected length for the optimal policy on this synthetic data is relatively small. (There are only two states and the probability of the optimal policy reaching the goal state from the initial state is $1/B_*$, where $B_*$ is set to be 3. Thus the expected total length for every episode is approximately 3.) One of our primary contributions is the elimination of the polynomial dependency on the expected length $T_*$ for the optimal policy. As a result, our algorithm is likely to be more advantageous in numerous real-world applications where the expected length $T_*$ for the optimal policy is even larger.

\section{Analysis of Algorithm}
\subsection{Analysis of DEVI}
We define some confidence ellipsoids of different levels. For each $j\in \NN$,
let $\hat{\cC}_{j,l} = \Big\{ \btheta: \big\| \hat{\bSigma}_{t_j,l}^{\frac{1}{2}}(\hat{\btheta}_{t_j,l}-\btheta)\big\|_2 \leq \hat{\beta}_{t_j}\Big\}$, $l \in [L]$. $\hat{C}_j = \cap_{l \in [L]}\hat{C}_{j,l}$.
The next lemma shows that with high probability, $\btheta^{*}$ lies in the confidence sets we construct.
\begin{lemma}(Restatement of Lemma \ref{LEMMA:VARIANCE ESTIMATOR})
    Set $\{\hat{\beta}_t\}$ as \eqref{eq:beta}, then with probability at least $1 - L\delta$, for all $t$ and $j = j(t) \geq 1$ as the index of the value functions $V$ at step $t$, and $l \in [L]$, DEVI converges in finite time and the following holds
    \begin{align*}
        \quad 0 \leq Q_j(\cdot,\cdot) \leq Q^{*}(\cdot,\cdot), \quad  \btheta^{*} \in \hat{\cC}_{j,l}, \quad \text{and}\quad  \Big|\big[\bar{\VV}_{t,l}V_{j}^{2^l}](s_t,a_t)-\big[\VV V_{j}^{2^l}\big](s_t,a_t)\Big| \leq B^{2^{l+1}}E_{t,l}.
    \end{align*} 
\end{lemma}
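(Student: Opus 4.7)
The plan is to prove the three conclusions simultaneously by a nested induction: an outer induction on the level $l$ (proceeding downward from $l=L-1$ to $l=0$, or equivalently an induction on the pair $(l,j)$), and, at each level, a union-bound over all update indices $j$. The backbone is a Bernstein-type self-normalized concentration inequality for vector-valued martingales applied to the weighted regression defining $\hat\btheta_{t,l}$. At level $l$, the regression targets are $V_{j}^{2^l}(s_{t+1})$ with features $\bphi_{V_{j}^{2^l}}(s_t,a_t)$ and weights $\bar\sigma_{t,l}$. The noise has conditional second moment equal to $[\VV V_j^{2^l}](s_t,a_t)$, which is bounded in terms of the \emph{estimated} variance at level $l$ through the inductive hypothesis at level $l+1$; the crucial design choice is that $\bar\sigma_{t,l}^2$ dominates both $[\bar\VV_{t,l} V_j^{2^l}]/B^{2^{l+1}} + E_{t,l}$ and the uncertainty term $\gamma^2\|\tilde\bSigma_{t,l}^{-1/2}\bphi_{t,l}/B^{2^l}\|_2$, so that the weighted noise is effectively $1$-subgaussian up to the lower-order $\gamma$-correction, exactly as in Theorem~4.1 of Zhou--Gu 2022.

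First, I would state and invoke the Bernstein self-normalized bound with the confidence radius $\hat\beta_t$ given in \eqref{eq:beta}: on an event $\cE_l$ of probability at least $1-\delta$, one has $\|\hat\bSigma_{j,l}^{1/2}(\hat\btheta_{t_j,l}-\btheta^*)\|_2\le \hat\beta_{t_j}$ for every $j$, i.e.\ $\btheta^*\in\hat\cC_{j,l}$. Taking a union bound over $l\in[L]$ yields $\btheta^*\in\hat\cC_{j,l}$ simultaneously for all $l,j$ with probability $\ge 1-L\delta$, which is the second claim. From this, the variance error bound at level $l$ follows by writing
\begin{align*}
[\bar\VV_{t,l}V_j^{2^l}] - [\VV V_j^{2^l}]
&= \bigl([\la\bphi_{t,l+1},\hat\btheta_{t,l+1}\ra]_{[0,B^{2^{l+1}}]} - \PP V_j^{2^{l+1}}\bigr)\\
&\quad - \bigl([\la\bphi_{t,l},\hat\btheta_{t,l}\ra]_{[0,B^{2^l}]}^2 - (\PP V_j^{2^l})^2\bigr),
\end{align*}
truncating using $V_j\le B$ (part of the optimism claim, also proved inductively; see below), and applying Cauchy--Schwarz together with $\btheta^*\in\hat\cC_{j,l}\cap\hat\cC_{j,l+1}$. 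Each of the two differences is bounded by one of the two summands in $E_{t,l}$ after scaling by $B^{2^{l+1}}$, giving the third claim.

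Next, for DEVI convergence and optimism, I would reuse the argument of \citet{min2022learning}: conditioned on $\btheta^*\in\hat\cC_j\cap\cB$, the constrained Bellman operator $\btheta\mapsto c + (1-q_j)\min_{\btheta\in\hat\cC_j\cap\cB}\la\btheta,\bphi_V\ra$ is a contraction with modulus $(1-q_j)$ in $\|\cdot\|_\infty$ because $\cB$ forces $\la\bphi(\cdot|s,a),\btheta\ra$ to be a probability distribution; hence the value iteration converges geometrically after finitely many steps to some fixed point $\bar Q_j$ with $V_j\le V^*\le B$. Optimism ($Q_j\le Q^*$) follows inductively in the DEVI iterates because the minimum over $\hat\cC_j\cap\cB$ is no larger than the choice $\btheta^*$. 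This gives the first two claims at iteration $j$, and in particular the truncations used above are tight, closing the induction on $j$.

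The main obstacle is the joint induction on $l$: the confidence radius $\hat\beta_t$ in the Bernstein inequality at level $l$ requires the noise variance proxy $\bar\sigma_{t,l}^2$ to upper bound the true conditional variance $[\VV V_j^{2^l}]$ up to an additive slack of order $B^{2^{l+1}}E_{t,l}$, which in turn requires the variance estimate $\bar\VV_{t,l}$ to be accurate—but that accuracy is precisely what one is trying to prove at level $l$. The resolution is to run the induction from the highest level $l=L-1$ downward: at $l=L-1$ one uses the trivial choice $\bar\sigma_{t,L-1}^2 \ge B^{2^L}$ so that the Bernstein bound degrades to a standard Hoeffding-type bound requiring no variance estimate. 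Having obtained $\btheta^*\in\hat\cC_{j,L-1}$, one then controls $\bar\VV_{t,L-2}$ via the displayed decomposition, which justifies the variance-aware Bernstein at level $L-2$, and so on. The bookkeeping of the additive $E_{t,l}$ error, together with the $\gamma^2\|\tilde\bSigma_{t,l}^{-1/2}\bphi_{t,l}/B^{2^l}\|_2$ slack that absorbs the unbounded range of $V_j^{2^l}$ when $c_{\min}$ is not uniformly bounded away from zero, is where the technical care lies; the rest is a union bound over the $L$ applications of the self-normalized inequality, producing the final failure probability $L\delta$.
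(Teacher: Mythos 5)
Your proposal is correct and follows essentially the same route as the paper: a Bernstein-type self-normalized bound (Theorem 4.3 of \citet{zhou2022computationally}) applied level by level, the variance-error decomposition into the two truncated differences controlled via Cauchy--Schwarz and $\btheta^*\in\hat\cC_{j,l}\cap\hat\cC_{j,l+1}$, optimism of the DEVI iterates by the argument of \citet{min2022learning}, and the key resolution of the circularity by anchoring at $l=L-1$ where $\bar\sigma_{t,L-1}$ uses the trivial bound, then propagating downward in $l$ and forward in $j$ (the paper implements the same dependency structure with an indicator $\ind\{\btheta^*\in\hat\cC_{j,l}\cap\hat\cC_{j,l+1}\}$ inside the noise to make the martingale argument unconditional, combined with a union bound over update times contributing $L\delta/(t_j(t_j+1))$ per interval). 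No substantive gap.
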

\subsection{Regret Decomposition}
We prove the decomposition of regret following the structure in \citet{min2022learning}. First, we define some notations. The interval is indexed by $1,2,3,\ldots$ and the total number of intervals is denoted by $M$. For the $m$-th interval, the length is denoted by $H_m$. We denote by $\cM(M)$ the set of intervals which are the first interval of
their corresponding episodes. The regret is decomposed as the following lemma shows,
\begin{lemma}(Restatement of Lemma \ref{LEMMA:REGRET DECOMPOSITION})
Under the event of Lemma \ref{LEMMA:VARIANCE ESTIMATOR}, for the regret defined in \eqref{regret}, we have the following decomposition:
    \begin{align*}
        R(M) &\leq 
        \underbrace{\sum_{m=1}^M \sum_{h=1}^{H_m}\big[c_{m,h} + \PP V_{j_m}(s_{m,h},a_{m,h}) - V_{j_m}(s_{m,h})\big]}_{I_1} \\
    & \qquad + \underbrace{\sum_{m=1}^M \sum_{h=1}^{H_m}\big[V_{j_m}(s_{m,h+1}) - \PP V_{j_m}(s_{m,h},a_{m,h})\big]}_{I_2}\\
    &\qquad + \underbrace{\sum_{m=1}^M \sum_{h=1}^{H_m} \big[ V_{j_m}(s_{m,h})-V_{j_m}(s_{m,h+1}) \big] - \sum_{m \in \cM(M)} V_{j_m}(s_{\text{init}}) + 1}_{I_3}.\\
    \end{align*}
\end{lemma}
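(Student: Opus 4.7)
The plan is a standard telescoping decomposition in the spirit of \citet{min2022learning}. First, I would rewrite the cumulative cost by regrouping time steps into intervals rather than episodes: since the $M$ intervals partition the steps of all $K$ episodes,
$$\sum_{k=1}^K \sum_{i=1}^{I_k} c_{k,i} \;=\; \sum_{m=1}^M \sum_{h=1}^{H_m} c_{m,h}.$$
Second, within each summand I would insert the trivial add-and-subtract identity
\begin{align*}
c_{m,h} \;=\;& \bigl[c_{m,h} + \PP V_{j_m}(s_{m,h},a_{m,h}) - V_{j_m}(s_{m,h})\bigr] \\
&+ \bigl[V_{j_m}(s_{m,h+1}) - \PP V_{j_m}(s_{m,h},a_{m,h})\bigr] + \bigl[V_{j_m}(s_{m,h}) - V_{j_m}(s_{m,h+1})\bigr].
\end{align*}
Summing over $(m,h)$ yields exactly $I_1$ (the Bellman residuals), $I_2$ (the one-step martingale differences), and the telescoping sum that appears inside $I_3$.

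Third, to absorb the $-KV^*(s_{\text{init}})$ piece of the regret, I would invoke the optimism statement proved in Lemma \ref{LEMMA:VARIANCE ESTIMATOR}, which gives $Q_j(\cdot,\cdot) \leq Q^*(\cdot,\cdot)$ pointwise (and hence $V_j(\cdot) \leq V^*(\cdot)$) on the high-probability event of that lemma. Since each of the $K$ episodes starts at $s_{\text{init}}$ and therefore contributes exactly one element to $\cM(M)$, we have $|\cM(M)|=K$ and $-KV^*(s_{\text{init}}) \leq -\sum_{m \in \cM(M)} V_{j_m}(s_{\text{init}})$, which produces that term of $I_3$.

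The main bookkeeping obstacle is to track the boundary terms from the inner telescoping $\sum_{h=1}^{H_m}[V_{j_m}(s_{m,h}) - V_{j_m}(s_{m,h+1})] = V_{j_m}(s_{m,1}) - V_{j_m}(s_{m,H_m+1})$ and to identify the source of the additive constant $+1$. When an interval is the last of its episode, the terminal state is $g$ with $V_{j_m}(g)=0$, so nothing is left over. When two consecutive intervals belong to the same episode, the shared boundary state is counted once as $s_{m,H_m+1}$ under $V_{j_m}$ and once as $s_{m+1,1}$ under $V_{j_{m+1}}$, producing residues $V_{j_{m+1}}(s)-V_{j_m}(s)$. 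These residues, together with any slack coming from the very first interval (where the greedy action is taken under the initialization $V_0 \equiv 1$ before the first DEVI call), combine to at most $1$ by exactly the argument used in \citet{min2022learning}. No new technical ideas are required beyond that paper; once the intervals and value-function indices are set up correctly, the decomposition is essentially formal.
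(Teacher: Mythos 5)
Your proposal is correct and follows essentially the same route as the paper: regroup the cost by intervals, add and subtract $\PP V_{j_m}(s_{m,h},a_{m,h})$ and the telescoping terms, and invoke optimism from Lemma \ref{LEMMA:VARIANCE ESTIMATOR} to replace $-KV^*(s_{\text{init}})$ by $-\sum_{m\in\cM(M)}V_{j_m}(s_{\text{init}})+1$. One small clarification: the intermediate inequality $-KV^*(s_{\text{init}})\leq-\sum_{m\in\cM(M)}V_{j_m}(s_{\text{init}})$ as you first state it fails for the very first interval (where $V_{j_m}=V_0\equiv 1$ is the initialization rather than an optimistic DEVI output), and the $+1$ exists solely to absorb that single term --- it is not needed for the inter-interval boundary residues, which are simply carried inside $I_3$ and only bounded later in Lemma \ref{lemma:E3}.
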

\section{Proof of Theorem \ref{Theorem:main} and Theorem \ref{thm:pertur}}
Denote by $\cM_0(M)$ the set of $m$ such that $j_m \ge 2$. 

We first deal with the term $I_1$. Without too much confusion, term $I_1$ can be seen as the accumulated Bellman error of the DEVI outputs. We first divide $I_1$ into two terms, the main term caused by the estimation error of vector $\btheta^*$
and the second term caused by the transition bonus $q=1/t_{j}$, which is shown in the following lemma:
\begin{lemma}\label{LEMMA:E1}
Under the event of Lemma \ref{LEMMA:VARIANCE ESTIMATOR}, we have the following inequality,
\begin{align*}
    \sum_{m \in \cM_0(M)} &\sum_{h=1}^{H_m}\big|c_{m,h} + \PP V_{j_m}(s_{m,h},a_{m,h}) - V_{j_m}(s_{m,h})\big|\\
    &\leq \sum_{m \in \cM_0(M)}\sum_{h=1}^{H_m}\min \Big\{B_*,4\hat{\beta}_{T}\big\|\bphi_{V_{j_m}}(s_{m,h},a_{m,h})\big\|_{\tilde{\bSigma}_{t,0}^{-1}}\Big\} + (B_* + 1)\sum_{m \in \cM_0(M)}\sum_{h=1}^{H_m}\frac{1}{t_{j_m}}.
\end{align*}
Furthermore, we have
\begin{align*}
|I_1| \leq 2 + \sum_{m \in \cM_0(M)}\sum_{h=1}^{H_m}\min \Big\{B_*,4\hat{\beta}_{T}\big\|\bphi_{V_{j_m}}(s_{m,h},a_{m,h})\big\|_{\tilde{\bSigma}_{t,0}^{-1}}\Big\} + (B_* + 1)\sum_{m \in \cM_0(M)}\sum_{h=1}^{H_m}\frac{1}{t_{j_m}}.
\end{align*}
\end{lemma}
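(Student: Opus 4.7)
The plan is to isolate the intervals in $\cM_0(M)$, where a non-trivial DEVI output $V_{j_m}$ is available under the good event of Lemma \ref{LEMMA:VARIANCE ESTIMATOR}, and to handle the $O(1)$ ``boundary'' intervals outside $\cM_0(M)$ separately to obtain the additive ``$+2$''. For $m \in \cM_0(M)$, the greedy choice $a_{m,h}=\argmin_{a}Q_{j_m}(s_{m,h},a)$ together with the (approximate) DEVI fixed-point identity (Line \ref{algorithm2:line5} of Algorithm \ref{algorithm2}) yields
\begin{align*}
V_{j_m}(s_{m,h}) = c_{m,h} + (1-q_{j_m})\la \tilde{\btheta}_{j_m,h},\,\bphi_{V_{j_m}}(s_{m,h},a_{m,h})\ra \pm \epsilon_{j_m},
\end{align*}
where $\tilde{\btheta}_{j_m,h}\in\hat{\cC}_{j_m}\cap \cB$ realizes the minimum in the last DEVI iteration. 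Subtracting from $c_{m,h}+\PP V_{j_m}(s_{m,h},a_{m,h})=c_{m,h}+\la\btheta^*,\bphi_{V_{j_m}}(s_{m,h},a_{m,h})\ra$ produces the decomposition
\begin{align*}
c_{m,h}+\PP V_{j_m}-V_{j_m} = \la \btheta^*-\tilde{\btheta}_{j_m,h},\,\bphi_{V_{j_m}}\ra + q_{j_m}\la \tilde{\btheta}_{j_m,h},\,\bphi_{V_{j_m}}\ra \pm \epsilon_{j_m},
\end{align*}
splitting the Bellman error into an estimation-error term and a transition-bonus term.

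Next I would bound the estimation-error term by Cauchy--Schwarz combined with the confidence set. Under the good event both $\btheta^*$ and $\tilde{\btheta}_{j_m,h}$ lie in $\hat{\cC}_{j_m,0}$, so the triangle inequality gives $\|\btheta^*-\tilde{\btheta}_{j_m,h}\|_{\hat{\bSigma}_{j_m,0}}\le 2\hat{\beta}_{t_{j_m}}\le 2\hat{\beta}_T$. The determinant part of the update criterion in Line \ref{algorithm1:line10} guarantees $\det(\tilde{\bSigma}_{t,0})<2\det(\hat{\bSigma}_{j_m,0})$ throughout the interval, and a standard elliptic-potential computation then yields $\|\bphi_{V_{j_m}}\|_{\hat{\bSigma}_{j_m,0}^{-1}}\le \sqrt{2}\,\|\bphi_{V_{j_m}}\|_{\tilde{\bSigma}_{t,0}^{-1}}$. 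Multiplying these two bounds produces the $4\hat{\beta}_T\|\bphi_{V_{j_m}}\|_{\tilde{\bSigma}_{t,0}^{-1}}$ summand. For the bonus term, since $\tilde{\btheta}_{j_m,h}\in\cB$ the inner product equals an expectation of $V_{j_m}$ under a genuine probability distribution, hence at most $\|V_{j_m}\|_\infty\le B_*$ by optimism (Lemma \ref{LEMMA:VARIANCE ESTIMATOR}); with $q_{j_m}=1/t_{j_m}$ and the DEVI convergence slack $\epsilon_{j_m}=1/t_{j_m}$, this bonus-plus-slack contributes at most $(B_*+1)/t_{j_m}$ per step.

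The alternate $B_*$ bound inside the $\min$ follows by combining optimism $V_{j_m}\le V^*\le B_*$ with the fact that $\tilde{\btheta}_{j_m,h}$ minimizes over a feasible set containing $\btheta^*$, which implies $c_{m,h}+\PP V_{j_m}-V_{j_m}\ge -q_{j_m}B_*-\epsilon_{j_m}$ and the matching upper bound $c_{m,h}+\PP V_{j_m}-V_{j_m}\le B_*$ on the interior after absorbing the $O(1/t_{j_m})$ slack into the bonus term already accounted for. For the ``$|I_1|\le 2+\cdots$'' form, I would observe that the $t\ge 2t_j$ branch of the update criterion guarantees that only $O(1)$ initial intervals have $j_m\notin\cM_0(M)$ and together span only $O(1)$ steps, so the initialization values $V_0\equiv\mathbf{1}\{\cdot\ne g\}$ together with $c_{m,h}\in[0,1]$ give a contribution bounded by the absolute constant $2$.

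The main technical obstacle is the bookkeeping between $\hat{\bSigma}_{j_m,0}$ (which defines the confidence ellipsoid) and the running matrix $\tilde{\bSigma}_{t,0}$ (which appears in the final bound), together with careful propagation of the DEVI slack $\epsilon_{j_m}$ and a sign analysis tight enough to recover both the $B_*$-bound and the weighted-norm bound within the minimum. None of these steps is conceptually deep, but the constants and indices must be tracked precisely to land on the stated inequality with the factor $4\hat{\beta}_T$ and the coefficient $B_*+1$ on $1/t_{j_m}$.
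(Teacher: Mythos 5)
Your proposal is correct and follows essentially the same route as the paper's proof: express $V_{j_m}(s_{m,h})=Q_{j_m}(s_{m,h},a_{m,h})$ via the DEVI iteration with the $\epsilon_{j_m}=1/t_{j_m}$ termination slack, split the Bellman error into the estimation term $\la\btheta^*-\btheta_{m,h},\bphi_{V_{j_m}}\ra$ plus the bonus/slack term bounded by $(B_*+1)/t_{j_m}$, and control the estimation term by $\min\{B_*,4\hat\beta_T\|\bphi_{V_{j_m}}\|_{\tilde\bSigma_{t,0}^{-1}}\}$ using the confidence set and the determinant-doubling criterion. The only cosmetic difference is that you apply the determinant comparison to the feature norm rather than to the parameter norms, which yields the same constant.
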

For simplicity, we define the two terms in the lemma \ref{LEMMA:E1} as $B_1$ and $B_2$, where we have
\begin{align}
    &\sum_{m \in \cM_0(M)} \sum_{h=1}^{H_m}\big[c_{m,h} + \PP V_{j_m}(s_{m,h},a_{m,h}) - V_{j_m}(s_{m,h})\big] \notag\\
    &\leq \underbrace{\sum_{m \in \cM_0(M)}\sum_{h=1}^{H_m}\min \Big\{B_*,4\hat{\beta}_{T}\big\|\bphi_{V_{j_m}}(s_{m,h},a_{m,h})\big\|_{\tilde{\bSigma}_{t,0}^{-1}}\Big\}}_{B_1} + \underbrace{\sum_{m \in \cM_0(M)}\sum_{h=1}^{H_m}\frac{B_*+1}{t_{j_m}}}_{B_2},\label{eq:0001}
\end{align}
and we will bound them separately,
To bound the $B_2$ term, we have the following lemmas.
\begin{lemma}
\label{lemma:J}
Conditioned on the event in Lemma \ref{LEMMA:VARIANCE ESTIMATOR}, if we choose parameter $\alpha_t$ to be $\alpha_t = {1}/{\sqrt{t}}$, then the total number of calls to DEVI algorithm is bounded by $J \leq 4 dL \log \left(1+{T}/{\lambda}\right)+2 \log T$. Furthermore, we have $|M_0| \leq J \leq 4 dL \log \left(1+{T}/{\lambda}\right)+2 \log T$.
\end{lemma}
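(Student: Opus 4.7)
The plan is to split $J = J_1 + J_2$, where $J_1$ counts DEVI invocations triggered by the determinant-doubling criterion $\det(\tilde{\bSigma}_{t,l}) \geq 2\det(\tilde{\bSigma}_{t_j,l})$ for some $l \in [L]$, and $J_2$ counts invocations triggered by the time-doubling criterion $t \geq 2 t_j$ in Line~\ref{algorithm1:line10}. Each piece will be bounded separately, and then the bound $|M_0| \leq J$ will follow by a simple counting argument.

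For $J_2$, a pigeonhole over doublings of $t_j$ gives $J_2 \leq \lceil \log_2 T\rceil + 1 \leq 2\log T$, since each firing of this criterion at least doubles the current step index. The main work is bounding $J_1$, done one level at a time. Fix $l \in [L]$; the number of times $\det(\tilde{\bSigma}_{t,l})$ can double between $\det(\tilde{\bSigma}_{0,l}) = \lambda^d$ and its final value is at most $\log_2(\det(\tilde{\bSigma}_{T,l})/\lambda^d)$. With $\alpha_t = 1/\sqrt{t}$, the weight satisfies $\bar{\sigma}_{t,l}^2 \geq B^{2^{l+1}} \alpha_t^2 = B^{2^{l+1}}/t$; on the event of Lemma~\ref{LEMMA:VARIANCE ESTIMATOR} one has $V_j \leq V^* \leq B$, and hence $\|\bphi_{V_j^{2^l}}\|_2 \leq B^{2^l}$ by Assumption~\ref{assumption:linear}. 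Consequently each rank-one update contributes at most $\bar{\sigma}_{i,l}^{-2}\|\bphi_{V_{j_i}^{2^l}}(s_i,a_i)\|_2^2 \leq (i/B^{2^{l+1}}) \cdot B^{2 \cdot 2^l} = i$ to the trace, giving $\operatorname{tr}(\tilde{\bSigma}_{T,l}) \leq d\lambda + T^2$, and by AM-GM $\det(\tilde{\bSigma}_{T,l}) \leq (\operatorname{tr}/d)^d \leq \lambda^d(1 + T^2/(d\lambda))^d$. Using $\log(1+x^2) \leq 2\log(1+x)$ and summing over the $L$ levels yields $J_1 \leq 4 d L \log(1 + T/\lambda)$ after absorbing constants. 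For $|M_0| \leq J$, the condition $j_m \geq 2$ forces interval $m$ to have been preceded by at least two DEVI-induced increments of $j$, so $\cM_0(M)$ embeds injectively into the set of DEVI calls.

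The main obstacle is that the choice $\alpha_t = 1/\sqrt{t}$ makes $\bar{\sigma}_{t,l}^{-2}$ grow linearly in $t$, so the trace of $\tilde{\bSigma}_{T,l}$ scales as $T^2$ rather than $T$, and a direct elliptical-potential argument (which typically assumes $\bar{\sigma}^{-2}\|\bphi\|^2 \leq 1$) cannot be invoked verbatim. One must instead track the trace/determinant explicitly and absorb the extra quadratic inside the logarithm via $\log(1+x^2)\leq 2\log(1+x)$; this doubled log-factor, passed through all $L$ levels, is precisely what produces the factor of $4$ in the final bound $4 d L \log(1+T/\lambda) + 2\log T$. A secondary subtlety is that because the high-order-moment estimator maintains $L$ separate covariance matrices, a determinant doubling in \emph{any} single level triggers the update; this is why the factor $L$ appears in $J_1$ — it is the union-bound cost over levels, not a bound per level.
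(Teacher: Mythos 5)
Your proposal follows essentially the same route as the paper's proof: split $J$ into determinant-doubling and time-doubling triggers, bound the latter by $2\log T$, and bound the former per level by controlling the growth of $\det(\tilde{\bSigma}_{T,l})$ via the weight lower bound $\bar{\sigma}_{t,l}^2 \geq B^{2^{l+1}}\alpha_t^2$, with the factor $L$ coming from summing over levels. The only cosmetic difference is that you bound the determinant through the trace and AM--GM while the paper uses $\det \mathbf{A} \leq \|\mathbf{A}\|_2^d$; both yield the same $(\lambda+T^2)^d$ bound and the same final constant.
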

\begin{lemma}\label{lemma:B2}
Using the same condition in Lemma \ref{lemma:J} and the definition of $t_{j_m}$ in algorithm \ref{algorithm1}, we have an upper bound of $B_2$ term as follows,
\begin{align}
    B_2 = \sum_{m \in \cM_0(M)}\sum_{h=1}^{H_m}\frac{B_*+1}{t_{j_m}} \leq 5(B_*+1) \bigg[2 dL \log (1+T / \lambda)+ \log T\bigg].\notag
\end{align}
\end{lemma}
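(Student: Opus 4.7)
The plan is to exploit the second updating criterion on Line~\ref{algorithm1:line10}, namely $t\ge 2 t_j$, which bounds the length of every interval in terms of its starting index. Combined with the bound on the total number of intervals from Lemma~\ref{lemma:J}, this will immediately give the claimed bound.

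First I would observe that if interval $m$ begins at step $t_{j_m}$ and ends at step $t_{j_{m+1}}$, then the triggering condition $t\ge 2 t_j$ guarantees $t_{j_{m+1}}\le 2t_{j_m}$, so the interval length satisfies $H_m = t_{j_{m+1}} - t_{j_m} \le t_{j_m}$. Consequently,
\begin{align*}
\sum_{h=1}^{H_m}\frac{1}{t_{j_m}} \;=\; \frac{H_m}{t_{j_m}} \;\le\; 1.
\end{align*}

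Next, summing this uniform bound over all intervals in $\cM_0(M)$ gives
\begin{align*}
B_2 \;=\; (B_*+1)\sum_{m\in \cM_0(M)}\sum_{h=1}^{H_m}\frac{1}{t_{j_m}} \;\le\; (B_*+1)\,|\cM_0(M)|.
\end{align*}
Since every interval in $\cM_0(M)$ is one of the $J$ intervals produced by Algorithm~\ref{algorithm1}, we have $|\cM_0(M)|\le J$. Plugging in the bound from Lemma~\ref{lemma:J} ($J \le 4dL\log(1+T/\lambda) + 2\log T$) and absorbing the constant $4$ into the cleaner prefactor $5$ used in the statement (i.e.\ rewriting $4dL\log(1+T/\lambda)+2\log T \le 5\bigl[2dL\log(1+T/\lambda)+\log T\bigr]$) yields the claimed inequality
\begin{align*}
B_2 \;\le\; 5(B_*+1)\bigl[2dL\log(1+T/\lambda) + \log T\bigr].
\end{align*}

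There is no real obstacle here; the argument is essentially bookkeeping. The only subtle point worth double-checking is the definition of $t_{j_m}$ and $H_m$ so that the identity $H_m/t_{j_m}\le 1$ is used with the correct convention. Everything else reduces to invoking the already-proved Lemma~\ref{lemma:J}.
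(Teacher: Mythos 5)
Your proposal is correct and follows essentially the same route as the paper: the paper regroups the sum by the index $j$ and bounds each block's contribution by $(t_{j+1}-t_j)/t_j\le 1$ via the doubling rule $t_{j+1}\le 2t_j$, obtaining $B_2\le (B_*+1)(J+1)$, whereas you bound each interval's contribution by $H_m/t_{j_m}\le 1$ and count intervals via $|\cM_0(M)|\le J$ — the same idea with slightly different bookkeeping, both finishing with Lemma~\ref{lemma:J}. The only minor imprecision is your phrasing that interval $m$ ``begins at $t_{j_m}$ and ends at $t_{j_{m+1}}$'' (an interval may also end at an episode boundary), but since every interval is contained in the block $\{t: t_{j_m}< t\le t_{j_m+1}\}$, the conclusion $H_m\le t_{j_m}$ still holds.
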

\begin{proof}[Proof of Lemma \ref{lemma:B2}]
    By rewriting the summation using the index $j$, using Lemma \ref{lemma:J}, we have
\begin{align*}
B_2 &\leq\left(B_{*}+1\right) \sum_{j=0}^J \sum_{t=t_j+1}^{t_{j+1}} \frac{1}{t_j} \leq (B_* + 1)(J+1)\\
& \leq 5(B_*+1) \bigg[2 dL \log (1+T / \lambda)+ \log T\bigg].
\end{align*}
\end{proof}
We then bound the term $I_3$, which follows the proof of Lemma D.4 in \citet{min2022learning}.
\begin{lemma}\label{lemma:E3}
    Assuming the event in Lemma \ref{LEMMA:VARIANCE ESTIMATOR} holds, then we have the following inequality: 
    \begin{align*}
         \sum_{m=1}^M \sum_{h=1}^{H_m} \big[ V_{j_m}(s_{m,h})-V_{j_m}(s_{m,h+1}) \big] - \sum_{m \in \cM(m)} V_{j_m}(s_{\text{init}}) + 1 \leq 2 + 4dB_{*}L \log{\Big(1+\frac{T}{\lambda}\Big)} + 2 B_{*} \log{T}.
    \end{align*}
\end{lemma}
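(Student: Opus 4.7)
The plan is to reduce $I_3$ to a telescoping argument on the value functions, handle the boundary terms carefully between consecutive intervals (distinguishing intervals that begin a new episode from those that continue an ongoing one), and then bound the remaining boundary gaps by the number of DEVI updates using Lemma \ref{lemma:J}. The argument follows the structure in \citet{min2022learning}, so I would organize it as a direct adaptation with the modified number-of-updates bound plugged in.

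First I would observe that for each interval $m$, the inner sum $\sum_{h=1}^{H_m}\big[V_{j_m}(s_{m,h}) - V_{j_m}(s_{m,h+1})\big]$ telescopes to $V_{j_m}(s_{m,1}) - V_{j_m}(s_{m,H_m+1})$. I would then split the outer sum according to whether $m \in \cM(M)$ (the interval starts a new episode) or $m \notin \cM(M)$ (it continues the previous one). For $m \in \cM(M)$ we have $s_{m,1} = s_{\text{init}}$, so the positive term $V_{j_m}(s_{m,1})$ cancels exactly against the $-V_{j_m}(s_{\text{init}})$ already present in $I_3$. For $m \notin \cM(M)$ we use the continuity relation $s_{m,1} = s_{m-1,H_{m-1}+1}$, and for intervals $m$ that end their episode we use $s_{m,H_m+1} = g$ together with $V_{j_m}(g) = 0$. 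After reindexing, these bookkeeping steps collapse $I_3$ to
\begin{align*}
I_3 \;=\; \sum_{m \notin \cM(M)} \big[V_{j_m}(s_{m,1}) - V_{j_{m-1}}(s_{m,1})\big] + 1.
\end{align*}

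Next I would bound each summand. Whenever the DEVI procedure was not invoked at the boundary between intervals $m-1$ and $m$, we have $j_m = j_{m-1}$ and the term is exactly zero. Whenever DEVI was invoked, the optimism guarantee from Lemma \ref{LEMMA:VARIANCE ESTIMATOR} gives $0 \leq V_{j_m}(s) \leq Q^{*}(s,\cdot) \leq V^*(s) \leq B_*$ and likewise for $V_{j_{m-1}}$, so the gap is bounded by $B_*$ in magnitude. The number of nonzero summands is therefore at most the total number $J$ of calls to DEVI.

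Finally, Lemma \ref{lemma:J} supplies the bound $J \leq 4dL\log(1+T/\lambda) + 2\log T$, and multiplying by $B_*$ and absorbing the residual additive $1$ into the stated $+2$ yields
\begin{align*}
I_3 \;\leq\; 2 + 4dB_*L\log\!\Big(1+\tfrac{T}{\lambda}\Big) + 2B_* \log T.
\end{align*}
The only real obstacle is the indexing: making sure the pairing of $V_{j_m}(s_{m,1})$ with $V_{j_{m-1}}(s_{m-1,H_{m-1}+1})$ is done consistently across intervals that end an episode (where the next state is $g$) versus intervals that trigger a mid-episode DEVI update. Everything else is a direct application of the optimism property and Lemma \ref{lemma:J}; no new concentration inequalities are needed here because the variance-aware machinery enters only through the DEVI-call count $J$.
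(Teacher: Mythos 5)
Your proof is correct and follows essentially the same route as the paper's: telescope the inner sums, handle the episode-ending boundaries via $V_{j_m}(g)=0$ and the mid-episode boundaries via the optimism bound $0\le V_j\le B_*$, and count the latter by the number of DEVI calls from Lemma \ref{lemma:J}. The only differences are cosmetic (you cancel $\sum_{m\in\cM(M)}V_{j_m}(s_{\text{init}})$ directly instead of carrying it through, and the chain $V_{j_m}(s)\le Q^*(s,\cdot)\le V^*(s)$ should read $V_{j_m}(s)=\min_a Q_{j_m}(s,a)\le \min_a Q^*(s,a)=V^*(s)$), neither of which affects the argument.
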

For each level $l \in [L]$, we define the following sequences:
\begin{align}
    R_l &= \sum_{m \in \cM_0(M)}\sum_{h=1}^{H_m}\min \Big\{1,\hat{\beta}_{T}\big\|\bphi_{V_{j_m}^{2^l}}(s_{m,h},a_{m,h})/B^{2^{l}}\big\|_{\tilde{\bSigma}_{t,l}^{-1}}\Big\}.\\\label{eq:SS}
    S_l &= \sum_{m \in \cM_0(M)}\sum_{h=1}^{H_m}\VV V_{j_m}^{2^l}(s_{m,h},a_{m,h})/ B^{2^{l+1}}.\\\label{eq:AA}
    A_l &= \sum_{m \in \cM_0(M)}\sum_{h=1}^{H_m}\big(\PP V_{j_m}^{2^l}(s_{m,h},a_{m,h})-V_{j_m}^{2^l}(s_{m,h+1})\big)/B^{2^l}.
\end{align}
Our goal is to construct the relationship between these sequences, and the following lemma deals with the sequence $R_l$.
\begin{lemma}\label{lemma:R}
For the sequence $R_l$, the following inequality holds,
\begin{align*}
    R_l \leq 2d\iota + 2\hat{\beta}_T \gamma^2 d\iota + 2\sqrt{d\iota}\hat{\beta}_T
    \sqrt{\sum_{m \in \cM_0(M)}\sum_{h=1}^{H_m}\alpha_{t(m,h)}^2 + \sum_{m \in \cM_0(M)}\sum_{h=1}^{H_m}\sigma_{t,l}^2},
\end{align*}
where $\iota = \log (1 + T/d\lambda\alpha_T^2)$.
Next, we calculate the term of the sum of the variance in Lemma \ref{lemma:R}.
\begin{lemma}\label{Lemma:sigma}
Using the variance $\sigma_{t,l}$ and confidence bonus $E_{t,l}$ defined in Algorithm \ref{algorithm3}, and under the event of Lemma \ref{LEMMA:VARIANCE ESTIMATOR}, the following inequality holds, 
\begin{align*}
    \sum_{m \in \cM_0}\sum_{h=1}^{H_m}\sigma_{t,l}^2
    &\leq 2\sum_{m \in \cM_0}\sum_{h=1}^{H_m}E_{t,l} + \sum_{m \in \cM_0}\sum_{h=1}^{H_m}\VV V_{j_m}^{2^l}(s_{m,h},a_{m,h})/B^{2^{l+1}}.
\end{align*}
\end{lemma}
Note that
\begin{align}
E_{t,l} &= \min\Big\{1,2\hat{\beta}_{t}\big\|\hat{\bSigma}_{j,l}^{-\frac{1}{2}}\bphi_{t,l}/B^{2^l}\big\|_2\Big\} + \min\Big\{1,\hat{\beta}_{t}\big\|\hat{\bSigma}_{j,l+1}^{-\frac{1}{2}}\bphi_{t,l+1}/B^{2^{l+1}}\big\|_2\Big\}\notag\\
& \leq \min\Big\{1,\sqrt{2}\hat{\beta}_{t}\big\|\tilde{\bSigma}_{t,l}^{-\frac{1}{2}}\bphi_{t,l}/B^{2^l}\big\|_2\Big\} + \min\Big\{1,\frac{\sqrt{2}}{2}\hat{\beta}_{t}\big\|\tilde{\bSigma}_{t,l+1}^{-\frac{1}{2}}\bphi_{t,l+1}/B^{2^{l+1}}\big\|_2\Big\}\notag,
\end{align}
where the last inequality holds due to Lemma \ref{lemma:det} and the fact $\det (\tilde{\bSigma}_{t,l}) \leq 2\det (\hat{\bSigma}_{j,l})$.

Thus, according to the definition of $R_l$, we get the following inequality,
\begin{align}
    \sum_{m \in \cM_0}\sum_{h=1}^{H_m}E_{t,l} & \leq 2R_l + R_{l+1},\label{eq:E}
\end{align}
Combining the results in Lemma \ref{lemma:R}, Lemma \ref{Lemma:sigma}, and \eqref{eq:E}
, we have
\begin{align}
R_l \leq 2d\iota + 2\hat{\beta}_T \gamma^2 d\iota + 2\sqrt{d\iota}\hat{\beta}_T
    \sqrt{\sum_{m \in \cM_0(M)}\sum_{h=1}^{H_m}\alpha_{t(m,h)}^2 +S_l + 4R_l + 2R_{l+1}}.\label{eq:RR}
\end{align}
In the argument above, we have connected the sequence $\{R_l\}$ with the sequence $\{S_l\}$, and the next two lemmas can connect the bound of $S_l$ with the bound of $A_l$.
\begin{lemma}\label{Lemma:S}
If we define $C_M$ to be the sum of the cost of all steps, i.e. $C_M = \sum_{m=1}^{M}\sum_{h=1}^{H_m} c(s_t,a_t)$. Then we have
\begin{align*}
    S_l \leq A_{l+1} + |\cM_0| + \frac{2^{l+1}}{B}C_M + \frac{2^{l+1}}{B}(4 BR_0 + B_2),
\end{align*}    
where $B_2$ is the bias from the transition bonus and is defined in \eqref{eq:0001}.
\end{lemma}
\begin{lemma}\label{lemma:Bernstein} Let $\left\{S_l, A_l\right\}_{l \in [L]}$ be defined in \eqref{eq:SS} and \eqref{eq:AA}. Then we have $\mathbb{P}\left(\mathcal{E}_{\ref{lemma:Bernstein}}\right)>1-L\delta$, where
\begin{align*}
\mathcal{E}_{\ref{lemma:Bernstein}}:=\left\{\forall l \in [L],\left|A_l\right| \leq \sqrt{2 \zeta S_l}+\zeta\right\} .
\end{align*}
and $\zeta = 4\log{(2\log (T\log (1/\delta))+1)/\delta)}$.
\end{lemma}
\begin{proof}[Proof of Theorem \ref{Theorem:main}]
Under the high-probability event $\mathcal{E}_{\ref{lemma:Bernstein}}$, combining \eqref{eq:RR} with the result in Lemma \ref{Lemma:S}
, we have the following inequalities for the sequence $A_l$ and $R_l$,
\begin{align*}
|A_l| &\leq 2\sqrt{2\Big[|A_{l+1}| + |\cM_0| + \frac{2^{l+1}}{B} C_M + \frac{2^{l+1}}{B}(4BR_0+B_2)\Big]\zeta }  + \zeta,\\
    R_l & \leq 2d\iota + 2\hat{\beta}_T \gamma^2 d\iota\\
    & \qquad + 2\sqrt{d\iota}\hat{\beta}_T
     \sqrt{A_{l+1} + \frac{2^{l+1}}{B}(4BR_0+B_2+C_M) + 4R_l + 2R_{l+1}}\\
    &  \qquad + 2\sqrt{d\iota}\hat{\beta}_T
    \sqrt{\sum_{m \in \cM_0(M)}\sum_{h=1}^{H_m}\alpha_{t(m,h)}^2 + |\cM_0|},
\end{align*}
where we use the fact that $\sqrt{a+b} \leq \sqrt{a} + \sqrt{b}$ when $a, b \geq 0$.

Our goal is to bound $|A_0| + 4R_0$. Using the fact that $\sqrt{a} + \sqrt{b} \leq \sqrt{2(a+b)}$, we have
\begin{align*}
    |A_l| + 4R_l &\leq 8d\iota + 8\hat{\beta}_T \gamma^2 d\iota + \zeta\\
    & \qquad + 8\sqrt{d\iota}\hat{\beta}_T
     \sqrt{A_{l+1} + \frac{2^{l+1}}{B}(4BR_0+B_2+C_M) + 4R_l + 2R_{l+1}}\\
    &  \qquad + 8\sqrt{d\iota}\hat{\beta}_T
    \sqrt{\sum_{m \in \cM_0(M)}\sum_{h=1}^{H_m}\alpha_{t(m,h)}^2 + |\cM_0|}\\
    & \qquad + 2\sqrt{2\bigg[|A_{l+1}| + |\cM_0| + \frac{2^{l+1}}{B} C_M + \frac{2^{l+1}}{B}(4BR_0+B_2)\bigg]\zeta }\\
    & \leq 8d\iota + 8\hat{\beta}_T \gamma^2 d\iota + \zeta
    + 8\sqrt{d\iota}\hat{\beta}_T
    \sqrt{\sum_{m \in \cM_0(M)}\sum_{h=1}^{H_m}\alpha_{t(m,h)}^2 + |\cM_0|}\\
    & \qquad + 2\max\{8 \hat{\beta}_T\sqrt{d\iota},2\sqrt{2\zeta}\}
    \sqrt{|A_{l}| + 4 R_{l} + |A_{l+1}| + 4 R_{l+1}+ 2^{l+1}\Big(|A_0| + 4R_0+\frac{B_2+C_M}{B}\Big) }.
\end{align*}
Set $a_l = A_l + 4R_l$ to Lemma \ref{lemma:zhangnew}. Noting that $a_l = A_l + 4R_l \leq 5T$ and $|A_0| + 4R_0+\frac{B_2+C_M}{B} \geq C_M / B \geq c_{\text{min}} T / B$, our choice of $L = \log{(5B/c_{\text{min}})}/ \log 2$ can satisfy the condition of Lemma \ref{lemma:zhangnew}. Thus, we can get an upper bound for $A_0 + 4R_0$, which is
\begin{align*}
    A_0 + 4R_0 &\leq 22 \bigg( 2\max\Big\{8 \hat{\beta}_T\sqrt{d\iota},2\sqrt{2\zeta}\Big\}\bigg)^2 \\
    & \qquad + 6\Big(8d\iota + 8\hat{\beta}_T \gamma^2 d\iota + \zeta
    + 8\sqrt{d\iota}\hat{\beta}_T
    \sqrt{\sum_{m \in \cM_0(M)}\sum_{h=1}^{H_m}\alpha_{t(m,h)}^2 + |\cM_0|}\Big)\\
    & \qquad + 4\sqrt{2}\Big( 2\max\{8 \hat{\beta}_T\sqrt{d\iota},2\sqrt{2\zeta}\}\Big)\sqrt{|A_0| + 4R_0+\frac{B_2+C_M}{B}}.
\end{align*}

Using the fact that $x \leq a\sqrt{x} + b \Rightarrow x \leq 2a^2 + 2b$, we can further bound $A_0 + 4R_0$ with 
\begin{align}
    A_0 + 4R_0 \notag&\leq 216\Big(\max\{8 \hat{\beta}_T\sqrt{d\iota},2\sqrt{2\zeta}\}\Big)^2\\
    \notag& \qquad + 12\Big(8d\iota + 8\hat{\beta}_T \gamma^2 d\iota + \zeta
    + 8\sqrt{d\iota}\hat{\beta}_T
    \sqrt{\sum_{m \in \cM_0(M)}\sum_{h=1}^{H_m}\alpha_{t(m,h)}^2 + |\cM_0|}\Big)\\
    \label{eq:A0}& \qquad + 8\sqrt{2}\Big( 2\max\{8 \hat{\beta}_T\sqrt{d\iota},2\sqrt{2\zeta}\}\Big)\sqrt{\frac{B_2+C_M}{B}}.
\end{align}
Finally, by the decomposition of regret, we the regret is upper bounded  by $R(M) \leq I_1 + I_2 + I_3$, where $I_1 \leq 2 + 4BR_0 + B_2$ due to Lemma \ref{LEMMA:E1} and the definition of $R_0$, $I_2 = BA_0$ by the definition of $A_0$, $I_3 \leq 2 + 4dB_{*}L \log{\Big(1+{T}/{\lambda}\Big)} + 2 B_{*} \log{T}$ by Lemma \ref{lemma:E3}. Combining all of these results, we have 
\begin{align}
    R(M) \leq B(A_0 + 4R_0) + B_2 + 4dB_{*}L \log{\Big(1+\frac{T}{\lambda}\Big)} + 2 B_* \log{T} + 5.\label{eq:0002}
\end{align}
And by the initial definition of $R(M)$, we have
\begin{align*}
    C_M & = R(M) + KV^{*}(s_{\text{init}}) \\
    & \leq B(A_0 + 4R_0) + B_2 + 4dB_{*}L \log{\Big(1+\frac{T}{\lambda}\Big)} + 2 B_* \log{T} + 5 + KV^{*}(s_{\text{init}})\\
    & \leq B_2 + 4dB_{*}L \log{\Big(1+\frac{T}{\lambda}\Big)} + 2 B_* \log{T} + 5 + KV^{*}(s_{\text{init}})\\
    & \qquad + 216B\big(\max\{8 \hat{\beta}_T\sqrt{d\iota},2\sqrt{2\zeta}\}\big)^2\\
    & \qquad + 12B\big(8d\iota + 8\hat{\beta}_T \gamma^2 d\iota + \zeta
    + 8\sqrt{d\iota}\hat{\beta}_T
    \sqrt{\sum_{m \in \cM_0(M)}\sum_{h=1}^{H_m}\alpha_{t(m,h)}^2 + |\cM_0|}\big)\\
    & \qquad + 8\sqrt{2}B\big( 2\max\{8 \hat{\beta}_T\sqrt{d\iota},2\sqrt{2\zeta}\}\big)\sqrt{\frac{B_2}{B}}\\
    & \qquad + 8\sqrt{2}B\big( 2\max\{8 \hat{\beta}_T\sqrt{d\iota},2\sqrt{2\zeta}\}\big)\sqrt{\frac{C_M}{B}}\\
    &\leq KV^{*}(s_{\text{init}})
    + \tilde{\cO}(dB\hat{\beta}_T^2 \iota) + \tilde{\cO}(\sqrt{B}\hat{\beta}_T\sqrt{d\iota}\sqrt{C_M}),
\end{align*}
where the first inequality holds due to \eqref{eq:0002}, the second inequality holds due to \eqref{eq:A0}, the last inequality holds due to Lemmas \ref{lemma:J} and \ref{lemma:B2} with the choice of parameter, $\alpha_t = {1}/t^2$ and $\gamma = d^{-{1}/{4}}$.
Using the fact that $x \leq a\sqrt{x} + b + z \Rightarrow x \leq \frac{1}{2}a^2 + b + z + a\sqrt{b + a^2 + z}$, we have,
\begin{align}
    \notag R(M) &= C_M - KV^{*}(s_{\text{init}})\\\notag
    & \leq \tilde{\cO}(d\iota B\hat{\beta}_T^{2}) + \tilde{\cO}(\sqrt{B}\hat{\beta}_T\sqrt{d\iota}) * \sqrt{KV^{*}(s_{\text{init}})
    + \tilde{\cO}(d\iota B\hat{\beta}_T^2 )}\\\label{R}
    &\leq \tilde{\cO}(d\iota B\hat{\beta}_T^{2}) + \tilde{\cO}(\sqrt{d\iota}B \hat{\beta}_T\sqrt{K}).
\end{align}
Here $\tilde \cO$ hides some logarithmic factors of $T$, $K$, $B$, $1/c_{\text{min}}$ and $1/ \delta$.
In addition, we have 
    \begin{align*}
        c_{\text{min}} T &\leq C_M = R(M) + K V^*(s_{\text{init}})\\
        & \leq \tilde{\cO}(d\iota B\hat{\beta}_T^{2}) + \tilde{\cO}(\sqrt{d\iota}B \hat{\beta}_T\sqrt{K})+ K V^*(s_{\text{init}}).
        \end{align*}
Therefore, we can get an upper bound of $T$, which is
\begin{align*}
    T \leq \tilde \cO\left(\frac{d^2B + dB\sqrt{K} + KB} {c_{\text{min}}}\right).
\end{align*}
Here $\tilde \cO$ hides some logarithmic factors of $K$, $B$, $1/c_{\text{min}}$ and $1/ \delta$.
Putting the upper bound of $T$ into \eqref{R}, we finish the proof of Theorem \ref{Theorem:main}.
\end{proof}

\begin{proof}[Proof of Theorem \ref{thm:pertur}]
The optimal policy will not change after we add the perturbation to the cost function. For every step, the perturbed cost function is $\rho$ larger than the original one, thus for the optimal value function after the perturbation, we have $V_\rho^* = V^* + \rho T_*$. In addition, $B + \rho T_*$  
 can be an upper bound of the perturbed value function of optimal policy $V^*$. Furthermore, the perturbed cost function has a lower bound $\rho$. Therefore, we can use the algorithm and results for the perturbed cost function and
    the regret can be written in the following form,
\begin{align*}
    R_K &= \sum_{k = 1}^K \sum_{i = 1}^{I_k} c_{k,i} - K V^*(s_{\text{init}})\\
    & \leq \sum_{k = 1}^K \sum_{i = 1}^{I_k} c^{\rho}_{k,i} - K V_\rho^*(s_{\text{init}}) + \rho T_* K,
\end{align*}
where we use the inequality $c(\cdot,\cdot) \leq c^{\rho}(\cdot,\cdot)$.
Then using Theorem \ref{Theorem:main} with $c_{\text{min}} = \rho$, and the upper bound of the optimal value function $B_\rho = B + \rho T_*$, we have the inequality below,
\begin{align*}
    R_K & \leq \tilde{\cO}(d (B+\rho T_*)\hat{\beta}_T^{2} + \sqrt{d}(B+\rho T_*) \hat{\beta}_T\sqrt{K}) + \rho T_* K\\
    &= \tilde{\cO}(d B\hat{\beta}_T^{2} + \sqrt{d}B \hat{\beta}_T\sqrt{K}).
\end{align*}
Here we use the choice of parameter $\rho = (T_*K)^{-1}$ and the $\tilde \cO$ hides some logarithmic term of $T_*$, $K$, $B$, $1/\delta$.
\end{proof}

\section{Proof of Lemma \ref{LEMMA:VARIANCE ESTIMATOR}}
We first prove an upper bound of the error between the variance estimator and the true variance. 

\begin{lemma}\label{lemma:Variance}
    
For any $t$, $j=j(t)$ as the index of the value functions $V$ at step $t$, and $l \in [L]$, let $V_j, \hat{\boldsymbol{\theta}}_{t,l}, \sigma_t,  \hat{\bSigma}_{t,l}$ be defined in Algorithms \ref{algorithm1} and \ref{algorithm3}.  We have the following inequality,
\begin{align*}
\left|\bar{\mathbb{V}}_{t,l} V_j^{2^l}\left(s_t, a_t\right)-[\mathbb{V} V_j^{2^l}]\left(s_t, a_t\right)\right| &\leq \min \left\{B^{2^{l+1}},\left\|\hat{\boldsymbol{\Sigma}}_{j,l+1}^{1 / 2}\left(\boldsymbol{\theta}^*-\hat{\boldsymbol{\theta}}_{t,l+1}\right)\right\|_2\left\|\hat{\boldsymbol{\Sigma}}_{j,l+1}^{-1 / 2} \boldsymbol{\phi}_{V_j^{2^{l+1}}}\left(s_t, a_t\right)\right\|_2 \right\} \\
& \qquad +\min \left\{B^{2^{l+1}}, 2 B^{2^l}\left\|\hat{\boldsymbol{\Sigma}}_{j,l}^{1 / 2}\left(\boldsymbol{\theta}^*-\widehat{\boldsymbol{\theta}}_{t,l}\right)\right\|_2\left\|\hat{\boldsymbol{\Sigma}}_{j,l}^{-1 / 2} \boldsymbol{\phi}_{V_j^{2^l}}\left(s_t, a_t\right)\right\|_2\right\},
\end{align*}
if the inequality $\left|V_j(s)\right| \leq B$ holds for all state $s\in \cS$.
\end{lemma}
\begin{proof}[Proof of Lemma \ref{lemma:Variance}]
We first substitute the definition of variance estimator (Line \ref{algorithm3:line2} of Algorithm \ref{algorithm3}) into the left-hand side, and the error between the variance estimator and the true variance can be bounded by
\begin{align*}
& \left|[\bar{\mathbb{V}}_{t,l} V_j^{2^l}]\left(s_t, a_t\right)-[\mathbb{V} V_j^{2^l}]\left(s_t, a_t\right)\right| \\
& =\bigg|\left[\left\langle\boldsymbol{\phi}_{V_j^{2^{l+1}}}\left(s_t, a_t\right), \hat{\boldsymbol{\theta}}_{t,l+1}\right\rangle\right]_{\left[0, B^{2^{l+1}}\right]}-\left\langle\boldsymbol{\phi}_{V_j^{2^{l+1}}}\left(s_t, a_t\right), \boldsymbol{\theta}^*\right\rangle\\
& \qquad +\left\langle\boldsymbol{\phi}_{V_j^{2^l}}\left(s_t, a_t\right), \boldsymbol{\theta}^*\right\rangle^2-\left[\left\langle\boldsymbol{\phi}_{V_j^{2^l}}\left(s_t, a_t\right), \widehat{\boldsymbol{\theta}}_{t,l}\right\rangle\right]_{[0, B^{2^l}]}^2\bigg| \\
& \leq \underbrace{\left|\left[\left\langle\boldsymbol{\phi}_{V_j^{2^{l+1}}}\left(s_t, a_t\right), \hat{\boldsymbol{\theta}}_{t,l+1}\right\rangle\right]_{\left[0, B^{2^{l+1}}\right]}-\left\langle\boldsymbol{\phi}_{V_j^{2^{l+1}}}\left(s_t, a_t\right), \boldsymbol{\theta}^*\right\rangle\right|}_{I_1}\\
& \qquad +\underbrace{\left|\left\langle\boldsymbol{\phi}_{V_j^{2^l}}\left(s_t, a_t\right), \boldsymbol{\theta}^*\right\rangle^2-\left[\left\langle\boldsymbol{\phi}_{V_j^{2^l}}\left(s_t, a_t\right), \widehat{\boldsymbol{\theta}}_{t,l}\right\rangle\right]_{[0, B^{2^l}]}^2\right|}_{I_2},
\end{align*}
where the inequality holds due to the triangle inequality.
To bound the term $I_1$, we have
\begin{align*}
I_1 \leq\left|\left\langle\boldsymbol{\phi}_{V_j^{2^{l+1}}}\left(s_t, a_t\right), \hat{\boldsymbol{\theta}}_{t,l+1}-\boldsymbol{\theta}^*\right\rangle\right| \leq\left\|\hat{\boldsymbol{\Sigma}}_{j,l+1}^{1 / 2}\left(\boldsymbol{\theta}^*-\hat{\boldsymbol{\theta}}_{t,l+1}\right)\right\|_2\left\|\hat{\boldsymbol{\Sigma}}_{j,l+1}^{-1 / 2} \boldsymbol{\phi}_{V_j^{2^{l+1}}}\left(s_t, a_t\right)\right\|_2,
\end{align*}
The first inequality holds because both terms are in $\left[0, B^{2^{l+1}}\right]$, which implied by the assumption of $\left|V_j(s)\right| \leq B$ . The second inequality holds due to Cauchy-Schwarz inequality. Furthermore, the facts that $\left|V_j(s)\right| \leq B$ and both terms in $I_1$ lie in  $\left[0, B^{2^{l+1}}\right]$ suggest that $I_1 \leq B^{2^{l+1}}$. Combining these two upper bounds, we have
\begin{align*}
I_1 \leq \min \left\{B^{2^{l+1}},\left\|\hat{\boldsymbol{\Sigma}}_{j,l+1}^{1 / 2}\left(\boldsymbol{\theta}^*-\hat{\boldsymbol{\theta}}_{t,l+1}\right)\right\|_2\left\|\hat{\boldsymbol{\Sigma}}_{j,l+1}^{-1 / 2} \boldsymbol{\phi}_{V_j^{2^{l+1}}}\left(s_t, a_t\right)\right\|_2 \right\} .
\end{align*}
To bound the term $I_2$, we have
\begin{align*}
I_2 & =\left|\left\langle\boldsymbol{\phi}_{V_j^{2^l}}\left(s_t, a_t\right), \boldsymbol{\theta}^*\right\rangle-\left[\left\langle\boldsymbol{\phi}_{V_j^{2^l}}\left(s_t, a_t\right), \widehat{\boldsymbol{\theta}}_{t,l}\right\rangle\right]_{[0, B^{2^l}]}\right| \\
& \qquad \cdot\left|\left\langle\boldsymbol{\phi}_{V_j^{2^l}}\left(s_t, a_t\right), \boldsymbol{\theta}^*\right\rangle+\left[\left\langle\boldsymbol{\phi}_{V_j^{2^l}}\left(s_t, a_t\right), \widehat{\boldsymbol{\theta}}_{t,l}\right\rangle\right]_{[0, B^{2^l}]}\right| \\
& \leq 2 B^{2^l}\left|\left\langle\boldsymbol{\phi}_{V_j^{2^l}}\left(s_t, a_t\right), \boldsymbol{\theta}^*-\widehat{\boldsymbol{\theta}}_{t,l}\right\rangle\right| \\
& \leq 2 B^{2^l}\left\|\hat{\boldsymbol{\Sigma}}_{j,l}^{1 / 2}\left(\boldsymbol{\theta}^*-\widehat{\boldsymbol{\theta}}_{t,l}\right)\right\|_2\left\|\hat{\boldsymbol{\Sigma}}_{j,l}^{-1 / 2} \boldsymbol{\phi}_{V_j^{2^l}}\left(s_t, a_t\right)\right\|_2,
\end{align*}
where the first inequality holds because $|V_j(s)| \leq B $ and thus both terms in the second absolute value   are bounded by $B^{2^l}$, and the second inequality holds by Cauchy-Schwarz inequality. Furthermore, both terms in $I_2$ lie in $[0,B^{2^{l+1}}]$. Combining these two upper bounds, we have

\begin{align*}
I_2 \leq \min \left\{B^{2^{l+1}}, 2 B^{2^l}\left\|\hat{\boldsymbol{\Sigma}}_{j,l}^{1 / 2}\left(\boldsymbol{\theta}^*-\widehat{\boldsymbol{\theta}}_{t,l}\right)\right\|_2\left\|\hat{\boldsymbol{\Sigma}}_{j,l}^{-1 / 2} \boldsymbol{\phi}_{V_j^{2^l}}\left(s_t, a_t\right)\right\|_2\right\}.
\end{align*}
Thus, we finish the proof of Lemma \ref{lemma:Variance} by combining the bounds of terms $I_1$ and $I_2$.
\end{proof}
\begin{proof}[Proof of Lemma \ref{LEMMA:VARIANCE ESTIMATOR}]
\textbf{Special case: $t = 1$} 

When $t=1$, we have the index of interval $j = 1$. In this interval, $V_0$ is generated from our initialization instead of the DEVI process.
 For each $l \in [L-1]$, let $\xb_{t,l} = \bar{\sigma}_{t,l}^{-1}\bphi_{V_j^{2^l}}$,
$\eta_{t,l} = \bar{\sigma}_{t,l}^{-1}\big[V_j^{2^l}(s_{t+1}) -
\la \bphi_{V_j^{2^l}},\btheta^{*}\ra\big] = 0$, $\forall l \in [L]$.   
$\bmu^{*} = \btheta^{*}$, $y_{t,l} = \big\la \bmu^*,\xb_{t,l}\big\ra + \eta_{t,l}$. 
$\Zb_{t,l} = \lambda \Ib + \sum_{t^{\prime} = 1}^{t} \xb_{t^{\prime},l}\xb_{t^{\prime},l}^{\top}$,
$\bbb_{t,l} = \sum_{t^{\prime} = 1}^{t} \xb_{t^{\prime},l}y_{t^{\prime},l}$ and $\bmu_{t,l} = \Zb_{t,l}^{-1}\bbb_{t,l} $.
Then the following inequalities
hold, 
\begin{align}
\EE\big[\eta_{t,l}\big|\cG_{t,l}\big] = 0, |\eta_{t,l}| = 0,
\big|\eta_{t,l}\min \{1,\|\xb_{t,l}\|_{\Zb_{t-1}^{-1}} \}\big| = 0,
\|\xb_{t,l}\| \leq \bar{\sigma}_{t,l}^{-1} \cdot 1 \leq 1/(\alpha_t B^{2^l}),\EE\big[\eta_{t,l}^2\big|\cG_{t,l}\big] = 0.\label{Berstein condition1}
\end{align}
Using Theorem \ref{thm:Bernstein} with \eqref{Berstein condition1},
we can get $\left\|\boldsymbol{\mu}_{t,l}-\boldsymbol{\mu}^*\right\|_{\mathbf{Z}_{t,l}} \leq \sqrt{\lambda}\left\|\boldsymbol{\mu}^*\right\|_2$
with probability at least $1-\delta/2$. After taking an union bound over all level $l\in [L]$,  $\btheta^* \in \hat{C}_{1}$ holds with probability at least $1 - L\delta/2$.

At the end of step $t=1$, the step number $t$ will be doubled and DEVI will be triggered and output $V_1$. We next show that conditioned on the high-probability event $\{\btheta^* \in \hat{C}_{1,0}\}$, the output of DEVI will satisfy $V_1 \leq V^*$ (optimism).

We prove this by induction. First, the initialization $Q^{(0)}$ and $V^{(i)} $ are equal to $0$, thus we have $Q^{(0)} \leq Q^*$ and $V^{(0)} \leq V^*$. For the induction hypothesis, suppose $Q^{(i)} \leq Q^*$ and $V^{(i)} \leq V^*$ for some $i$, we want to show $Q^{(i+1)} \leq Q^*$ and $V^{(i+1)} \leq V^*$. To prove this, by the iteration principle of $Q^{(i)}$, we have
\begin{align}
    Q^{(i+1)}(\cdot,\cdot) &=
    c(\cdot,\cdot) + (1-q) \cdot \min_{\btheta \in \hat{\cC}_{1,0} \cap \cB}\big\la \btheta, \bphi_{V^{(i)}}(\cdot,\cdot)\big\ra  \notag\\
    & \leq c(\cdot,\cdot) + (1-q) \cdot \PP V^{(i)}(\cdot,\cdot)\label{eq:DEVI}\\
    & \leq Q^{(i)} \notag\\
    & \leq Q^*\notag,
\end{align}
where the first inequality is because we are taking a minimum on a set containing $\btheta^*$, the second inequality is by the Bellman equation and the last inequality is by our induction hypothesis.

\textbf{Initial Step:} We then go on to the next interval. From now on, the value function will be the output of DEVI. When the interval number $j = 1$ and $t \in [t_1 + 1,t_2]$ and we suppose the high-probability event $\{\btheta^* \in \hat{C}_j\}$ occurs, we have $V_1 \leq V^* \leq B$. \\
 For each $l \in [L-1]$, let $\xb_{t,l} = \bar{\sigma}_{t,l}^{-1}\bphi_{V_j^{2^l}}$
$\eta_{t,l} = \bar{\sigma}_{t,l}^{-1}\ind\big\{\btheta^* \in \hat{\cC}_{j,l}\cap \hat{\cC}_{j,l+1}\big\}\big[V_j^{2^l}(s_{t+1}) -
\la \bphi_{V_j^{2^l}},\btheta^{*}\ra\big]$ for $l \in [L-1]$.   
$\bmu^{*} = \btheta^{*}$, $y_{t,l} = \big\la \bmu^*,\xb_{t,l}\big\ra + \eta_{t,l}$. 
$\Zb_{t,l} = \lambda \Ib + \sum_{t^{\prime} = 1}^{t} \xb_{t^{\prime},l}\xb_{t^{\prime},l}^{\top}$,
$\bbb_{t,l} = \sum_{t^{\prime} = 1}^{t} \xb_{t^{\prime},l}y_{t^{\prime},l}$ and $\bmu_{t,l} = \Zb_{t,l}^{-1}\bbb_{t,l} $.
Then the following inequalities hold, 
\begin{align}
\label{Bernstein condition2.1}
\notag &\EE\big[\eta_{t,l}\big|\cG_{t,l}\big] = 0, |\eta_{t,l}| \leq \bar{\sigma}_{t,l}^{-1} B^{2^l} \leq 
1/\alpha_t, \|\xb_{t,l}\|_2 \leq \bar{\sigma}_{t,l}^{-1} \cdot B^{2^l} \leq 1/\alpha_t,\\
&\big|\eta_{t,l}\min \{1,\|\xb_{t,l}\|_{\Zb_{t-1}^{-1}} \}\big| \leq 
\bar{\sigma}_{t,l}^{-2}\| \bphi_{V_j^{2^l}}\|_{\Zb_{t-1}^{-1}}B^{2^l} \leq 
1/\gamma^2.
\end{align}
We also have
\begin{align}
    \notag
    \EE\big[\eta_{t,l}^2\big|\cG_{t,l}\big] &= \bar{\sigma}_{t,l}^{-2} \ind\big\{\btheta^* \in \hat{\cC}_{j,l}\cup \hat{\cC}_{j,l+1}\big\}\big[\VV V_j^{2^l}\big](s_t,a_t)\\\notag
    & \leq \bar{\sigma}_{t,l}^{-2} \ind\big\{\btheta^* \in \hat{\cC}_{j,l}\cup \hat{\cC}_{j,l+1}\big\}\bigg(\big[\VV_{t,l} V_j^{2^l}\big](s_t,a_t) \\\notag
    & \qquad + \min \Big\{B^{2^{l+1}},2B^{2^{l}}\Big\|\hat{\bSigma}_{j,l}^{-\frac{1}{2}}\bphi_{V_j^{2^l}}(s_t,a_t) \Big\|_2 \cdot\Big\|\hat{\bSigma}_{j,l}^{\frac{1}{2}}(\btheta^*-\hat{\btheta}_{t_j,l}) \Big\|_2\Big\}\\\notag
    & \qquad + \min \Big\{B^{2^{l+1}},\Big\|\hat{\bSigma}_{j,l+1}^{-\frac{1}{2}}\bphi_{V_j^{2^{l+1}}}(s_t,a_t) \Big\|_2\cdot\Big\|\hat{\bSigma}_{j,l+1}^{\frac{1}{2}}(\btheta^*-\hat{\btheta}_{t_j,l}) \Big\|_2\Big\}\bigg)\\\notag 
    &\leq \bar{\sigma}_{t,l}^{-2}\Big(\big[\VV_{t,l}V_j^{2^l}\big](s_t,a_t) + B^{2^{l+1}}E_{t,l}\Big)\\ &\leq 1 \label{Bernstein condition2.2},
\end{align}
where the first equation holds by the definition of $\eta_{t,l}$, the second inequality holds by Lemma \ref{lemma:Variance}, and the third inequality holds by the definition of our confidence ellipsoid $\hat{C}_{j,l}$, $\hat{C}_{j,l+1}$ and $E_{t,l}$. The last inequality holds due to the definition of $\bar{\sigma}_{t,l}^{-2}$. 

For $l = L-1$, 
let $\xb_{t,L-1} = \bar{\sigma}_{t,L-1}^{-1}\bphi_{V_j^{2^{L-1}}}$,
$\eta_{t,L-1} = \bar{\sigma}_{t,L-1}^{-1}\big[V_j^{2^{L-1}}(s_{t+1}) -
\la \bphi_{V_j^{2^{L-1}}},\btheta^{*}\ra\big]$,     
$\bmu^{*} = \btheta^{*}$, $y_{t,L-1} = \big\la \bmu^*,\bx_{t,L-1}\big\ra + \eta_{t,L-1}$, 
$\Zb_{t,L-1} = \lambda \Ib + \sum_{t^{\prime} = 1}^{t} \xb_{t^{\prime},L-1}\xb_{t^{\prime},L-1}^{\top}$,
$\bbb_{t,L-1} = \sum_{t^{\prime} = 1}^{t} \xb_{t^{\prime},L-1}y_{t^{\prime},L-1}$ and $\bmu_{t,L-1} = \Zb_{t,L-1}^{-1}\bbb_{t,L-1} $.
Then the following inequalities hold, 
\begin{align}\label{Bernstein condition2.3}\notag
&\EE\big[\eta_{t,L-1}\big|\cG_{t,L-1}\big] = 0, |\eta_{t,L-1}| \leq 1/\alpha_t, \|\bx_{t,L-1}\| \leq \bar{\sigma}_{t,L-1}^{-1} \cdot B^{2^{L-1}} \leq 1/\alpha_t,\\
&\big|\eta_{t,L-1}\min \{1,\|\xb_{t,L-1}\|_{\Zb_{t,L-1}^{-1}} \}\big| \leq 1/\gamma^2.
\end{align}
And we can get directly by the definition of $\bar{\sigma}_{t,L-1}$ and the fact that $0 \leq V_j(s) \leq B$,
\begin{align}\label{Bernstein condition2.4}
    \EE\big[\eta_{t,l}^2\big|\cG_{t,l}\big] 
    \leq 1.
\end{align}
Using Theorem \ref{thm:Bernstein} with conditions \eqref{Bernstein condition2.1} and \eqref{Bernstein condition2.2} for $l \in [L-1]$ and \eqref{Bernstein condition2.3} and \eqref{Bernstein condition2.4} for $l = L$, choose $\epsilon = 1/\gamma^2$. With probability at least $1-\delta/({t_2(t_2+1)})$
\begin{align}
\left\|\boldsymbol{\mu}_{t,l}-\boldsymbol{\mu}^*\right\|_{\mathbf{Z}_{t,l}} \leq \beta_t+\sqrt{\lambda}\left\|\boldsymbol{\mu}^*\right\|_2, \label{eq:mu}
\end{align}
where 
\begin{align*}
\beta_t=& 12 \sqrt{ d \log \left(1+t /(d \lambda \alpha_t^2)\right) \log \left(256(\log (\gamma^2 / \alpha_t )+1) t^4 / \delta\right)} \\
&+ \frac{30}{\gamma^2}\log \left(256(\log (\gamma^2 / \alpha_t )+1) t^4 / \delta\right). 
\end{align*}
Here we use the fact that $4t^2t_2(t_2+1) \leq 32t^4$ for $t \in [t_1+1,t_2]$, since $\forall t \in [t_1+1,t_2], t_2 \leq 2t, t_2 + 1 \leq 4t$ holds due to our criteria of doubling the number of steps.

We take a union bound over all level $l\in [L]$, and then we have the result that, with probability at least $1-(L \delta)/({t_2(t_2+1)})$, \eqref{eq:mu} holds for all level $l$ simultaneously. 

\textbf{Induction step:} Suppose that, with probability at least $1-\delta^{\prime}$, inequality \eqref{eq:mu} holds for all $t \in [1,t_{j-1}]$ and all $l$. For $t \in [t_{j-1}+1,t_{j}] $, we can define $\bmu_{t,l}$ and $\Zb_{t,l}$ in the same way as the initial step. We claim 
 that with probability of at least
$1 - \delta^{\prime} - (L\delta)/({t_j(t_j+1)})$,  inequality \eqref{eq:mu} holds for all $t \in [1,t_{j}]$ and all $l$ simultaneously. Note that in the previous step, the only condition we use is the optimism $V_1 \leq V^* \leq B_*$. Using the same argument in \eqref{eq:DEVI}, we can see that assuming the event of the true parameter $\btheta^* \in  \hat{C}_{j,0}$ holds, the output $V_j$ will satisfy $V_j \leq V^*$, thus $V_j \leq V^* \leq B$. Then we can follow the proof in the initial step and see \eqref{eq:mu} holds for $t \in [t_{j-1}+1,t_j]$ and $l \in [L]$ simultaneously with probability at least $1 - (L\delta)/({t_j(t_j+1)})$. By taking a union bound, we make an induction from $t \in [1,t_{j-1}]$ to $t \in [1,t_{j}]$. 

Finally, we use induction to see that with probability at least
\begin{align*}
    1- \sum_{j=1}^{J}\frac{L\delta}{t_j(t_j+1)} = 1-\sum_{j=1}^{J}L\delta\big(\frac{1}{t_j}-\frac{1}{t_j+1}\big) \leq 1-L\delta,
\end{align*}
\eqref{eq:mu} holds for all $t$ and $l \in [L]$ simultaneously. 

We define $\cE = \cup_{t,l} \big\{\left\|\boldsymbol{\mu}_{t,l}-\boldsymbol{\mu}^*\right\|_{\Zb_{t,l}} \leq \beta_t+\sqrt{\lambda}\left\|\boldsymbol{\mu}^*\right\|_2\big\}$, and we have shown that $\cE$ holds with probability of at least $1 - L \delta$. 
Conditioned on the event $\cE$, recalling our definition of $\hat{\cC}_{j,l} = \Big\{ \btheta: \big\| \hat{\bSigma}_{t_j,l}^{\frac{1}{2}}(\hat{\btheta}_{t_j,l}-\btheta)\big\|_2 \leq \hat{\beta}_{t_j}\Big\}$, 
we have the following results. 

For $t = 1$, $l \in [L]$, the definition of $\bmu_{1,l}$ is just the same as $\hat{\btheta}_{1,l}$ and $\hat{\bSigma}_{t_j,l} = \Zb_{t,l}$ when $j = 1$ and $t = 1$.
Thus, we have
    $\btheta^* \in \hat{\cC}_{1,l}, \forall  l \in [L].$
Then we consider the term with the highest order. For all $j$ and $l = L-1$, we directly have $\bmu_{t_j,L-1} = \hat{\btheta}_{j,L-1}$. Thus, we have
    $\btheta^* \in \hat{\cC}_{j,L-1}$.
For all $j$ and $l \in [L-1]$, we have the following induction argument, 
\begin{align*}
    & \btheta^* \in \hat{\cC}_{j,l}\cap \hat{\cC}_{j,l+1} \Rightarrow \ind\{\btheta^* \in \hat{\cC}_{j,l}\cap \hat{\cC}_{j,l+1}\}=1 \Rightarrow 
    \bmu_{t,l} = \hat{\btheta}_{t,l},\Zb_{t,l} = \tilde{\bSigma}_{t,l} \Rightarrow
    \btheta^* \in \hat{\cC}_{j+1,l}.
\end{align*}
By induction on $l$ and $j$, conditioned on event $\cE$, we have \begin{align}
        \btheta^{*} \in \hat{\cC}_{j,l}, \quad 0 \leq Q_j(\cdot,\cdot) \leq Q^{*}(\cdot,\cdot), 
    \end{align}
and according to Lemma \ref{lemma:Variance} and the definition of the confidence ellipsoid of $\hat{C}_{j,l}$, we have
\begin{align*}\Big|\big[\bar{\VV}_{t,l}V_{j}^{2^l}](s_t,a_t)-\big[\VV V_{j}^{2^l}\big](s_t,a_t)\Big| \leq B^{2^{l+1}}E_{t,l}.
\end{align*}
\end{proof}
\section{Proof of Lemma \ref{LEMMA:E1}}
\begin{proof}[Proof of Lemma \ref{LEMMA:E1}]
    According to the DEVI algorithm, the output $Q$ can be denoted by some iteration of $Q^{(n)}$, i.e.
\begin{align}
    Q_{j_m}(\cdot,\cdot) &= Q^{(n)}(\cdot,\cdot) \quad \text{for some iteration } n\in \NN \notag \\
    V_{j_m} (\cdot)&= \min_{a \in \cA} Q^{(n)}(\cdot,a) = V^{(n)}(\cdot).\notag
\end{align}
Through the design of the DEVI algorithm,
\begin{align}
    Q^{(n)}(s_{m,h},a_{m,h}) &= c_{m,h} + (1-q)\cdot \min_{\btheta \in \cC_{j_m,0} \cap \cB}
    \big\la\btheta, \bphi_{V^{(n-1)}}(s_{m,h},a_{m,h})\big\ra \notag\\
    &= c_{m,h} + (1-q)\cdot 
    \big\la\btheta_{m,h}, \bphi_{V^{(n-1)}}(s_{m,h},a_{m,h})\big\ra \notag\\
    &= c_{m,h} + (1-q)\cdot 
    \big\la\btheta_{m,h}, \bphi_{V^{(n)}}(s_{m,h},a_{m,h})\big\ra \notag\\
    & \qquad + (1-q) \cdot 
    \big\la\btheta_{m,h}, \big[\bphi_{V^{(n-1)}} -\bphi_{V^{(n)}}\big](s_{m,h},a_{m,h})\big\ra\notag\\
    & \geq c_{m,h} + (1-q)\cdot 
    \big\la \btheta_{m,h}, \bphi_{V^{(n)}}(s_{m,h},a_{m,h})\big\ra - (1-q)\frac{1}{t_{j_m}}, \label{eq:2}
\end{align}
where $\btheta_{m,h} = \argmin_{\btheta \in \cC_{j_m,0} \cap \cB} \big\la\btheta, \bphi_{V^{(n-1)}}(s_{m,h},a_{m,h})\big\ra$. The last inequality is because of the terminal condition of our DEVI algorithm.

With a similar argument, 
\begin{align}
    Q^{(n)}(s_{m,h},a_{m,h}) \leq c_{m,h} + (1-q)
    \big\la \btheta_{m,h}, \bphi_{V^{(l)}}(s_{m,h},a_{m,h})\big\ra + (1-q)\frac{1}{t_{j_m}}. \label{eq:3}
\end{align}
According to inequality \eqref{eq:2} , we have
\begin{align*}
        &c_{m,h} + \PP V_{j_m}(s_{m,h},a_{m,h}) - Q_{j_m}(s_{m,h},a_{m,h})\\
        &\leq c_{m,h} + \PP V_{j_m}(s_{m,h},a_{m,h})
        -\big[c_{m,h} + (1-q)\cdot 
    \big\la \btheta_{m,h}, \bphi_{V_{j_m}}(s_{m,h},a_{m,h})\big\ra - (1-q)\frac{1}{t_{j_m}} \big]\\
        &\leq \big\la \btheta^{*}-\btheta_{m,h},\bphi_{V_{j_m}}(s_{m,h},a_{m,h})\big\ra  + 
        q \cdot \big\la \btheta_{m,h}, \bphi_{V_{j_m}}(s_{m,h},a_{m,h})\big\ra+ \frac{1 - q}{t_{j_m}}\\
        & \leq \big\la \btheta^{*}-\btheta_{m,h},\bphi_{V_{j_m}}(s_{m,h},a_{m,h})\big\ra + \frac{B_* + 1 - q}{t_{j_m}}\\
        & \leq \big\la \btheta^{*}-\btheta_{m,h},\bphi_{V_{j_m}}(s_{m,h},a_{m,h})\big\ra + \frac{B_* + 1}{t_{j_m}},
\end{align*}
where the second inequality holds due to the definition $\PP V_{j_m}(s_{m,h},a_{m,h}) = \big\la \btheta^*,\bphi_{V_{j_m}}(s_{m,h},a_{m,h})\big\ra$ and the result $V_{j_m} \leq V^* \leq B_*$ proved in Lemma \ref{LEMMA:VARIANCE ESTIMATOR} and the third inequality holds because of our choice of parameter $q$ in Algorithm \ref{algorithm1}.

 We also bound the negative part, so we can get an upper bound of its absolute value, which is
\begin{align*}
        &-\big[c_{m,h} + \PP V_{j_m}(s_{m,h},a_{m,h}) - Q_{j_m}(s_{m,h},a_{m,h})\big]\\
        &\leq -c_{m,h} - \PP V_{j_m}(s_{m,h},a_{m,h})
        +\big[c_{m,h} + (1-q)\cdot 
    \big\la \btheta_{m,h}, \bphi_{V_{j_m}}(s_{m,h},a_{m,h})\big\ra + (1-q)\frac{1}{t_{j_m}} \big]\\
        &\leq \big\la \btheta_{m,h}-\btheta^{*},\bphi_{V_{j_m}}(s_{m,h},a_{m,h})\big\ra  - 
        q \cdot \big\la \btheta_{m,h}, \bphi_{V_{j_m}}(s_{m,h},a_{m,h})\big\ra+ \frac{1 - q}{t_{j_m}}\\
        & \leq \big\la \btheta_{m,h}-\btheta^{*},\bphi_{V_{j_m}}(s_{m,h},a_{m,h})\big\ra + \frac{B_* + 1 - q}{t_{j_m}}\\
        & \leq \big\la \btheta_{m,h}-\btheta^{*},\bphi_{V_{j_m}}(s_{m,h},a_{m,h})\big\ra + \frac{B_* + 1}{t_{j_m}},
\end{align*}
where the first inequality holds because of \eqref{eq:3}, the second inequality is due to the definition $\PP V_{j_m}(s_{m,h},a_{m,h}) = \big\la \btheta^*,\bphi_{V_{j_m}}(s_{m,h},a_{m,h})\big\ra$ and $V_{j_m} \leq V^* \leq B_*$ by Lemma \ref{LEMMA:VARIANCE ESTIMATOR} and the third inequality holds because of our choice of parameter $q$ in algorithm \ref{algorithm1}.
Thus we can the upper bound of the absolute value:
    \begin{align}
        \big|c_{m,h} + \PP V_{j_m}(s_{m,h},a_{m,h}) - Q_{j_m}(s_{m,h},a_{m,h})\big| \leq \big|\big\la \btheta_{m,h}-\btheta^{*},\bphi_{V_{j_m}}(s_{m,h},a_{m,h})\big\ra \big| + \frac{B_* + 1}{t_{j_m}}.\label{eq:4}
    \end{align}
To bound the first item of the right-hand side of \eqref{eq:4}, we have
\begin{align*}
    \Big|\big\la \btheta_{m,h}-\btheta^{*},\bphi_{V_{j_m}}(s_{m,h},a_{m,h})\big\ra \Big| &\leq \Big(\big\|\btheta^{*} - \hat{\btheta}_{j_m}\big\|_{\tilde{\bSigma}_{t,0}}+\big\|\btheta_{m,h} - \hat{\btheta}_{j_m}\big\|_{\tilde{\bSigma}_{t,0}}\Big)
    \big\|\bphi_{V_{j_m}}(s_{m,h},a_{m,h})\big\|_{\tilde{\bSigma}_{t,0}^{-1}}\\
    &\leq 2\Big(\big\|\btheta^{*} - \hat{\btheta}_{j_m}\big\|_{\hat{\bSigma}_{j_m,0}}+\big\|\btheta_{m,h} - \hat{\btheta}_{j_m}\big\|_{\hat{\bSigma}_{j_m,0}}\Big)
    \big\|\bphi_{V_{j_m}}(s_{m,h},a_{m,h})\big\|_{\tilde{\bSigma}_{t,0}^{-1}}\\
    &\leq 4\hat{\beta}_{T}\big\|\bphi_{V_{j_m}}(s_{m,h},a_{m,h})\big\|_{\tilde{\bSigma}_{t,0}^{-1}}.
\end{align*}
The first inequality uses the triangle inequality. The second inequality uses our partition of intervals. Since the condition is not triggered, we have $\det(\tilde{\bSigma}_{t,0}) \leq 2 \det(\hat{\bSigma}_{j_m,0})$. The third inequality holds because the high-probability event in \ref{LEMMA:VARIANCE ESTIMATOR} shows $\btheta_{m,h}$ lies in $\hat{\cC}_{j_m,0}$ and Lemma \ref{LEMMA:VARIANCE ESTIMATOR}.

Meanwhile, due to the fact that $0 \leq V_{j_m} \leq B_*$,
\begin{align*}
    \Big|\big\la \btheta_{m,h}-\btheta^{*},\bphi_{V_{j_m}}(s_{m,h},a_{m,h})\big\ra \Big| &\leq B_*.
\end{align*}
Combining these two upper bounds, we have
\begin{align}
    \Big|\big\la \btheta_{m,h}-\btheta^{*},\bphi_{V_{j_m}}(s_{m,h},a_{m,h})\big\ra \Big| &\leq \min \Big\{B_*,4\hat{\beta}_{T}\big\|\bphi_{V_{j_m}}(s_{m,h},a_{m,h})\big\|_{\tilde{\bSigma}_{t,0}^{-1}}\Big\}\label{eq:5}.
\end{align}

Combining all of these, we can finish the proof of lemma \ref{LEMMA:E1}, which is
\begin{align*}
    &\sum_{m \in \cM_0(M)} \sum_{h=1}^{H_m}\big[c_{m,h} + \PP V_{j_m}(s_{m,h},a_{m,h}) - V_{j_m}(s_{m,h})\big] \\
    &\leq \sum_{m \in \cM_0(M)} \sum_{h=1}^{H_m}\Big|\big\la \btheta_{m,h}-\btheta^{*},\bphi_{V_{j_m}}(s_{m,h},a_{m,h})\big\ra \Big| + \sum_{m \in \cM_0(M)} \sum_{h=1}^{H_m}\frac{B_* +1}{t_{j_m}}\\
    &\leq \sum_{m \in \cM_0(M)}\sum_{h=1}^{H_m}\min \Big\{B_*,4\hat{\beta}_{T}\big\|\bphi_{V_{j_m}}(s_{m,h},a_{m,h})\big\|_{\tilde{\bSigma}_{t,0}^{-1}}\Big\} + \sum_{m \in \cM_0(M)}\sum_{h=1}^{H_m}\frac{B_*+1}{t_{j_m}},\\
\end{align*}
where the first inequality is from \eqref{eq:4} and the second is from \eqref{eq:5}.
\end{proof}

\section{Proof of Other Technical Lemmas}
\begin{proof}[Proof of Lemma \ref{LEMMA:REGRET DECOMPOSITION}]
The regret can be written as
    \begin{align*}
        R(M) &\leq 
        \sum_{m=1}^M \sum_{h=1}^{H_m} c_{m,h} - \sum_{m \in \cM(M)} V_{j_m}(s_{\text{init}}) + 1\\
    & = \sum_{m=1}^M \sum_{h=1}^{H_m} c_{m,h} 
    + \sum_{m=1}^M \sum_{h=1}^{H_m}\big[V_{j_m}(s_{m,h+1}) -  V_{j_m}(s_{m,h})\big]\\
    &\qquad + \sum_{m=1}^M \sum_{h=1}^{H_m} \big[ V_{j_m}(s_{m,h})-V_{j_m}(s_{m,h+1}) \big] - \sum_{m \in \cM(m)} V_{j_m}(s_{\text{init}}) + 1\\
    & = 
        \sum_{m=1}^M \sum_{h=1}^{H_m}\big[c_{m,h} + \PP V_{j_m}(s_{m,h},a_{m,h}) - V_{j_m}(s_{m,h})\big] \\
    & \qquad + \sum_{m=1}^M \sum_{h=1}^{H_m}\big[V_{j_m}(s_{m,h+1}) - \PP V_{j_m}(s_{m,h},a_{m,h})\big]\\
    &\qquad + \sum_{m=1}^M \sum_{h=1}^{H_m} \big[ V_{j_m}(s_{m,h})-V_{j_m}(s_{m,h+1}) \big] - \sum_{m \in \cM(M)} V_{j_m}(s_{\text{init}}) + 1,
    \end{align*}
    where the inequality holds due to the optimism of $V_{j_m}$,i.e $V_{j_m}(s) \leq V^*(s), \forall s \in \cS$ under the event of Lemma \ref{LEMMA:VARIANCE ESTIMATOR}.
\end{proof}
\begin{proof}[Proof of Lemma \ref{lemma:J}]
    We divide the calls to DEVI into two parts $J_1$ and $J_2$, 
    where $J_1$ is the total number of times that the determinant is doubled
and $J_2$ is the total number of times that the time step is doubled. We divide $J_1$ into $J_1 = \sum_{l = 0}^{L-1}J_{1,l}$, where $J_{1,l}$ is the total number of times that the determinant of moment order $l$ is doubled. For $ \forall l \in [L]$, we then give the bound of $J_{1,l}$. 
\begin{align*}
\left\|\tilde{\boldsymbol{\Sigma}}_{T,l}\right\|_2 & =\left\|\lambda \mathbf{I}+ \sum_{t=1}^{T} \bar{\sigma}_{t,l}^{-2}\bphi_{V_j^{2^l}}\left(s_t, a_t\right) \boldsymbol{\phi}_{V_j^{2^l}}\left(s_t, a_t\right)^{\top}\right\|_2 \\
& \leq \lambda+ \sum_{t=1}^{T}\left\|\bar{\sigma}_{t,l}^{-1}\bphi_{V_j^{2^l}}\left(s_t, a_t\right)\right\|_2^2 \\
& \leq \lambda+ \frac{T}{\alpha_t^2}\\
&= \lambda + T^2,
\end{align*}
where the first inequality is by the triangle inequality and the second inequality holds by $0\leq V_j(s) \leq B$, $\forall j$, under the event of Lemma \ref{LEMMA:VARIANCE ESTIMATOR} and the definition of $\bar{\sigma}_{t,l}^{-2}$. Following the inequality that $\det \Ab \leq \|\Ab\|_2^n$, where $\Ab$ is any $n \times n$ matrix, we have $\operatorname{det}\left(\tilde{\bSigma}_{T,l}\right) \leq\left(\lambda+T^2\right)^d$. Furthermore, we have
\begin{align*}
\left(\lambda+T^2 \right)^d \geq 2^{J_{1,l}} \cdot \operatorname{det}\left(\boldsymbol{\Sigma}_0\right)=2^{J_{1,l}} \cdot \lambda^d,
\end{align*}
 From the above inequality, we conclude that
\begin{align*}
J_{1,l} \leq 2 d \log \left(1+\frac{T^2}{\lambda}\right) \leq 4d \log \left(1+\frac{T}{\lambda}\right).
\end{align*}
Note that this bound does not depend on $l$, so we take a summation over all $l\in[L]$ and get the bound of $J_1$ as
\begin{align*}
J_{1} \leq 4dL \log \left(1+\frac{T}{\lambda}\right).
\end{align*}
To bound $J_2$, note that $t_0=1$ and thus $2^{J_2} \leq T$, which immediately gives $J_2 \leq \log _2 T \leq 2 \log T$. Altogether we conclude that
\begin{align*}
J=J_1+J_2 \leq 4 dL \log \left(1+\frac{T}{\lambda}\right)+2 \log T.
\end{align*}
Thus, we complete the proof of Lemma \ref{lemma:J}.
\end{proof}
\begin{proof}[Proof of Lemma \ref{lemma:E3}]
We first consider the first term on the left-hand side. After rearranging the summation, we can find that the following equation holds,
\begin{align*}
& \sum_{m=1}^M\sum_{h=1}^{H_m} \left[V_{j_m}\left(s_{m, h}\right)-V_{j_m}\left(s_{m, h+1}\right)\right] \\
&= \sum_{m=1}^M V_{j_m}\left(s_{m, 1}\right)-V_{j_m}\left(s_{m, H_m+1}\right) \\
&= \sum_{m=1}^{M-1}\left(V_{j_{m+1}}\left(s_{m+1,1}\right)-V_{j_m}\left(s_{m, H_m+1}\right)\right)+\sum_{m=1}^{M-1}\left(V_{j_m}\left(s_{m, 1}\right)-V_{j_{m+1}}\left(s_{m+1,1}\right)\right) \\
& \qquad +V_{j_M}\left(s_{M, 1}\right)-V_{j_M}\left(s_{M, H_M+1}\right).
\end{align*}
Using the telescope argument to the second term in the above equation, we have the following inequality,
\begin{align}
\notag& \sum_{m=1}^M\sum_{h=1}^{H_m}\left[ V_{j_m}\left(s_{m, h}\right)-V_{j_m}\left(s_{m, h+1}\right)\right] \\
\notag& =\sum_{m=1}^{M-1}\left(V_{j_{m+1}}\left(s_{m+1,1}\right)-V_{j_m}\left(s_{m, H_m+1}\right)\right)+V_{j_1}\left(s_{1,1}\right)-V_{j_M}\left(s_{M, 1}\right) \\
\notag& \quad \quad \quad+V_{j_M}\left(s_{M, 1}\right)-V_{j_M}\left(s_{M, H_M+1}\right) \\
\notag& =\sum_{m=1}^{M-1}\left(V_{j_{m+1}}\left(s_{m+1,1}\right)-V_{j_m}\left(s_{m, H_m+1}\right)\right)+V_{j_1}\left(s_{1,1}\right)-V_{j_M}\left(s_{M, H_M+1}\right) \\
\notag& \leq \sum_{m=1}^{M-1}\left(V_{j_{m+1}}\left(s_{m+1,1}\right)-V_{j_m}\left(s_{m, H_m+1}\right)\right)+V_{j_1}\left(s_{1,1}\right),
\end{align}
where the last inequality holds because $V_j(\cdot)$ is non-negative for all $j \in \NN$ from the design of DEVI algorithm.

We now consider the term $V_{j_{m+1}}\left(s_{m+1,1}\right)-V_{j_m}\left(s_{m, H_m+1}\right)$. Note that by the interval decomposition, interval $m$ ends if and only if the updating criterion of the DEVI algorithm is met or the goal state is reached. In addition, if interval $m$ ends because the goal state is reached, then we have
\begin{align*}
V_{j_{m+1}}\left(s_{m+1,1}\right)-V_{j_m}\left(s_{m, H_m+1}\right)=V_{j_{m+1}}\left(s_{\text {init }}\right)-V_{j_m}(g)=V_{j_{m+1}}\left(s_{\text {init }}\right).
\end{align*}
If it ends because the updating criterion of the DEVI algorithm is triggered, then the value function is updated by DEVI and $j_m \neq j_{m+1}$. In such case, we simply apply the trivial upper bound $V_{j_{m+1}}\left(s_{m+1,1}\right)-V_{j_m}\left(s_{m, H_m+1}\right) \leq \max _j\left\|V_j\right\|_{\infty}$. According to Lemma \ref{lemma:J}, this happens at most $J \leq 4 dL \log \left(1+{T}/{\lambda}\right)+2 \log T$ times. Therefore, we can further bound the term $\sum_{m=1}^{M-1}\left(V_{j_{m+1}}\left(s_{m+1,1}\right)-V_{j_m}\left(s_{m, H_m+1}\right)\right)$ as
\begin{align*}
& \sum_{m=1}^M\sum_{h=1}^{H_m} \left(V_{j_m}\left(s_{m, h}\right)-V_{j_m}\left(s_{m, h+1}\right)\right) \\
& \leq \sum_{m=1}^{M-1} V_{j_{m+1}}\left(s_{\text {init }}\right) \cdot \ind\{m+1 \in \mathcal{M}(M)\}+V_{j_1}\left(s_{1,1}\right)+\left[4 dL \log \left(1+\frac{T}{\lambda}\right)+2 \log T\right] \cdot \max _j\left\|V_j\right\|_{\infty} \\
& \leq \sum_{m \in \mathcal{M}(M)} V_{j_m}\left(s_{\text {init }}\right)+V_0\left(s_{\text {init }}\right)+4 d B_{*}L \log \left(1+\frac{T}{\lambda}\right)+2 B_{*} \log T \\
& \leq \sum_{m \in \mathcal{M}(M)} V_{j_m}\left(s_{\text {init }}\right)+1+4 d B_{*} \log \left(1+\frac{T }{\lambda}\right)+2 B_{*} \log T,
\end{align*}
where the second inequality holds due to $\left\|V_j\right\|_{\infty} \leq B_{*}$, under the event of Lemma \ref{LEMMA:VARIANCE ESTIMATOR}, and the last inequality holds because of our initialization $\left\|V_0\right\|_{\infty} \leq 1$. Thus, we complete the proof of Lemma \ref{lemma:E3}.
\end{proof}
\begin{proof}[Proof of Lemma \ref{lemma:R}]
    For any  level $l$, applying Lemma \ref{lemma:sum} with $\xb_{t} = \bphi_{V_j^{2^l}}(s_t,a_t)/B^{2^l}$,  $\sigma_{t}^2 = \Big( \big[\bar{\VV}_{t,l}V_{j+1}^{2^l}](s_t,a_t)/B^{2^{l+1}} + E_{t,l}\Big)$, $\alpha_t^{\prime 2} =   \alpha_t^2$ and
    $\gamma^{\prime 2} = \gamma^2$, where $\alpha_t$,$\gamma$ are parameters used to construct the weights $\bar{\sigma}_{t,l}$ in Algorithm \ref{algorithm3}, $\beta_{t} = \hat{\beta}_T$ is defined in \eqref{eq:beta}, we have the following results,
    \begin{align*}
        R_l &= \sum_{m \in \cM_0(M)}\sum_{h=1}^{H_m}\min \Big\{1,\hat{\beta}_{T}\big\|\bphi_{V_{j_m}^{2^l}}(s_{m,h},a_{m,h})/B^{2^{l}}\big\|_{\tilde{\bSigma}_{t,l}^{-1}}\Big\}\\
        &\leq 2d\iota + 2\hat{\beta}_T \gamma^2d \iota + 2\sqrt{d\iota}\hat{\beta}_T\sqrt{ \sum_{m \in \cM_0(M)}\sum_{h=1}^{H_m}\alpha_{t(m,h)}^2 + \sum_{m \in \cM_0(M)}\sum_{h=1}^{H_m}\sigma_{t,l}^2}.
    \end{align*}
    Thus, we complete the proof of Lemma \ref{lemma:R}.
\end{proof}
\end{lemma}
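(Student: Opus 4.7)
My plan is to reduce Lemma~\ref{lemma:R} to a general weighted-elliptical-potential summation lemma in the style of Zhou et al.\ (2022), applied with the identifications $\xb_t := \bphi_{V_j^{2^l}}(s_t,a_t)/B^{2^l}$, per-step variance proxy $\sigma_t^2 := [\bar{\VV}_{t,l}V_{j+1}^{2^l}](s_t,a_t)/B^{2^{l+1}} + E_{t,l}$, confidence radius $\beta_t := \hat\beta_T$, and rescaled weight $\tilde\sigma_t^2 := \bar\sigma_{t,l}^2/B^{2^{l+1}} = \max\{\sigma_t^2,\alpha_t^2,\gamma^2\|\xb_t\|_{\tilde\bSigma_{t,l}^{-1}}\}$. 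Note that $R_l$ is exactly $\sum_{m,h}\min\{1,\hat\beta_T\|\xb_{t(m,h)}\|_{\tilde\bSigma_{t,l}^{-1}}\}$, matching the hypothesis of that lemma.

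The proof of the general lemma proceeds as follows. Define the whitened features $\yb_t := \xb_t/\tilde\sigma_t$, so that $\tilde\bSigma_{t,l} = \lambda \Ib + \sum_{i\le t}\yb_i\yb_i^\top$ and $\|\yb_t\|_2^2 \le 1/\alpha_t^2$ (using $\|\xb_t\|_2 \le 1$ by Assumption~\ref{assumption:linear} and $\tilde\sigma_t \ge \alpha_t$). The standard elliptical-potential lemma then yields $\sum_t\min\{1,\|\yb_t\|_{\tilde\bSigma_{t,l}^{-1}}^2\}\le 2d\iota$, with $\iota = \log(1+T/(d\lambda\alpha_T^2))$. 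Now split each summand of $R_l$ into three cases: (i) if $\|\yb_t\|_{\tilde\bSigma_{t,l}^{-1}}^2 \ge 1$, bound the summand by $1 \le \|\yb_t\|_{\tilde\bSigma_{t,l}^{-1}}^2$, yielding the $2d\iota$ term; (ii) otherwise, if $\gamma^2\|\xb_t\|_{\tilde\bSigma_{t,l}^{-1}}$ attains the max in $\tilde\sigma_t^2$, use the algebraic identity $\hat\beta_T\|\xb_t\|_{\tilde\bSigma_{t,l}^{-1}} = \hat\beta_T\gamma^2\|\yb_t\|_{\tilde\bSigma_{t,l}^{-1}}^2$ and sum via the potential bound to obtain the $2\hat\beta_T\gamma^2 d\iota$ term; (iii) otherwise, $\tilde\sigma_t^2 \le \sigma_t^2+\alpha_t^2$, so the summand is at most $\hat\beta_T\tilde\sigma_t\|\yb_t\|_{\tilde\bSigma_{t,l}^{-1}}$, and Cauchy--Schwarz gives
\[
\sum\hat\beta_T\tilde\sigma_t\|\yb_t\|_{\tilde\bSigma_{t,l}^{-1}} \le \hat\beta_T\sqrt{\sum(\sigma_t^2+\alpha_t^2)}\sqrt{\sum\|\yb_t\|_{\tilde\bSigma_{t,l}^{-1}}^2} \le 2\sqrt{d\iota}\,\hat\beta_T\sqrt{\sum\alpha_t^2+\sum\sigma_t^2},
\]
which is exactly the third term of the target bound.

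The main obstacle I anticipate is the self-referential character of the uncertainty-aware branch $\gamma^2\|\xb_t\|_{\tilde\bSigma_{t,l}^{-1}}$ in the definition of $\bar\sigma_{t,l}$: since $\tilde\bSigma_{t,l}$ itself depends on all past $\bar\sigma_i$, a naive Cauchy--Schwarz would reintroduce $\sum\|\xb_t\|_{\tilde\bSigma^{-1}}^2$ on the right-hand side and threaten a circular bound. The three-way case split above is precisely what breaks this circularity: whenever this branch is active, the identity $\|\xb_t\|_{\tilde\bSigma_{t,l}^{-1}} = \gamma^2\|\yb_t\|_{\tilde\bSigma_{t,l}^{-1}}^2$ converts a potentially divergent $u_t$-linear term into a telescoping quadratic form controlled by the log-determinant. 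This is how the uncertainty-aware weight pays for itself---removing the worst-case noise-range dependence of Min et al.\ (2022)---at an additive cost of only $\hat\beta_T\gamma^2 d\iota$.
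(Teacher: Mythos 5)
Your proposal is correct and takes essentially the same route as the paper: the paper's proof of Lemma~\ref{lemma:R} is a one-line application of Lemma~\ref{lemma:sum} (Lemma B.1 of \citealt{zhou2022computationally}) with exactly the identifications you give ($\xb_t = \bphi_{V_j^{2^l}}(s_t,a_t)/B^{2^l}$, $\sigma_t^2 = [\bar\VV_{t,l}V^{2^l}](s_t,a_t)/B^{2^{l+1}} + E_{t,l}$, $\alpha_t' = \alpha_t$, $\gamma' = \gamma$, $\beta_t = \hat\beta_T$). Your additional three-way case split is just a (correct) re-derivation of that cited auxiliary lemma, which the paper leaves as a black box.
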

\begin{proof}[Proof of Lemma \ref{Lemma:sigma}]
Use the definition of variance $\sigma_{t,l}$, we have the following inequality,
\begin{align*}
    \sum_{m \in \cM_0}\sum_{h=1}^{H_m}\sigma_{t,l}^2
    &= \sum_{m \in \cM_0}\sum_{h=1}^{H_m}E_{t,l} + \sum_{m \in \cM_0}\sum_{h=1}^{H_m}[\bar{\VV}_t V_{j_m}^{2^l}](s_{m,h},a_{m,h})/B^{2^{l+1}}\\
    &= 2\sum_{m \in \cM_0}\sum_{h=1}^{H_m}E_{t,l} + \sum_{m \in \cM_0}\sum_{h=1}^{H_m}\VV V_{j_m}^{2^{l}}(s_{m,h},a_{m,h})/B^{2^{l+1}}\\
    & \qquad + B^{-2^{l+1}}\sum_{m \in \cM_0}\sum_{h=1}^{H_m}\big([\bar{\VV}_t V_{j_m}^{2^l}](s_{m,h},a_{m,h}) - B^{2^{l+1}}E_{t,l} - \VV V_{j_m}^{2^l}(s_{m,h},a_{m,h})\big)\\
    &\leq 2\sum_{m \in \cM_0}\sum_{h=1}^{H_m}E_{t,l} + \sum_{m \in \cM_0}\sum_{h=1}^{H_m}\VV V_{j_m}^{2^l}(s_{m,h},a_{m,h})/B^{2^{l+1}},
\end{align*}
where the last inequality holds since $\sum_{m \in \cM_0}\sum_{h=1}^{H_m}\big([\bar{\VV}_t V_{j_m}^{2^l}](s_{m,h},a_{m,h}) - B^{2^{l+1}}E_{t,l} - \VV V_{j_m}^{2^l}(s_{m,h},a_{m,h})\big) \leq 0$ under the event of Lemma \ref{LEMMA:VARIANCE ESTIMATOR}.
\end{proof}
\begin{proof}[Proof of Lemma \ref{Lemma:S}]
According the definition of $S_l$ and the convexity of square function $x^{2}$, we can rearrange the summation in $E_l$ into three terms and obtain the following inequality,
\begin{align*}
    S_l &= \sum_{m \in \cM_0}\sum_{h=1}^{H_m}\bigg[\PP V_{j_m}^{2^{l+1}}(s_{m,h},a_{m,h})/B^{2^{l+1}}-\big(\PP V_{j_m}^{2^l}(s_{m,h},a_{m,h})/B^{2^l}\big)^2\bigg]\\
    & \leq \sum_{m \in \cM_0}\sum_{h=1}^{H_m}\bigg[\PP V_{j_m}^{2^{l+1}}(s_{m,h},a_{m,h})/B^{2^{l+1}}-\big(\PP V_{j_m}(s_{m,h},a_{m,h})/B\big)^{2^{l+1}}\bigg]\\
    & = \sum_{m \in \cM_0}\sum_{h=1}^{H_m}\bigg[\PP V_{j_m}^{2^{l+1}}(s_{m,h},a_{m,h})/B^{2^{l+1}}- V_{j_m}^{2^{l+1}}(s_{m,h+1})/B^{2^{l+1}}\bigg]\\
    & \qquad + \sum_{m \in \cM_0}\sum_{h=1}^{H_m}\bigg[ V_{j_m}^{2^{l+1}}(s_{m,h})/B^{2^{l+1}}- \big(\PP V_{j_m}(s_{m,h},a_{m,h})/B\big)^{2^{l+1}}\bigg]\\
    & \qquad + \sum_{m \in \cM_0}\sum_{h=1}^{H_m}\bigg[ V_{j_m}^{2^{l+1}}(s_{m,h+1})/B^{2^{l+1}}- V_{j_m}^{2^{l+1}}(s_{m,h})/B^{2^{l+1}}\bigg].
    \end{align*}
    Recalling the definition of $A_l$, the first term is simply equal to $A_{l+1}$. For the third term, we can use the telescope argument and get 
    \begin{align*}
        \sum_{m \in \cM_0}\sum_{h=1}^{H_m}\left[ V_{j_m}^{2^{l+1}}(s_{m,h+1})/B^{2^{l+1}}- V_{j_m}^{2^{l+1}}(s_{m,h})/B^{2^{l+1}}\right] \leq |\cM_0|.
    \end{align*}
    For the second term, we can inductively degrade the exponents of the value function. Combining the bound of these three terms, we have the following inequality 
    \begin{align*}
    S_l & = A_{l+1} + \sum_{m \in \cM_0}\sum_{h=1}^{H_m}\bigg[ V_{j_m}^{2^{l+1}}(s_{m,h})/B^{2^{l+1}}- \big(\PP V_{j_m}(s_{m,h+1})/B\big)^{2^{l+1}}\bigg]\\
    & \qquad + \sum_{m \in \cM_0}\bigg[ V_{j_m}^{2^{l+1}}(s_{m,H_m+1})/B^{2^{l+1}}- V_{j_m}^{2^{l+1}}(s_{m,1})/B^{2^{l+1}}\bigg]\\
    & \leq A_{l+1} + |\cM_0| + \sum_{m \in \cM_0}\sum_{h=1}^{H_m}\bigg[ V_{j_m}^{2^{l}}(s_{m,h})/B^{2^{l}}+ \big(\PP V_{j_m}(s_{m,h+1})/B\big)^{2^{l}}\bigg]\\
    & \qquad \cdot \bigg[ V_{j_m}^{2^{l}}(s_{m,h})/B^{2^{l}}- \big(\PP V_{j_m}(s_{m,h+1})/B\big)^{2^{l}}\bigg]\\
    &\leq A_{l+1} + |\cM_0| + 2\sum_{m \in \cM_0}\sum_{h=1}^{H_m}\bigg[ V_{j_m}^{2^l}(s_{m,h})/B^{2^l}- \big(\PP V_{j_m}(s_{m,h},a_{m,h})/B\big)^{2^l}\bigg]\\
    & \leq \ldots\\
    & \leq A_{l+1} + |\cM_0| + 2^{l+1}\sum_{m \in \cM_0}\sum_{h=1}^{H_m}\bigg[ V_{j_m}(s_{m,h})/B- \PP V_{j_m}(s_{m,h},a_{m,h})/B\bigg],
    \end{align*}
    where we use the fact $V_j(s) \leq B, \forall j \geq 1$ under the event of Lemma \ref{LEMMA:VARIANCE ESTIMATOR}.
    Note that the sum is just similar to the term $I_1$ in Lemma \ref{LEMMA:REGRET DECOMPOSITION} when we first decompose the regret. So we have the following inequality,
\begin{align*}
    S_l & \leq A_{l+1} + |\cM_0| + 2^{l+1}/B \underbrace{\sum_{m \in \cM_0}\sum_{h=1}^{H_m}c(s_{m,h},a_{m,h})}_{\leq C_M}\\
    & \qquad + 2^{l+1}\underbrace{\sum_{m \in \cM_0}\sum_{h=1}^{H_m}\bigg[ V_{j_m}(s_{m,h})/B- \PP V_{j_m}(s_{m,h},a_{m,h})/B - c(s_{m,h},a_{m,h})/B\bigg] }_{-I_1/B}\\
    & \leq A_{l+1} + |\cM_0| + \frac{2^{l+1}}{B} C_M + \frac{2^{l+1}}{B} (B_1+B_2)\\
    & \leq A_{l+1} + |\cM_0| + \frac{2^{l+1}}{B} C_M + \frac{2^{l+1}}{B} (4BR_0+B_2),
\end{align*}
where we use Lemma \ref{LEMMA:E1} and the observation $B_1 \leq 4B R_0$.
\end{proof}
\begin{proof}[Proof of Lemma \ref{lemma:Bernstein}]
We follow the proof of Lemma 25 in \citet{zhang2021improved}. We use Lemma \ref{lemma:zhang} for each fixed level $l$. Let $x_{m, h}=\Big[\big[\mathbb{P} V_{j_m}^{2^l}\big]\left(s_{m,h}, a_{m,h}\right)-V_{j_m}^{2^l}\left(s_{m,h+1}\right)\Big]/B^{2^l}$, then we have $\mathbb{E}\left[x_{m, h} \mid \mathcal{G}_{m, h}\right]=0$ and $\mathbb{E}\left[x_{m, h}^2\Big| \mathcal{G}_{m, h}\right]=\left[\mathbb{V} V_{j_m}^{2^l}\right]\left(s_{m,h}, a_{m,h}\right)/B^{2^{l+1}}$. Therefore, for each level $l \in [L]$, with probability at least $1-\delta$, we have
\begin{align*}
A_l=\sum_{m \in \cM_0(M)}\sum_{h=1}^{H_m} x_{m, h} &\leq \sqrt{2 \zeta \sum_{m \in \cM_0(M)}\sum_{h=1}^{H_m} \mathbb{V} V_{j_m}^{2^l}\left(s_{m,h}, a_{m,h}\right)/B^{2^{l+1}}}+\zeta\\\notag
& = \sqrt{2\zeta S_l} + \zeta, \notag
\end{align*}
where $\zeta = 4\log{(2\log (T\log (1/\delta))+1)/\delta)}$.
Taking a union bound over $l \in [L]$, we complete the proof of Lemma \ref{lemma:Bernstein}.
\end{proof}

\section{Auxiliary Lemmas}

\begin{theorem}
[Theorem 4.3 in \citealt{zhou2022computationally}]\label{thm:Bernstein}
    Let $\big\{\mathcal{G}_k\big\}_{k=1}^{\infty}$ be a filtration, and $\left\{\mathbf{x}_k, \eta_k\right\}_{k \geq 1}$ be a stochastic process such that $\mathbf{x}_k \in \mathbb{R}^d$ is $\mathcal{G}_k$-measurable and $\eta_k \in \mathbb{R}$ is $\mathcal{G}_{k+1}$-measurable. Let $L, \sigma, \lambda, \epsilon>0, \boldsymbol{\mu}^* \in \mathbb{R}^d$. For $k \geq 1$, let $y_k=\left\langle\boldsymbol{\mu}^*, \mathbf{x}_k\right\rangle+\eta_k$ and suppose that $\eta_k, \mathbf{x}_k$ also satisfy
\begin{align*}
\left.\mathbb{E}\left[\eta_k\right| \mathcal{G}_k\right]=0, \mathbb{E}\left[\eta_k^2 \mid \mathcal{G}_k\right] \leq \sigma^2,\left|\eta_k\right| \leq R,\left\|\mathbf{x}_k\right\|_2 \leq L.
\end{align*}
For $k \geq 1$, let $\mathbf{Z}_k=\lambda \mathbf{I}+\sum_{i=1}^k \mathbf{x}_i \mathbf{x}_i^{\top}, \mathbf{b}_k=\sum_{i=1}^k y_i \mathbf{x}_i, \boldsymbol{\mu}_k=\mathbf{Z}_k^{-1} \mathbf{b}_k$, and
\begin{align*}
\beta_k=& 12 \sqrt{\sigma^2 d \log \left(1+k L^2 /(d \lambda)\right) \log \left(32(\log (R / \epsilon)+1) k^2 / \delta\right)} \\
&+24 \log \left(32(\log (R / \epsilon)+1) k^2 / \delta\right) \max _{1 \leq i \leq k}\left\{\left|\eta_i\right| \min \left\{1,\left\|\mathbf{x}_i\right\|_{\mathbf{z}_{i-1}^{-1}}\right\}\right\}+6 \log \left(32(\log (R / \epsilon)+1) k^2 / \delta\right) \epsilon.
\end{align*}
Then, for any $0<\delta<1$, we have with probability at least $1-\delta$ that,
\begin{align*}
\forall k \geq 1,\left\|\sum_{i=1}^k \mathbf{x}_i \eta_i\right\|_{\mathbf{Z}_k^{-1}} \leq \beta_k,\left\|\boldsymbol{\mu}_k-\boldsymbol{\mu}^*\right\|_{\mathbf{Z}_k} \leq \beta_k+\sqrt{\lambda}\left\|\boldsymbol{\mu}^*\right\|_2.
\end{align*}
\end{theorem}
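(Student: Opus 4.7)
The plan is to establish this self-normalized Bernstein concentration bound by combining a method-of-mixtures argument (as in Abbasi-Yadkori et al.) with a dyadic peeling over the size of the martingale increments, so that one can exploit the conditional variance bound $\sigma^2$ rather than merely the uniform bound $R$.

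First, I would reduce the vector-valued bound to a scalar one via the method of mixtures. For a fixed direction $\mathbf{v}$ and parameter $\lambda > 0$, the process
\begin{align*}
M_k(\mathbf{v},\lambda) = \exp\left(\lambda \sum_{i=1}^k \langle \mathbf{v},\mathbf{x}_i\rangle\eta_i - \lambda^2 \sum_{i=1}^k \langle \mathbf{v},\mathbf{x}_i\rangle^2\, \mathbb{E}[\eta_i^2\mid \mathcal{G}_i]\right)
\end{align*}
is a supermartingale provided $|\lambda\langle\mathbf{v},\mathbf{x}_i\rangle\eta_i| \leq 1$ for every $i$, via the Bernstein moment-generating-function bound $\mathbb{E}[e^{X}\mid \mathcal{G}] \leq \exp(\mathbb{E}[X^2\mid \mathcal{G}])$ for zero-mean $|X|\leq 1$. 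Integrating $M_k(\mathbf{v},\lambda)$ against an isotropic Gaussian prior on $\mathbf{v}$, completing the square, and invoking Ville's maximal inequality yields, on the event that the clipping condition holds, a bound of the form
\begin{align*}
\left\|\sum_{i=1}^k \mathbf{x}_i\eta_i\right\|_{\mathbf{Z}_k^{-1}}^2 \lesssim \sigma^2 \log\frac{\det(\mathbf{Z}_k)}{\det(\lambda\mathbf{I})} + \log(1/\delta),
\end{align*}
uniformly in $k$.

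Second, to remove the dependence on an a-priori unknown threshold, I would peel dyadically. Define $U_k = \max_{i\leq k}|\eta_i|\min\{1,\|\mathbf{x}_i\|_{\mathbf{Z}_{i-1}^{-1}}\}$ and consider levels $r \in \{\epsilon,2\epsilon,4\epsilon,\ldots,R\}$, a set of size $O(\log(R/\epsilon))$. For each $r$, run the mixture argument on the clipped increments $\tilde\eta_i^{(r)} = \eta_i \cdot \mathbf{1}\{|\eta_i|\min\{1,\|\mathbf{x}_i\|_{\mathbf{Z}_{i-1}^{-1}}\} \leq r\}$, which remain a martingale-difference sequence with the same conditional variance bound, and choose the tuning parameter $\lambda \asymp 1/r$ so the MGF condition holds. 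A union bound across the $O(\log(R/\epsilon))$ levels, together with the observation that on $\{U_k \leq r\}$ the clipped and original sums coincide, produces the three summands that appear in $\beta_k$: the leading Bernstein term $\sqrt{\sigma^2 d \log(1+kL^2/(d\lambda)) \log(\log(R/\epsilon)k^2/\delta)}$, a slack term of order $U_k \cdot \log(\log(R/\epsilon)k^2/\delta)$ absorbing the peeling-level bias, and the additive $\epsilon$ that covers increments below the smallest level.

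Finally, the passage from a bound on the noise sum $\|\sum_i \mathbf{x}_i\eta_i\|_{\mathbf{Z}_k^{-1}}$ to the regression error $\|\boldsymbol{\mu}_k-\boldsymbol{\mu}^*\|_{\mathbf{Z}_k}$ is routine ridge algebra: substituting $y_i = \langle \boldsymbol{\mu}^*,\mathbf{x}_i\rangle + \eta_i$ gives $\boldsymbol{\mu}_k - \boldsymbol{\mu}^* = \mathbf{Z}_k^{-1}\sum_i \mathbf{x}_i\eta_i - \lambda\mathbf{Z}_k^{-1}\boldsymbol{\mu}^*$, and the triangle inequality in the $\mathbf{Z}_k$ norm together with $\mathbf{Z}_k\succeq\lambda\mathbf{I}$ produces the extra $\sqrt{\lambda}\|\boldsymbol{\mu}^*\|_2$. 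The main obstacle I anticipate is synchronizing the clipping and mixture steps: the MGF condition $|\lambda\langle\mathbf{v},\mathbf{x}_i\rangle\eta_i|\leq 1$ must hold even after $\mathbf{v}$ is integrated out, which forces the prior covariance and the tuning $\lambda$ to be chosen jointly with the peeling level $r$ and the self-normalizing factor $\min\{1,\|\mathbf{x}_i\|_{\mathbf{Z}_{i-1}^{-1}}\}$. Threading this needle so that the supermartingale property survives clipping, the mixture integral remains tractable, and the final bound sharpens $\max_i|\eta_i|$ to $U_k$ rather than the worst case $R$, is the delicate technical core.
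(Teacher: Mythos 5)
The paper does not actually prove this statement: it is imported verbatim as Theorem 4.3 of \citet{zhou2022computationally} and used as a black-box auxiliary result, so there is no in-paper proof to compare against. Judged on its own terms, your sketch has two genuine gaps. First, the clipped increments $\tilde\eta_i^{(r)} = \eta_i\cdot\ind\{|\eta_i|\min\{1,\|\xb_i\|_{\Zb_{i-1}^{-1}}\}\le r\}$ do \emph{not} ``remain a martingale-difference sequence'': the truncation event depends on $\eta_i$ itself, hence is $\cG_{i+1}$-measurable rather than $\cG_i$-measurable, so $\EE[\tilde\eta_i^{(r)}\mid\cG_i]\neq 0$ in general. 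You would need to re-center and control the resulting bias, and that bias is precisely where truncation arguments of this type become delicate; asserting that the martingale property survives clipping is a step that fails as written. Second, the supermartingale $M_k(\mathbf{v},\lambda)$ requires $|\lambda\langle\mathbf{v},\xb_i\rangle\eta_i|\le 1$, which cannot hold uniformly over the support of an isotropic Gaussian prior on $\mathbf{v}$. You flag this obstruction yourself but leave it unresolved, and it is not a loose end to be threaded later: it is the structural reason a self-normalized \emph{Bernstein} inequality does not follow from the standard method of mixtures, which relies on an MGF bound valid for all values of the mixing variable.

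The actual proof in \citet{zhou2022computationally} avoids mixtures entirely. One expands the self-normalized quadratic form recursively via
\begin{align*}
\Big\|\sum_{i\le k}\xb_i\eta_i\Big\|^2_{\Zb_k^{-1}}\le\Big\|\sum_{i\le k-1}\xb_i\eta_i\Big\|^2_{\Zb_{k-1}^{-1}}+2\eta_k\Big\langle\Zb_{k-1}^{-1}\sum_{i<k}\xb_i\eta_i,\xb_k\Big\rangle+\eta_k^2\|\xb_k\|^2_{\Zb_{k-1}^{-1}},
\end{align*}
applies Freedman's inequality to the scalar cross-term martingale, whose increments are bounded by $|\eta_k|\min\{1,\|\xb_k\|_{\Zb_{k-1}^{-1}}\}\cdot\|\sum_{i<k}\xb_i\eta_i\|_{\Zb_{k-1}^{-1}}$ --- this is where the self-normalized quantity appearing in $\beta_k$ arises intrinsically, rather than through peeling on $U_k$ --- and then closes a self-bounding recursion on $\|\sum_{i\le k}\xb_i\eta_i\|_{\Zb_k^{-1}}$. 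If you wish to salvage the mixture route you would need a truncated or data-dependent prior, at which point the clean determinant identity is lost; I would recommend redirecting the effort toward the Freedman-plus-recursion argument.
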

\begin{lemma}[Lemma B.1 in \citealt{zhou2022computationally}]\label{lemma:sum}
    Let $\left\{\sigma_t, \beta_t\right\}_{t \geq 1}$ be a sequence of non-negative numbers, $\alpha_t^{\prime} > 0$ decreasing, $\gamma^{\prime}>0,\left\{\mathbf{x}_t\right\}_{t \geq 1} \subset \mathbb{R}^d$ and $\left\|\mathbf{x}_t\right\|_2 \leq L$. Let $\left\{\mathbf{Z}_t\right\}_{t \geq 1}$ and $\left\{\bar{\sigma}_t\right\}_{t \geq 1}$ be recursively defined as follows: $\mathbf{Z}_1=\lambda \mathbf{I}$
\begin{align*}
\forall t \geq 1, \bar{\sigma}_t=\max \left\{\sigma_t, \alpha_t^{\prime}, \gamma^{\prime}\left\|\mathbf{x}_t\right\|_{\mathbf{Z}_t^{-1}}^{1 / 2}\right\}, \mathbf{Z}_{t+1}=\mathbf{Z}_t+\mathbf{x}_t \mathbf{x}_t^{\top} / \bar{\sigma}_t^2 .
\end{align*}
Let $\iota=\log \left(1+T L^2 /\left(d \lambda \alpha_T^{\prime 2}\right)\right)$. Then we have
\begin{align*}
\sum_{t=1}^T \min \left\{1, \beta_t\left\|\mathbf{x}_t\right\|_{\mathbf{Z}_t^{-1}}\right\} \leq 2 d \iota+2 \max _{t} \beta_t \gamma^{\prime 2} d \iota+2 \sqrt{d \iota} \sqrt{\sum_{t} \beta_t^2\left(\sigma_t^2+\alpha_t^{\prime 2}\right)}.
\end{align*}
\end{lemma}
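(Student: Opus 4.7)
The plan is to reduce the stated sum to a standard elliptical potential estimate by rescaling $\mathbf{x}_t$ with the adaptive weights $\bar\sigma_t$, and then to split the summation according to which of the three terms in the defining $\max$ is attained. Concretely, I would set $\mathbf{y}_t := \mathbf{x}_t/\bar\sigma_t$ so that the update becomes $\mathbf{Z}_{t+1}=\mathbf{Z}_t+\mathbf{y}_t\mathbf{y}_t^\top$. Because $\bar\sigma_t \geq \alpha_t' \geq \alpha_T'$ and $\|\mathbf{x}_t\|_2\leq L$, one has $\|\mathbf{y}_t\|_2\leq L/\alpha_T'$, and the classical elliptical potential lemma gives
\[
\sum_{t=1}^T \min\bigl\{1,\|\mathbf{y}_t\|_{\mathbf{Z}_t^{-1}}^2\bigr\} \;\leq\; 2\log\frac{\det(\mathbf{Z}_{T+1})}{\det(\mathbf{Z}_1)} \;\leq\; 2d\iota,
\]
with $\iota=\log(1+TL^2/(d\lambda\alpha_T'^2))$ as defined.

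Next I would rewrite the target sum as $\sum_t \min\{1,\beta_t\bar\sigma_t\|\mathbf{y}_t\|_{\mathbf{Z}_t^{-1}}\}$ and partition $[T]$ into three parts: (a) indices with $\|\mathbf{y}_t\|_{\mathbf{Z}_t^{-1}}\geq 1$, (b) indices where $\|\mathbf{y}_t\|_{\mathbf{Z}_t^{-1}}<1$ and $\bar\sigma_t=\gamma'\|\mathbf{x}_t\|_{\mathbf{Z}_t^{-1}}^{1/2}$, and (c) indices where $\|\mathbf{y}_t\|_{\mathbf{Z}_t^{-1}}<1$ and $\bar\sigma_t=\max\{\sigma_t,\alpha_t'\}$. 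For (a), every summand is at most $1$ and the index count is bounded by $\sum_t \min\{1,\|\mathbf{y}_t\|_{\mathbf{Z}_t^{-1}}^2\}\leq 2d\iota$, contributing at most $2d\iota$. For (b), substituting $\bar\sigma_t^2=\gamma'^2\|\mathbf{x}_t\|_{\mathbf{Z}_t^{-1}}$ gives $\beta_t\|\mathbf{x}_t\|_{\mathbf{Z}_t^{-1}} = \beta_t\gamma'^2\|\mathbf{y}_t\|_{\mathbf{Z}_t^{-1}}^2$, so summing yields at most $(\max_t\beta_t)\gamma'^2\cdot 2d\iota$. For (c), $\bar\sigma_t^2\leq \sigma_t^2+\alpha_t'^2$, and Cauchy--Schwarz gives
\[
\sum \beta_t\bar\sigma_t\|\mathbf{y}_t\|_{\mathbf{Z}_t^{-1}} \;\leq\; \sqrt{\textstyle\sum \beta_t^2(\sigma_t^2+\alpha_t'^2)}\cdot\sqrt{\textstyle\sum \|\mathbf{y}_t\|_{\mathbf{Z}_t^{-1}}^2} \;\leq\; \sqrt{2d\iota}\sqrt{\textstyle\sum \beta_t^2(\sigma_t^2+\alpha_t'^2)}.
\]
Adding the three bounds reproduces the claimed inequality.

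The key obstacle is not analytic depth but bookkeeping: one must be careful that the ``$\min\{1,\cdot\}$'' handles the regime $\|\mathbf{y}_t\|_{\mathbf{Z}_t^{-1}}>1$ separately from the linear regime (which is where Cauchy--Schwarz is applied), and that the elliptical potential bound really yields the coefficient $2d\iota$ with $\iota$ as stated. The latter reduces to verifying $\sum_t\|\mathbf{y}_t\|_2^2\leq TL^2/\alpha_T'^2$, which together with AM--GM on $\det(\mathbf{Z}_{T+1})$ gives the needed logarithmic bound. All three contributions are then assembled by a single triangle-inequality step.
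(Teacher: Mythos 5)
Your proof is correct. The paper does not prove this statement at all—it imports it verbatim as an auxiliary lemma from \citet{zhou2022computationally} (their Lemma B.1)—and your argument (rescaling $\mathbf{y}_t=\mathbf{x}_t/\bar\sigma_t$ so the updates become unweighted, the elliptical potential bound $\sum_t \min\{1,\|\mathbf{y}_t\|^2_{\mathbf{Z}_t^{-1}}\}\le 2d\iota$ via the trace/AM--GM determinant estimate with $\bar\sigma_t\ge\alpha_T'$, then splitting on whether $\|\mathbf{y}_t\|_{\mathbf{Z}_t^{-1}}\ge 1$, whether $\bar\sigma_t^2=\gamma'^2\|\mathbf{x}_t\|_{\mathbf{Z}_t^{-1}}$, or $\bar\sigma_t^2\le\sigma_t^2+\alpha_t'^2$, finishing with Cauchy--Schwarz) is precisely the standard proof given in that reference, and it even yields the slightly sharper constant $\sqrt{2d\iota}\le 2\sqrt{d\iota}$ in the final term.
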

\begin{lemma}
    [Lemma 11 in \citealt{zhang2021model}]\label{lemma:zhang}Let $M>0$ be a constant. Let $\left\{x_i\right\}_{i=1}^n$ be a stochastic process, $\mathcal{G}_i=\sigma\left(x_1, \ldots, x_i\right)$ be the $\sigma$-algebra of $x_1, \ldots, x_i$. Suppose $\mathbb{E}\left[x_i \mid \mathcal{G}_{i-1}\right]=0,\left|x_i\right| \leq M$ and $\left.\mathbb{E}\left|x_i^2\right| \mathcal{G}_{i-1}\right]<\infty$ almost surely. Then, for any $\delta, \epsilon>0$, we have
\begin{align*}
&\mathbb{P}\left(\left|\sum_{i=1}^n x_i\right| \leq 2 \sqrt{2 \log (1 / \delta) \sum_{i=1}^n \mathbb{E}\left[x_i^2 \mid \mathcal{G}_{i-1}\right]}+2 \sqrt{\log (1 / \delta)} \epsilon+2 M \log (1 / \delta)\right) \\
&\quad>1-2\left(\log \left(M^2 n / \epsilon^2\right)+1\right) \delta.
\end{align*}
\end{lemma}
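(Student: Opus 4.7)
The plan is to recognize $A_l$, for each fixed level $l$ separately, as a scaled martingale sum, verify the conditional-mean, conditional-variance and boundedness conditions, invoke the Freedman-type inequality listed among the auxiliary lemmas (Lemma~11 of \citet{zhang2021model}), and finish with a union bound over $l\in[L]$.

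Fix $l\in[L]$ and enumerate the summands of $A_l$ by a single index $t$ running over those global time steps that belong to intervals $m\in\cM_0(M)$; write
\[
x_t \;=\; \bigl[\PP V_{j_m}^{2^l}(s_t,a_t) \;-\; V_{j_m}^{2^l}(s_{t+1})\bigr]\big/B^{2^l}.
\]
Let $\cG_t = \sigma(s_1,a_1,\ldots,s_t,a_t)$. Because $j_m\ge 2$ for $m\in\cM_0(M)$, the value function $V_{j_m}$ was produced by a DEVI call that uses only data strictly preceding step $t$, hence $V_{j_m}$ is $\cG_t$-measurable; combined with $s_{t+1}\sim\PP(\cdot\mid s_t,a_t)$ this yields $\EE[x_t\mid\cG_t]=0$ and $\EE[x_t^2\mid\cG_t]=[\VV V_{j_m}^{2^l}](s_t,a_t)/B^{2^{l+1}}$. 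Under the event of Lemma~\ref{LEMMA:VARIANCE ESTIMATOR}, the optimism bound $V_{j_m}\le V^*\le B$ gives $V_{j_m}^{2^l}\in[0,B^{2^l}]$ and hence $|x_t|\le 1$. The number of summands is deterministically at most $T$.

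With this setup, I apply Lemma~11 of \citet{zhang2021model} with $M=1$, $n\le T$, a polynomially small slack $\epsilon$ (say $\epsilon=1/\sqrt{\log(1/\delta)}$), and in-lemma confidence level $\delta'=\delta/[2(\log(T\log(1/\delta))+1)]$, obtaining with probability at least $1-\delta$,
\[
|A_l| \;\le\; 2\sqrt{2\log(1/\delta')\,S_l} \;+\; 2\sqrt{\log(1/\delta')}\,\epsilon \;+\; 2\log(1/\delta').
\]
Setting $\zeta=4\log(1/\delta')$ matches the stated $\zeta=4\log((2\log(T\log(1/\delta))+1)/\delta)$ up to harmless absolute constants, the first term on the right becomes exactly $\sqrt{2\zeta S_l}$, and the middle and final terms are each at most $\zeta/2$ (the middle by our choice of $\epsilon$), so their sum is $\le\zeta$. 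A union bound over $l=0,1,\ldots,L-1$ inflates the failure probability to at most $L\delta$, yielding exactly the event $\cE_{\ref{lemma:Bernstein}}$.

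The only real obstacle is the bookkeeping in step~3: one has to calibrate the pair $(\epsilon,\delta')$ so that the raw Bernstein output rearranges into precisely the clean form $\sqrt{2\zeta S_l}+\zeta$ for the stated $\zeta$, while the extra log factor arising from Freedman's peeling argument gets swept into the $\log(\log(\cdots))$ inside $\zeta$. The martingale structure itself is immediate once one observes that $V_{j_m}$ depends only on pre-interval randomness, and the boundedness $|x_t|\le 1$ is just the optimism already established in Lemma~\ref{LEMMA:VARIANCE ESTIMATOR}.
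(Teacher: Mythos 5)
There is a genuine gap, and it is structural: you have not proved the statement at all. The statement in question is Lemma~\ref{lemma:zhang} itself, i.e., the generic variance-adaptive Freedman-type concentration inequality for a bounded martingale difference sequence $\{x_i\}$, which the paper imports from \citet{zhang2021model} without proof. Your proposal's very first step is to \emph{invoke} ``Lemma~11 of \citet{zhang2021model}'' as a black box, so as a proof of that lemma your argument is circular. What you actually wrote is a proof of a different, downstream result --- Lemma~\ref{lemma:Bernstein}, the bound $|A_l|\leq\sqrt{2\zeta S_l}+\zeta$ for the SSP-specific sums, obtained by checking the martingale conditions for $x_{m,h}=\big[\PP V_{j_m}^{2^l}(s_{m,h},a_{m,h})-V_{j_m}^{2^l}(s_{m,h+1})\big]/B^{2^l}$ and union-bounding over $l\in[L]$. (As it happens, that application-level argument essentially coincides with the paper's own proof of Lemma~\ref{lemma:Bernstein}, and your calibration of $(\epsilon,\delta')$ is even more explicit than the paper's; but it is not the statement you were asked to prove.)

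A genuine proof of Lemma~\ref{lemma:zhang} must be carried out from Freedman's inequality by a peeling (stratification) argument over the predictable quadratic variation $W_n=\sum_{i=1}^n\EE[x_i^2\mid\cG_{i-1}]$, which is random and only known to satisfy $W_n\leq M^2 n$ deterministically (from $|x_i|\leq M$). One partitions the range of $W_n$ into roughly $\log(M^2n/\epsilon^2)$ dyadic bands between $\epsilon^2$ and $M^2n$, applies Freedman's inequality on each band with the band's upper endpoint as the variance proxy, handles the low-variance case $W_n\leq\epsilon^2$ separately (this is the source of the additive $2\sqrt{\log(1/\delta)}\,\epsilon$ term, while $2M\log(1/\delta)$ is Freedman's range term), and takes a union bound over all bands and both tails --- which is precisely where the failure probability $2\left(\log\left(M^2n/\epsilon^2\right)+1\right)\delta$ comes from. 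You gesture at this mechanism in passing (``the extra log factor arising from Freedman's peeling argument gets swept into the $\log(\log(\cdots))$''), but you never carry it out; every quantitative feature of the stated lemma (the role of $\epsilon$, the shape of the failure probability, the three-term bound) is exactly the content that a proof would need to establish and that your proposal instead assumes.
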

\begin{lemma}\label{lemma:zhangnew}
    Let $\lambda_1, \lambda_2, \lambda_3, \lambda_4>0$ and $\kappa \geq \max \left\{\log _2 (\lambda_1/\lambda_3), 1\right\}$. Let $a_1, \ldots, a_n$ be non-negative real numbers such that $a_i \leq \min \left\{\lambda_1, \lambda_2 \sqrt{a_i+a_{i+1}+2^{i+1} \lambda_3}+\lambda_4\right\}$ for any $1 \leq i \leq \kappa$. Let $a_{\kappa+1}=\lambda_1$. Then we have $a_1 \leq 22 \lambda_2^2+6 \lambda_4+4 \lambda_2 \sqrt{2 \lambda_3}$.
\end{lemma}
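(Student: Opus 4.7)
The plan is to replace the implicit self-reference on the right-hand side of each inequality by an explicit one, then iterate the resulting recurrence $\kappa$ times and exploit the lower bound $\kappa \ge \log_2(\lambda_1/\lambda_3)$ to tame the boundary term.

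First I would apply the elementary fact ``$x\le a\sqrt{x+b}+c$ implies $x\le a^2+2c+2a\sqrt{b}$'' (which follows from $\sqrt{x+b}\le\sqrt{x}+\sqrt{b}$ together with the AM-GM step $a\sqrt{x}\le x/2+a^2/2$). Taking $a=\lambda_2$, $b=a_{i+1}+2^{i+1}\lambda_3$, and $c=\lambda_4$ turns the defining inequality into the clean one-sided recurrence
\[
a_i \;\le\; \lambda_2^2 + 2\lambda_4 + 2\lambda_2\sqrt{a_{i+1}+2^{i+1}\lambda_3},
\]
in which $a_i$ no longer appears on the right.

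Next I would iterate this $\kappa$ times starting from $i=1$, applying $\sqrt{x+y+z}\le\sqrt{x}+\sqrt{y}+\sqrt{z}$ at each level to decouple the $\lambda_2^2$, $\lambda_4$, and $2^{i+1}\lambda_3$ source contributions from $a_{i+1}$. After $k$ unrollings, the residual involving $a_{k+1}+2^{k+2}\lambda_3$ carries exponent $2^{-k}$ and coefficient $(2\lambda_2)^{2(1-2^{-k})}$, while each intermediate level $j\le k$ contributes three pieces with the same coefficient schedule and $2^{-j}$-th roots of $\lambda_2^2$, $2\lambda_4$, and $2^{j+1}\lambda_3$ respectively. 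The first two sums telescope to absolute-constant multiples of $\lambda_2^2$ and $\lambda_4$ because the coefficients stay bounded and the $2^{-j}$-th roots of constants rapidly approach $1$; the third sum is dominated by its $j=0$ term, $2\lambda_2\sqrt{4\lambda_3}=4\lambda_2\sqrt{\lambda_3}$, because $(2^{j+1})^{2^{-j}}\to 1$ geometrically in $j$.

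Finally I would absorb the boundary term. After $\kappa$ unrollings it takes the form $(2\lambda_2)^{2(1-2^{-\kappa})}\bigl(\lambda_1+2^{\kappa+1}\lambda_3\bigr)^{2^{-\kappa}}$; the hypothesis $\kappa\ge\log_2(\lambda_1/\lambda_3)$ yields $\lambda_1\le 2^\kappa\lambda_3$, so the bracket is at most $3\cdot 2^\kappa\lambda_3$ and raising to the power $2^{-\kappa}$ produces a $\sqrt{\lambda_3}$-type contribution which merges with the third bucket. Collecting all three sums and applying a few AM-GM steps such as $2\lambda_2\sqrt{\lambda_4}\le\lambda_2^2+\lambda_4$ should give the claimed $22\lambda_2^2+6\lambda_4+4\lambda_2\sqrt{2\lambda_3}$. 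The main obstacle is the constant bookkeeping: one must verify that the coefficient schedule $(2\lambda_2)^{2(1-2^{-j})}$ combined with the exponent schedule $2^{-j}$ against each of the three source schedules really sums to the specific numerical constants $22$, $6$, and $4\sqrt{2}$; the role of the hypothesis on $\kappa$ is precisely to ensure the boundary residual collapses into a $\sqrt{\lambda_3}$ contribution rather than injecting a $\sqrt{\lambda_1}$ growth that would break the bound.
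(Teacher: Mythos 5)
Your route is genuinely different from the paper's: the paper proves Lemma~\ref{lemma:zhangnew} in one line, by rescaling $b_l=a_l/\lambda_3$ and invoking the cited Lemma~\ref{lemma:zhang2} (Lemma~12 of \citet{zhang2021improved}), so what you are really attempting is a self-contained re-proof of that cited result. Unfortunately the unrolling argument you describe has a genuine gap at its central step. After the (correct) reduction to $a_i\le\lambda_2^2+2\lambda_4+2\lambda_2\sqrt{a_{i+1}+2^{i+1}\lambda_3}$, you decouple the three sources at every level with $\sqrt{x+y+z}\le\sqrt{x}+\sqrt{y}+\sqrt{z}$ and claim the resulting sums are absolute-constant multiples of $\lambda_2^2$ and $\lambda_4$. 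They are not: by your own coefficient schedule, the level-$j$ contribution of the $\lambda_2^2$ source is
\begin{align*}
(2\lambda_2)^{2(1-2^{-j})}\bigl(\lambda_2^2\bigr)^{2^{-j}}=4^{\,1-2^{-j}}\lambda_2^2,
\end{align*}
which \emph{increases} monotonically to $4\lambda_2^2$. Summing over $j=0,\dots,\kappa-1$ therefore gives at least $2(\kappa-1)\lambda_2^2$, and $\kappa\ge\log_2(\lambda_1/\lambda_3)$ is unbounded in terms of the quantities allowed in the conclusion. Concretely, with $\lambda_2=\lambda_3=1$, $\lambda_4$ tiny and $\lambda_1=2^{100}$ (so $\kappa=100$), your unrolled bound already exceeds $200$, whereas the claimed bound is below $28$ (and is in fact correct: the true $a_1$ is $O(1)$ there). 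The $\lambda_4$ bucket suffers from the same defect.

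The root cause is that full subadditive unrolling throws away the self-improving structure of the recursion: the map $x\mapsto\lambda_2^2+2\lambda_4+2\lambda_2\sqrt{x+\cdot}$ is a contraction toward a fixed point of order $\lambda_2^2+\lambda_4$, and it is this contraction --- not a convergent series of decoupled per-level terms --- that keeps the answer at $O(\lambda_2^2+\lambda_4+\lambda_2\sqrt{\lambda_3})$. A correct self-contained proof must maintain a single uniform bound by downward induction on $i$ (verifying that a candidate bound of the form $c_1\lambda_2^2+c_2\lambda_4+c_3\lambda_2\sqrt{2^{i+1}\lambda_3}$ is preserved by one step of the recursion), which is essentially what the cited lemma does; alternatively, simply perform the paper's rescaling and cite it. (Minor side note: in the paper's reduction the second parameter should be $\lambda_2/\sqrt{\lambda_3}$ rather than $\lambda_2/\lambda_3$, since $\lambda_2\sqrt{a_i+a_{i+1}+2^{i+1}\lambda_3}/\lambda_3=(\lambda_2/\sqrt{\lambda_3})\sqrt{b_i+b_{i+1}+2^{i+1}}$; with that correction the rescaling does yield exactly the stated constants.)
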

\begin{proof}[Proof of Lemma \ref{lemma:zhangnew}]
    Define a new sequence $b_l = a_l/\lambda_3$. Using Lemma \ref{lemma:zhang2} with sequence $\{b_l\}$ and parameters $\lambda_1/\lambda_3, \lambda_2/\lambda_3,1,\lambda_4/\lambda_3$, we finish the proof of Lemma \ref{lemma:zhangnew}.
\end{proof}
\begin{lemma}
    [Lemma 12 in \citealt{zhang2021improved}]\label{lemma:zhang2} Let $\lambda_1, \lambda_2, \lambda_4>0, \lambda_3 \geq 1$ and $\kappa=\max \left\{\log _2 \lambda_1, 1\right\}$. Let $a_1, \ldots, a_n$ be non-negative real numbers such that $a_i \leq \min \left\{\lambda_1, \lambda_2 \sqrt{a_i+a_{i+1}+2^{i+1} \lambda_3}+\lambda_4\right\}$ for any $1 \leq i \leq \kappa$. Let $a_{\kappa+1}=\lambda_1$. Then we have $a_1 \leq 22 \lambda_2^2+6 \lambda_4+4 \lambda_2 \sqrt{2 \lambda_3}$.
\end{lemma}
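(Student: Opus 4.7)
The plan is to prove the bound by backward induction on $i$, establishing an invariant of the form $a_i \leq A + B\cdot 2^{i/2}$ that interpolates between the trivial boundary value $a_{\kappa+1}=\lambda_1$ and the target bound at $i=1$. Concretely, I would set
\begin{align*}
    A := 22\lambda_2^2 + 6\lambda_4, \qquad B := 4\lambda_2\sqrt{\lambda_3},
\end{align*}
and claim $a_i \leq A + B\cdot 2^{i/2}$ for every $i \in \{1,\ldots,\kappa\}$. Taking $i=1$ would immediately yield $a_1 \leq A + B\sqrt{2} = 22\lambda_2^2 + 6\lambda_4 + 4\lambda_2\sqrt{2\lambda_3}$, exactly the conclusion of the lemma.

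For the base case $i=\kappa$, I would plug in $a_{\kappa+1}=\lambda_1$. The hypothesis $\kappa \geq \log_2\lambda_1$ together with $\lambda_3 \geq 1$ forces $\lambda_1 \leq 2^\kappa\lambda_3$, so $\lambda_1 + 2^{\kappa+1}\lambda_3 \leq 3\cdot 2^\kappa\lambda_3$. Splitting $\sqrt{a_\kappa + 3\cdot 2^\kappa\lambda_3}\leq \sqrt{a_\kappa}+\sqrt{3\cdot 2^\kappa\lambda_3}$ and applying AM-GM through $\lambda_2\sqrt{a_\kappa}\leq \lambda_2^2 + a_\kappa/4$ would give
\begin{align*}
    a_\kappa \leq \tfrac{4}{3}\lambda_2^2 + \tfrac{4}{3}\lambda_4 + \tfrac{4\sqrt{3}}{3}\lambda_2\sqrt{\lambda_3}\cdot 2^{\kappa/2},
\end{align*}
which sits comfortably inside $A+B\cdot 2^{\kappa/2}$ since $4\sqrt{3}/3 \approx 2.31 \leq 4$.

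For the inductive step, I would assume the invariant at $i+1$ and substitute it into the recursion. The key simplification is to set $V:=2^{(i+1)/2}\sqrt{\lambda_3}$, so that $B\cdot 2^{(i+1)/2} = 4\lambda_2 V$ and $2^{i+1}\lambda_3 = V^2$. Then the elementary estimate $\sqrt{V^2 + 4\lambda_2 V}=V\sqrt{1+4\lambda_2/V}\leq V + 2\lambda_2$ (via $\sqrt{1+x}\leq 1+x/2$) peels off the growing piece cleanly. Combined with $\sqrt{a_i+A}\leq \sqrt{a_i}+\sqrt{A}$ and AM-GM on $\lambda_2\sqrt{a_i}$, this leads to
\begin{align*}
    a_i \leq 4\lambda_2^2 + \tfrac{4}{3}\lambda_4 + \tfrac{4}{3}\lambda_2\sqrt{A} + \tfrac{4\sqrt{2}}{3}\lambda_2\sqrt{\lambda_3}\cdot 2^{i/2}.
\end{align*}
The $2^{i/2}$-coefficient $4\sqrt{2}/3\approx 1.89$ is well below $4$, so the $B\cdot 2^{i/2}$ component closes. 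The constant piece closes provided $4\lambda_2^2 + (4/3)\lambda_4 + (4/3)\lambda_2\sqrt{A}\leq A$; bounding $\lambda_2\sqrt{A}\leq \sqrt{22}\lambda_2^2 + \lambda_2\sqrt{6\lambda_4}$ and then $\lambda_2\sqrt{6\lambda_4}\leq \lambda_2^2/2 + 3\lambda_4$ by AM-GM reduces the check to $10.92\lambda_2^2 + 5.33\lambda_4\leq 22\lambda_2^2 + 6\lambda_4$.

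The main obstacle is picking the correct functional form of the invariant. A naive linear-in-$2^i$ ansatz $A + B'\cdot 2^i\lambda_3$ would propagate a $\lambda_3$-linear term into the final bound, whereas a uniform constant bound $a_i\leq M$ independent of $i$ fails at the boundary because $a_{\kappa+1}=\lambda_1$ is too large. The square-root scaling $B\cdot 2^{i/2}$ is precisely what is required to match the $\lambda_2\sqrt{2^{i+1}\lambda_3}$ term produced at each iteration, and the constants $(22,6,4)$ are calibrated so that the contraction factor $3/4$ coming from $\lambda_2\sqrt{a_i}\leq \lambda_2^2+a_i/4$ together with the compression factor $4\sqrt{2}/3$ in the scaling $2^{(i+1)/2}\to 2^{i/2}$ leave just enough slack to reproduce $A$ and $B$ at level $i$. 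The one arithmetic point that needs care is the $\sqrt{V^2+4\lambda_2 V}\leq V+2\lambda_2$ step: using the cruder $\sqrt{V^2+4\lambda_2 V}\leq V+2\sqrt{\lambda_2 V}$ would introduce a half-integer power of $2^i$ that the invariant cannot absorb, so the multiplicative $\sqrt{1+x}\leq 1+x/2$ expansion is essential.
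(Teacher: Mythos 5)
The paper does not actually prove this lemma: it is quoted verbatim as Lemma 12 of \citet{zhang2021improved} and used as a black box (the only in-paper content is the one-line rescaling that derives Lemma \ref{lemma:zhangnew} from it). So there is no in-paper argument to compare against, and your proposal has to be judged on its own. It checks out. The invariant $a_i \leq A + B\cdot 2^{i/2}$ with $A = 22\lambda_2^2+6\lambda_4$, $B=4\lambda_2\sqrt{\lambda_3}$ is the right ansatz, and every numbered step verifies: the base case uses $2^{\kappa}\geq \lambda_1$ and $\lambda_3\geq 1$ to get $\lambda_1+2^{\kappa+1}\lambda_3\leq 3\cdot 2^{\kappa}\lambda_3$, and the resulting coefficients $\tfrac43,\tfrac43,\tfrac{4\sqrt3}{3}$ sit inside $(A,B)$; in the inductive step, $\sqrt{V^2+4\lambda_2 V}\leq V+2\lambda_2$ is immediate from $(V+2\lambda_2)^2 = V^2+4\lambda_2V+4\lambda_2^2$, the AM--GM step $\lambda_2\sqrt{a_i}\leq \lambda_2^2+a_i/4$ gives the $3/4$ contraction, the geometric coefficient $\tfrac{4\sqrt2}{3}<4$ closes the $B\cdot 2^{i/2}$ part, and the constant part closes since $4\lambda_2^2+\tfrac43\lambda_4+\tfrac43\lambda_2\sqrt{A}\leq \big(4+\tfrac{4\sqrt{22}}{3}+\tfrac23\big)\lambda_2^2+\big(\tfrac43+4\big)\lambda_4\approx 10.92\lambda_2^2+5.33\lambda_4\leq A$. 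This explicit-invariant backward induction is arguably cleaner than the usual unrolling of the recursion found in the source literature, because the $2^{i/2}$ scaling absorbs the $\lambda_2\sqrt{2^{i+1}\lambda_3}$ term exactly once per level instead of accumulating a telescoping product of square roots. The only blemish is inherited from the statement itself rather than from your argument: $\kappa=\max\{\log_2\lambda_1,1\}$ need not be an integer, so the indexing $a_{\kappa+1}=\lambda_1$ implicitly assumes $\kappa$ is rounded to an integer; your proof works verbatim with $\kappa=\max\{\lceil\log_2\lambda_1\rceil,1\}$, which is the intended reading.
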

\begin{lemma}
    [Lemma 12 in \citealt{abbasi2011improved}]\label{lemma:det} Suppose $\mathbf{A}, \mathbf{B} \in \mathbb{R}^{d \times d}$ are two positive definite matrices satisfying $\mathbf{A} \succeq \mathbf{B}$, then for any $\mathbf{x} \in \mathbb{R}^d,\|\mathbf{x}\|_{\mathbf{A}} \leq\|\mathbf{x}\|_{\mathbf{B}} \cdot \sqrt{\operatorname{det}(\mathbf{A}) / \operatorname{det}(\mathbf{B})}$.
\end{lemma}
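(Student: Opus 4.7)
The plan is to reduce the inequality to the case $\mathbf{B} = \mathbf{I}$ by a congruence transformation, and then prove the reduced form via a simple eigenvalue bound. Concretely, I will set $\mathbf{C} := \mathbf{B}^{-1/2}\mathbf{A}\mathbf{B}^{-1/2}$ and $\mathbf{y} := \mathbf{B}^{1/2}\mathbf{x}$. Since $\mathbf{A}, \mathbf{B}$ are both positive definite and $\mathbf{A}\succeq \mathbf{B}$, the matrix $\mathbf{C}$ is positive definite and satisfies $\mathbf{C}\succeq \mathbf{I}$. A direct computation gives $\|\mathbf{x}\|_{\mathbf{A}}^2 = \mathbf{y}^\top \mathbf{C}\mathbf{y}$, $\|\mathbf{x}\|_{\mathbf{B}}^2 = \|\mathbf{y}\|_2^2$, and $\det(\mathbf{A})/\det(\mathbf{B}) = \det(\mathbf{C})$, so the claim becomes
\begin{equation*}
\mathbf{y}^\top \mathbf{C}\mathbf{y} \;\leq\; \det(\mathbf{C})\cdot \|\mathbf{y}\|_2^2 \qquad \text{whenever } \mathbf{C}\succeq \mathbf{I}.
\end{equation*}

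Next, I will diagonalize $\mathbf{C}$ via its spectral decomposition $\mathbf{C} = \sum_{i=1}^d \lambda_i \mathbf{v}_i \mathbf{v}_i^\top$, where $\lambda_1,\ldots,\lambda_d \geq 1$ because $\mathbf{C}\succeq \mathbf{I}$, and $\{\mathbf{v}_i\}$ is an orthonormal basis. Writing $y_i := \mathbf{v}_i^\top \mathbf{y}$, the reduced inequality reads $\sum_{i=1}^d \lambda_i y_i^2 \leq \bigl(\prod_{j=1}^d \lambda_j\bigr)\sum_{i=1}^d y_i^2$. This follows from the coordinate-wise comparison $\lambda_i \leq \prod_{j=1}^d \lambda_j$, which is immediate because every factor $\lambda_j$ with $j\neq i$ is at least $1$. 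Summing against the nonnegative weights $y_i^2$ yields the desired bound, and undoing the substitution recovers the lemma.

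I do not anticipate a real obstacle here: the only subtle point is confirming that $\mathbf{A}\succeq \mathbf{B}$ passes through the symmetric conjugation by $\mathbf{B}^{-1/2}$ to give $\mathbf{C}\succeq \mathbf{I}$, which is standard. After that, the argument is a one-line eigenvalue manipulation, and no probabilistic or high-dimensional tool is needed.
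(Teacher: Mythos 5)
Your proof is correct; the paper itself states this lemma as an imported result from \citet{abbasi2011improved} without reproducing a proof, and your argument---conjugating to $\mathbf{C} = \mathbf{B}^{-1/2}\mathbf{A}\mathbf{B}^{-1/2} \succeq \mathbf{I}$, noting $\det(\mathbf{C}) = \det(\mathbf{A})/\det(\mathbf{B})$, and bounding $\mathbf{y}^\top \mathbf{C}\mathbf{y} \leq \lambda_{\max}(\mathbf{C})\,\|\mathbf{y}\|_2^2 \leq \bigl(\prod_{j} \lambda_j\bigr)\|\mathbf{y}\|_2^2$ via the fact that all eigenvalues are at least $1$---is exactly the standard proof given in that original source. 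All the steps you flagged as needing verification do go through: positive-definite congruence preserves the Loewner ordering, so $\mathbf{A} \succeq \mathbf{B}$ indeed yields $\mathbf{C} \succeq \mathbf{I}$.
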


\end{document}